\title{On the Distance from Calibration in Sequential Prediction}
\author{
Mingda Qiao\thanks{University of California, Berkeley. Email: \texttt{mingda.qiao@berkeley.edu}. Part of this work was done while the author was a graduate student at Stanford University.}
\and
Letian Zheng\thanks{University of California, Los Angeles. Email: \texttt{letianzh@g.ucla.edu}.}
}
\date{}
\newcommand{\1}[1]{\mathbbm{1}\left[#1\right]} 
\newcommand{\A}{\mathcal{A}} 
\newcommand{\bbN}{\mathbb{N}}
\newcommand{\Bern}{\mathsf{Bernoulli}} 
\newcommand{\Binomial}{\mathsf{Binomial}} 
\newcommand{\Cal}{\mathcal{C}}
\newcommand{\caldist}{\mathsf{CalDist}}
\newcommand{\cost}{\mathsf{Cost}}
\newcommand{\sfl}{\mathsf{L}}
\newcommand{\sfr}{\mathsf{R}}
\newcommand{\sfo}{\mathsf{out}}
\newcommand{\sfi}{\mathsf{in}}
\newcommand{\costl}{\cost^{\sfl}}
\newcommand{\costr}{\cost^{\sfr}}
\newcommand{\costlo}{\cost^{\sfl,\sfo}}
\newcommand{\costli}{\cost^{\sfl,\sfi}}
\newcommand{\costro}{\cost^{\sfr,\sfo}}
\newcommand{\costri}{\cost^{\sfr,\sfi}}
\newcommand{\D}{\mathcal{D}} 
\newcommand{\Deltal}{\Delta^{\sfl}}
\newcommand{\Deltar}{\Delta^{\sfr}}
\newcommand{\ECE}{\mathsf{ECE}}
\newcommand{\eps}{\epsilon}
\newcommand{\event}{\mathcal{E}}
\newcommand{\Ex}[2]{\operatorname*{\mathbb{E}}_{#1}\left[#2\right]} 
\newcommand{\F}{\mathcal{F}}
\newcommand{\Int}{\mathcal{I}}
\newcommand{\len}{\mathsf{len}}
\newcommand{\lowerCal}{\underline{\mathcal{C}}}
\newcommand{\lowercaldist}{\mathsf{LowerCalDist}}
\newcommand{\N}{\mathcal{N}} 
\newcommand{\poly}{\mathrm{poly}}
\newcommand{\polylog}{\mathrm{polylog}}
\newcommand{\pr}[2]{\Pr_{#1}\left[#2\right]} 
\newcommand{\R}{\mathbb{R}} 
\newcommand{\rmd}{\mathrm{d}}
\newcommand{\sgn}{\mathrm{sgn}} 
\newcommand{\smCE}{\mathsf{smCE}}
\newcommand{\SRC}{\mathcal{SR}} 
\newcommand{\T}{\mathcal{T}}
\newcommand{\Tmixed}{\T^{\mathsf{mixed}}}
\newcommand{\Tpure}{\T^{\mathsf{pure}}}
\newcommand{\unitl}{\mathsf{unit}^{\sfl}}
\newcommand{\unitr}{\mathsf{unit}^{\sfr}}
\newtheorem{theorem}{Theorem}
\newtheorem{definition}{Definition}
\newtheorem{question}{Question}
\newtheorem{lemma}{Lemma}
\newtheorem{proposition}[lemma]{Proposition}
\newtheorem{corollary}[lemma]{Corollary}
\newtheorem{remark}[lemma]{Remark}
\begin{document}

\maketitle

\begin{abstract}%
    We study a sequential binary prediction setting where the forecaster is evaluated in terms of the \emph{calibration distance}, which is defined as the $L_1$ distance between the predicted values and the set of predictions that are perfectly calibrated in hindsight. This is analogous to a calibration measure recently proposed by B\l{}asiok, Gopalan, Hu and Nakkiran (STOC 2023) for the offline setting. The calibration distance is a natural and intuitive measure of deviation from perfect calibration, and satisfies a Lipschitz continuity property which does not hold for many popular calibration measures, such as the $L_1$ calibration error and its variants.

    We prove that there is a forecasting algorithm that achieves an $O(\sqrt{T})$ calibration distance in expectation on an adversarially chosen sequence of $T$ binary outcomes. At the core of this upper bound is a structural result showing that the calibration distance is accurately approximated by the \emph{lower calibration distance}, which is a continuous relaxation of the former. We then show that an $O(\sqrt{T})$ lower calibration distance can be achieved via a simple minimax argument and a reduction to online learning on a Lipschitz class.

    On the lower bound side, an $\Omega(T^{1/3})$ calibration distance is shown to be unavoidable, even when the adversary outputs a sequence of independent random bits, and has an additional ability to \emph{early stop} (i.e., to stop producing random bits and output the same bit in the remaining steps). Interestingly, without this early stopping, the forecaster can achieve a much smaller calibration distance of $\polylog(T)$.
\end{abstract}

\section{Introduction}
We revisit the sequential binary prediction setup of Foster and Vohra~\cite{FV98}, in which a forecaster makes probabilistic predictions on a sequence of $T$ adversarially chosen binary outcomes. At each step $t \in [T]$, the adversary picks a bit $x_t \in \{0, 1\}$ and, simultaneously, the forecaster makes a prediction $p_t \in [0, 1]$ on the ``probability'' of $x_t = 1$. These values are then revealed to both players, and may factor into their subsequent actions.

The forecaster is evaluated in terms of the \emph{calibration} criterion, which is a natural and intuitive condition for the predictions to be interpretable as probabilities. The predictions are called \emph{perfectly calibrated} if, among the steps on which each $\alpha \in [0, 1]$ is predicted, exactly an $\alpha$ fraction of the bits are ones. Formally, it must hold for every $\alpha \in [0, 1]$ that $\sum_{t=1}^{T}(x_t - p_t)\cdot \1{p_t = \alpha} = 0$.

Quantitatively, the $L_1$ calibration error, also known as the Expected Calibration Error (ECE), is defined as the total violation of calibration over all $\alpha \in [0, 1]$\footnote{The summand is non-zero only if $\alpha \in \{p_1, p_2, \ldots, p_T\}$, so the summation is finite and well-defined. In the rest of the paper, we frequently abuse the notation $\sum_{\alpha \in [l, r]}x_{\alpha}$ if the $x_\alpha$ is non-zero on finitely many values of $\alpha \in [l, r]$.}:
\begin{equation}\label{eq:ECE}
    \ECE(x, p) \coloneqq \sum_{\alpha \in [0, 1]}\left|\sum_{t=1}^{T}(x_t - p_t)\cdot \1{p_t = \alpha}\right|.
\end{equation}
While this definition seems natural, the ECE can be \emph{ill-behaved} because it is discontinuous with respect to the predictions. For instance, when $x$ contains the same number of $0$s and $1$s, predicting $p_t = 1/2$ at every $t \in [T]$ achieves a zero ECE. However, if we replace each $p_t$ with $1/2 + \eps_t$, where $\eps_1, \eps_2, \ldots, \eps_T$ are arbitrarily small, non-zero, and distinct perturbations, the ECE suddenly jumps to $\Omega(T)$, as the $T$ steps count as $T$ different ``bins'' in Equation~\eqref{eq:ECE}. As noted by~\cite{BGHN23}, while this issue can be alleviated by binning the prediction values, the binning may introduce discontinuity at the boundary of each sub-interval, and there is no consensus on how the binning should be chosen in general.

In this work, we study a variant of this fundamental sequential calibration setup, in which the distance from calibration is defined as the minimum $L_1$ distance between the predicted values and the closest predictions that are perfectly calibrated with respect to the outcomes. This definition is analogous to a calibration measure recently proposed and studied by~\cite{BGHN23} for the offline setting. Formally, with respect to outcomes $x_1, x_2, \ldots, x_T$, the \emph{calibration distance} of predictions $p_1, p_2, \ldots, p_T$ is defined as
\[
    \caldist(x, p) \coloneqq \min_{q \in \Cal(x)}\|p - q\|_1,
\]
where $\Cal(x) \coloneqq \left\{p \in [0, 1]^T: \forall \alpha\in[0,1],~\sum_{t=1}^{T}(x_t - p_t)\cdot\1{p_t = \alpha} = 0\right\}$ is the set of predictions that are perfectly calibrated for $x$.\footnote{Note that every $p \in \Cal(x)$ corresponds to a partition of $[T]$, so $\Cal(x)$ is a finite set of size $T^{O(T)}$. Therefore, the minimum in the definition of $\caldist$ can be achieved.}

Equivalently, the calibration distance measures the minimum amount of modification that the forecaster has to make to its predictions, in order for them to be calibrated in hindsight. It follows immediately from the definition that the calibration distance is robust to small perturbations in the predictions, and thus avoids the discontinuity issue of the ECE. In Appendix~\ref{sec:preliminaries-proofs}, we prove that the calibration distance is always upper bounded by the ECE, so the calibration distance can also be viewed as a relaxation of the ECE.

In this work, we address the following two questions regarding this new calibration measure.
\begin{question}\label{question:computation}
    Can we efficiently compute (or at least approximate) the calibration distance on given outcomes and predictions?
\end{question}

\begin{question}\label{question:bounds}
    What is the optimal calibration distance that the forecaster can guarantee against $T$ adversarially chosen outcomes?
\end{question}

\subsection{Overview of Our Results}
\paragraph{Efficient approximation via a structural result.} We start by giving a positive answer to Question~\ref{question:computation}, up to a small additive approximation error.
\begin{theorem}\label{thm:approx}
    There is an algorithm that, given $x \in \{0, 1\}^T$ and $p \in [0, 1]^{T}$, outputs an estimate of $\caldist(x, p)$ up to an $O(\sqrt{T})$ additive error in $\poly(T)$ time.
\end{theorem}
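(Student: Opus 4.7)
The plan is to prove Theorem~\ref{thm:approx} via the structural result alluded to in the overview: the \emph{lower calibration distance} $\lowercaldist(x, p)$, a continuous LP relaxation of $\caldist(x, p)$, (i) can be computed in $\poly(T)$ time, and (ii) satisfies $|\caldist(x, p) - \lowercaldist(x, p)| \le O(\sqrt{T})$. The claimed algorithm then simply solves the relaxation and outputs $\lowercaldist(x, p)$.

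I would first formulate $\lowercaldist(x, p)$ as a linear program. Introduce variables $z_{t, \alpha} \ge 0$ for each $t \in [T]$ and each $\alpha$ in a sufficiently fine (say $1/T^{10}$) grid of $[0, 1]$, representing the fractional amount of index $t$ that is assigned to a calibrated prediction value $\alpha$. The constraints are (a) $\sum_{\alpha} z_{t, \alpha} = 1$ for every $t$, and (b) the relaxed calibration condition $\sum_{t} z_{t, \alpha} (x_t - \alpha) = 0$ for every grid point $\alpha$; the objective is $\min \sum_{t, \alpha} z_{t, \alpha} |p_t - \alpha|$. This LP has $\poly(T)$ variables and constraints, so it is solvable in $\poly(T)$ time, and the error introduced by discretising $\alpha$ is a negligible additive $O(1/T^{9})$. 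The inequality $\lowercaldist(x, p) \le \caldist(x, p)$ is then immediate, since any $q \in \Cal(x)$ yields an integral feasible solution of the same objective value.

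The main content, and the main obstacle, is the reverse inequality $\caldist(x, p) \le \lowercaldist(x, p) + O(\sqrt{T})$. My plan is to start from a \emph{basic} feasible solution $z^\star$ of the LP, whose support has at most $T + O(\sqrt{T})$ non-zero variables (one per linearly independent constraint, of which there are $T$ of type (a) and $O(\sqrt{T})$ of type (b) since the grid has $\poly(T)$ points but only $O(\sqrt{T})$ of them can be ``active'' in a BFS). Since each of the $T$ constraints of type (a) already forces at least one non-zero variable per index, at most $O(\sqrt{T})$ indices $t \in [T]$ are fractionally split across multiple bins. I would then round each such fractional $z^\star_{t, \alpha}$ into $\{0, 1\}$ by pushing each split index to one of its used bins; since only $O(\sqrt{T})$ assignments change and each changes the objective by $|p_t - \alpha| \le 1$, the LP value grows by at most $O(\sqrt{T})$. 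The resulting integral assignment may violate calibration in each bucket by some residual $r_\alpha$, but $\sum_{\alpha} |r_\alpha| = O(\sqrt{T})$ since only the $O(\sqrt{T})$ rounded variables contribute. I would finally repair each bucket $\alpha$ by shifting its predicted value to $\alpha - r_\alpha / n_\alpha$ (where $n_\alpha$ is the bucket size), which restores perfect calibration at an additional $L_1$ cost of exactly $|r_\alpha|$, so $O(\sqrt{T})$ in total.

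The trickiest bit will be ensuring that the above shifting does not cause two previously distinct buckets to collide, which would re-mix their indices and destroy the just-restored calibration. This can be handled by processing the buckets in a fixed order and applying arbitrarily small additional perturbations that preserve the per-bucket calibration while keeping all bins distinct; alternatively, one can redistribute $r_\alpha$ among a constant number of indices inside the bucket and argue that the cost remains $|r_\alpha|$. Combining the two inequalities gives $|\caldist(x, p) - \lowercaldist(x, p)| \le O(\sqrt{T})$, and since $\lowercaldist(x, p)$ is computed exactly in $\poly(T)$ time, Theorem~\ref{thm:approx} follows.
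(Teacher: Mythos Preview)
Your proposal has a genuine gap in the basic-feasible-solution counting. With a $1/T^{10}$ grid the LP has $T$ constraints of type~(a) and $\Theta(T^{10})$ constraints of type~(b), so a BFS may have up to $T + \Theta(T^{10})$ nonzero variables. Your parenthetical assertion that ``only $O(\sqrt{T})$ of them can be `active' in a BFS'' is not justified and not true: nothing in BFS theory bounds the number of used grid points by $O(\sqrt{T})$. Concretely, if the BFS uses $m$ distinct grid points, the rank bound gives at most $T+m$ nonzero variables and hence at most $m$ fractional indices $t$, but $m$ itself can be as large as the grid. The subsequent rounding step, which relies on there being only $O(\sqrt{T})$ fractional indices, therefore collapses.

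One could try to rescue the idea by coarsening the grid to only $\Theta(\sqrt{T})$ points, so that a BFS really does have at most $O(\sqrt{T})$ fractional indices. But then you must argue that the coarse LP's optimum is within $O(\sqrt{T})$ of $\lowercaldist(x,p)$, and this is not automatic: snapping each destination $\alpha$ to the nearest grid point destroys the calibration constraint $\sum_t z_{t,\alpha}(x_t - \alpha) = 0$. Making this work is exactly the content of the paper's binning step (Lemma~\ref{lemma:sparse-destination-general}), which for each length-$1/\sqrt{T}$ interval chooses a \emph{specific} consolidation point (the weighted barycenter) so that calibration is preserved. Once the support has been reduced to $O(\sqrt{T})$ points, the paper finishes via a monotone optimal-transport rounding (Lemma~\ref{lemma:sparse-rounding}) rather than BFS structure. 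So your high-level plan---compute the relaxation, then close an $O(\sqrt{T})$ gap---matches the paper, but the mechanism you propose for closing the gap does not work, and fixing it essentially reproduces the paper's argument.
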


We prove Theorem~\ref{thm:approx} by relating $\caldist(x, p)$ to the \emph{lower calibration distance}, denoted by $\lowercaldist(x, p)$, which we formally define in Section~\ref{sec:preliminaries}. Roughly speaking, the definition of $\lowercaldist(x, p)$ allows us to compare $p_1, p_2, \ldots, p_T$ to \emph{randomized predictions} $q_1, q_2, \ldots, q_T$. For the offline setting, \cite{BGHN23} introduced an analogous notion, and gave a $\poly(T, 1/\eps)$ time algorithm that approximates the lower calibration distance up to any additive error of $\eps > 0$. Therefore, Theorem~\ref{thm:approx} would immediately follow if we could show that $\lowercaldist(x, p)$ is a good approximation of $\caldist(x, p)$ for any $x$ and $p$.

A result of~\cite{BGHN23} implies that, after normalizing by a $1/T$ factor, these two measures are indeed polynomially related:
\[\frac{\lowercaldist(x, p)}{T} \le \frac{\caldist(x, p)}{T} \le 4\sqrt{\frac{\lowercaldist(x, p)}{T}}.\]
On the other hand, this quadratic gap is unavoidable in general: We show in Proposition~\ref{prop:multiplicative-approx} that even for $T = 4$, there exist $x$ and $p$ such that $\caldist(x, p) = \Omega(\eps)$ but $\lowercaldist(x, p) = O(\eps^2)$ for sufficiently small $\eps > 0$. Taking $\eps \to 0^{+}$ shows that $\lowercaldist(x, p)$ is not a good \emph{multiplicative} approximation of $\caldist(x, p)$.

Fortunately, in the example above, the \emph{additive} gap between the two calibration measures is small. Our key technical result below states that this is true in general: $\caldist(x, p)$ is always upper bounded by $\lowercaldist(x, p) + O(\sqrt{T})$. Thus, Theorem~\ref{thm:approx} indeed follows from the algorithm of~\cite{BGHN23} for approximating $\lowercaldist(x, p)$. Furthermore, when $p$ is \emph{sparse} in the sense that it contains only a few different entries, we improve the additive gap from $\sqrt{T}$ to the sparsity level, at the cost of an extra constant factor.

\begin{theorem}\label{thm:lowercaldist-vs-caldist}
    For any $x \in \{0, 1\}^T$ and $p \in [0, 1]^T$, we have:
    \begin{itemize}
        \item $\caldist(x, p) \le \lowercaldist(x, p) + O(\sqrt{T})$.
        \item $\caldist(x, p) \le O(1)\cdot \lowercaldist(x, p) + O(m)$, where $m = |\{p_1, p_2, \ldots, p_T\}|$.
    \end{itemize}
    The $O(\cdot)$ notations above hide universal constant factors that are independent of $T$, $x$, and $p$.
\end{theorem}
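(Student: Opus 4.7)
The plan is to establish both bullets as an \emph{integrality gap} result for the relaxation $\lowercaldist$: starting from an optimal witness $q^*$ for $\lowercaldist(x,p)$---namely, a randomized/fractional prediction whose $L_1$ distance from $p$ equals $\lowercaldist(x,p)$ and which satisfies the relaxed calibration condition defining $\lowerCal$---I will construct an explicit deterministic $q \in \Cal(x)$ whose $L_1$ distance to $p$ exceeds the witness's cost by only the claimed additive/multiplicative overhead.

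The construction discretizes the prediction axis into $K$ buckets partitioning $[0,1]$. For the first bullet take $K = \lceil\sqrt{T}\rceil$ uniformly spaced buckets of width $1/K$; for the second, use as buckets the at-most-$m$ singletons $\{p_1, \ldots, p_T\}$ themselves. Using the witness, assign every index $t \in [T]$ to a single bucket $B(t)$---by a draw of $q_t^*$ if it is randomized, or by a dependent rounding on the fractional mass otherwise. Within each bucket $B$, let $T_B := \{t : B(t) = B\}$, set $\alpha_B := \frac{1}{|T_B|}\sum_{t \in T_B} x_t$, and define $q_t := \alpha_B$ for every $t \in T_B$. Since all $t \in T_B$ share the common prediction $\alpha_B$ and $\sum_{t \in T_B}(x_t - \alpha_B) = 0$ by construction, this yields $q \in \Cal(x)$.

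To bound $\|p - q\|_1$, apply the triangle inequality $\|p - q\|_1 \le \|p - q^*\|_1 + \sum_t |q_t^* - \alpha_{B(t)}|$. The first term is $\lowercaldist(x,p)$. For the second, decompose $|q_t^* - \alpha_{B(t)}| \le |q_t^* - \gamma_{B(t)}| + |\gamma_{B(t)} - \alpha_{B(t)}|$, where $\gamma_B$ denotes the bucket's center. Summed over $t$, the first piece is at most the bucket width times $T$, giving $O(T/K) = O(\sqrt{T})$ in the first setting and $0$ in the second (each $p_t$ already sits at a bucket center, and the remaining intra-bucket movement of $q^*$ away from $p$ is already charged to $\lowercaldist(x,p)$ up to a triangle-inequality factor). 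The second piece $\sum_B |T_B|\cdot|\gamma_B - \alpha_B|$ is controlled by the witness's relaxed calibration: $\sum_{t \in T_B} x_t$ differs from $|T_B| \cdot \gamma_B$ only by the bucket's share of transport leakage of $q^*$ away from $\gamma_B$ plus an $O(1)$ integer rounding residual. Summed over the $K$ buckets this yields $O(K) + O(\lowercaldist(x,p))$, which is $\lowercaldist(x,p) + O(\sqrt{T})$ in the first setting and $O(1)\cdot\lowercaldist(x,p) + O(m)$ in the second.

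The main obstacle I anticipate is the integer-rounding step. Because the witness may fractionally split the mass of a single index across several buckets, a naive rounding can simultaneously destroy the fractional calibration inside a bucket and inflate $|\gamma_B - \alpha_B|$. Keeping the per-bucket residual at only $\Theta(1/|T_B|)$---so that the integer rounding contributes a total of $O(K)$ rather than $\Theta(K)$ per bucket---will require a dependent rounding that preserves each bucket's marginal mass exactly, together with a careful accounting that charges continuous-to-continuous movement to $\lowercaldist(x,p)$ and integer residuals to the $O(K)$ budget. This is especially tight for the second bullet, where the $O(m)$ additive slack can be much smaller than $\sqrt{T}$ and leaves no headroom to absorb slack; accommodating the triangle-inequality re-charging of witness transport across the $m$ fixed buckets is precisely what forces the $O(1)$ multiplicative factor on $\lowercaldist$ there.
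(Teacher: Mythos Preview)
Your outline is sound, and for the second bullet it is a genuinely different route from the paper. The paper proceeds in two modular steps: first it reduces the \emph{support} of the witness to a set $S$ of size $O(m)$ while remaining in $\lowerCal(x)$ (Lemma~\ref{lemma:sparse-destination}, via an involved case analysis on each interval $[s_i,s_{i+1}]$ that exploits the smooth-calibration lower bound), and only then rounds distributions over $S$ to a deterministic $q\in\Cal(x)$ via a 1D optimal-transport monotone rearrangement (Lemma~\ref{lemma:sparse-rounding}). Your construction instead snaps each index to the Voronoi cell of the nearest $p$-value and takes the empirical mean per cell; the nearest-point inequality $|\gamma_{B(\alpha)}-\alpha|\le |p_t-\alpha|$ is exactly what lets you charge both the ``first piece'' and the fractional part of the ``second piece'' back to $\lowercaldist$ up to a constant factor, sidestepping the paper's case analysis entirely. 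For the first bullet the two arguments essentially coincide: uniform $\sqrt{T}$-binning followed by rounding; the paper just separates these into Lemma~\ref{lemma:sparse-destination-general} and Lemma~\ref{lemma:sparse-rounding}.

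The one place your write-up is genuinely underspecified is the rounding, and your own decomposition through $q_t^*$ becomes ill-posed once you acknowledge the witness is fractional: ``a draw of $q_t^*$'' (independent sampling) does \emph{not} control per-bucket counts, so the $O(1)$ residual per bucket is unjustified as written. What you need to state explicitly is a bipartite dependent rounding, done separately for the indices with $x_t=0$ and with $x_t=1$: by total unimodularity of the assignment polytope, there is an integral assignment in which each bucket's $0$-count and $1$-count are within $\pm 1$ of their fractional values \emph{and} the linear transport cost $\sum_t |p_t-\gamma_{B(t)}|$ is at most its fractional value (which is $\le 2\,\lowercaldist(x,p)$ by the nearest-point inequality). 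With that in hand, your bound $\|p-q\|_1 \le O(1)\cdot\lowercaldist(x,p)+O(K)$ goes through for both choices of $K$. The paper's Lemma~\ref{lemma:sparse-rounding} achieves the analogous guarantee by a different mechanism (monotone rearrangement forces all but $O(|S|)$ of the $\D_t$ to be degenerate), so the rounding step is non-trivial either way; you should spell yours out rather than leave it as an anticipated obstacle.
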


\paragraph{Upper bound via a minimax argument.} Our next result addresses Question~\ref{question:bounds} from the upper bound side.
\begin{theorem}\label{thm:upper}
    There is forecasting algorithm that, against any adversary, achieves an $O(\sqrt{T})$ calibration distance in expectation.
\end{theorem}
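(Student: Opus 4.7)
The plan has two stages: first, reduce the problem to one about the \emph{lower} calibration distance via Theorem~\ref{thm:lowercaldist-vs-caldist}; second, control the lower calibration distance via a minimax/duality argument that reduces to online learning on a Lipschitz class. The first reduction is immediate: Theorem~\ref{thm:lowercaldist-vs-caldist} gives $\caldist(x, p) \le \lowercaldist(x, p) + O(\sqrt{T})$ pointwise, so it suffices to construct a (possibly randomized) forecaster with $\Ex{}{\lowercaldist(x, p)} = O(\sqrt{T})$.

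In analogy with the smooth calibration error of~\cite{BGHN23}, I anticipate $\lowercaldist(x, p)$ admits a Kantorovich--Rubinstein-style dual of the form $\sup_{f} \sum_{t=1}^T f(p_t)(x_t - p_t)$, with the supremum taken over $1$-Lipschitz functions $f:[0,1]\to[-1,1]$. Granting this, the task becomes a sequential smooth-calibration problem: produce predictions $p_1,\ldots,p_T$ such that $\sum_t f(p_t)(x_t-p_t) = O(\sqrt{T})$ in expectation, uniformly over all $1$-Lipschitz $f$. Verifying this dual identity (via Kantorovich--Rubinstein duality applied to the signed measure $\sum_t (x_t - p_t)\delta_{p_t}$ on $[0,1]$) is a routine preliminary step.

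The per-round game is bilinear in the forecaster's distribution over $p_t$ and the adversary's distribution over $x_t$, so after discretizing $f$ to render the dual space compact, a standard sequential minimax swap (Sion's theorem applied round by round) reduces the problem to the ``easy direction'': for every fixed adaptive adversary strategy, exhibit a forecaster achieving the bound. The natural candidate is the Bayes forecaster $p_t \coloneqq \Ex{}{x_t \mid \F_{t-1}}$, where $\F_{t-1}$ encodes the history. For each fixed Lipschitz $f$, the partial sums $M_t \coloneqq \sum_{s \le t} f(p_s)(x_s - p_s)$ then form a martingale with $[-1,1]$-bounded differences, and Azuma--Hoeffding yields $\Ex{}{|M_T|} = O(\sqrt{T})$ for each individual $f$. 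Upgrading this per-$f$ bound to a uniform-in-$f$ bound of $O(\sqrt{T})$ is exactly an online-learning regret statement for the $1$-Lipschitz class, which can be obtained by chaining (the Dudley entropy integral for $1$-Lipschitz functions on $[0,1]$ is $O(\sqrt{T})$) or by running exponential weights over a $1/\sqrt{T}$-discretization of predictions and functions.

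The hardest step is the simultaneous handling of the minimax swap and the uniform control over the infinite $1$-Lipschitz class: a naive $\epsilon$-net of $1$-Lipschitz functions on $[0,1]$ has size $\exp(\Omega(1/\epsilon))$, so a plain union bound over Azuma at $\epsilon = 1/\sqrt{T}$ loses a $T^{1/4}$ factor relative to the target. I expect to circumvent this either by (i) invoking chaining to attain the correct $\sqrt{T}$ scaling of the supremum of the martingale directly, or (ii) running a no-regret algorithm against the Lipschitz class outright, so that the minimax swap and the uniformity over $f$ are both absorbed into a single regret guarantee. A lesser, but still nontrivial, technical point is ensuring the sequential minimax theorem applies cleanly despite the infinite-dimensional dual space, which I plan to address by truncating to the finite dual net before swapping and then passing to the limit.
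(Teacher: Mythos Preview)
Your proposal is correct and follows essentially the same route as the paper: reduce $\caldist$ to the smooth calibration error via Theorem~\ref{thm:lowercaldist-vs-caldist} and the duality $\lowercaldist \asymp \smCE$ (which the paper cites as Lemma~\ref{lemma:smCE-vs-lowercaldist} from~\cite{BGHN23} rather than re-deriving), discretize predictions to a finite grid so that the minimax theorem applies, play the conditional-mean forecaster against a fixed adversary strategy, and then control $\sup_{f\in\F}\sum_t f(p_t)(x_t-p_t)$ uniformly over the $1$-Lipschitz class via a chaining argument. The paper packages the last step as a bound on the \emph{sequential Rademacher complexity} of $\F$ using the Rakhlin--Sridharan--Tewari framework~\cite{RST15a,RST15b} (their Dudley-type integral over covering numbers gives $\SRC_T(\F)=O(\sqrt{T})$), which is precisely the chaining you describe in option~(i).

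One small caution: your alternative route of ``exponential weights over a $1/\sqrt{T}$-discretization'' would not by itself give $O(\sqrt{T})$, since an $\epsilon$-net of $1$-Lipschitz functions has cardinality $\exp(\Theta(1/\epsilon))$ and a single-scale expert bound yields $\sqrt{T\log N}=\sqrt{T\cdot\sqrt{T}}=T^{3/4}$; you genuinely need the multi-scale chaining (or the sequential Rademacher machinery) to reach $\sqrt{T}$, as you correctly anticipate in option~(i).
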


In light of Theorem~\ref{thm:lowercaldist-vs-caldist}, it suffices to give a forecaster with an expected lower calibration error of $O(\sqrt{T})$. \cite{BGHN23} showed that the lower calibration distance and the \emph{smooth calibration error} (which we define in Section~\ref{sec:preliminaries}) differ by a constant factor. Therefore, at the core of our proof of Theorem~\ref{thm:upper} is an $O(\sqrt{T})$ upper bound on the smooth calibration error, which is proved via a minimax argument similar to the proof of Hart for upper bounding the ECE~\cite{FV98,Hart22}. We note that Kakade and Foster~\cite{KF08} gave an algorithm with a sub-linear smooth calibration error, though directly following their proof gives a looser upper bound of $O(T^{2/3})$.

\paragraph{Impossibility of impossibility results from random bits.}
It might appear ``obvious'' that the $O(\sqrt{T})$ bound in Theorem~\ref{thm:upper} is tight: Suppose that the adversary plays a sequence of $T$ independent random bits. Intuitively, the forecaster's best strategy is to predict $p_t = 1/2$ at every step $t$. Then, the calibration distance can be shown to be $\Omega(\sqrt{T})$ in expectation.\footnote{This follows from $\caldist(x, p) \ge \lowercaldist(x, p) \ge \frac{1}{2}\smCE(x, p) \ge \frac{1}{2}\left|\sum_{t=1}^{T}(x_t - p_t)\right|$ and that the last term is $\Omega(\sqrt{T})$ in expectation; see Section~\ref{sec:preliminaries} for justification of the first three steps.}

Surprisingly, this argument turns out to be incorrect---in fact, ``exponentially'' incorrect!\footnote{Nevertheless, this argument indeed shows that the proof strategy of Theorem~\ref{thm:upper} at best gives an $O(\sqrt{T})$ bound.}

\begin{proposition}\label{prop:random-bits-polylog}
    When the adversary promises to play $T$ independent random bits, there is a forecasting algorithm that achieves an $O(\log^{3/2}T)$ calibration distance in expectation.
\end{proposition}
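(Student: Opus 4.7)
The plan is to design an adaptive forecaster and analyze its expected calibration distance through the structural reduction of Theorem~\ref{thm:lowercaldist-vs-caldist}. Observe first that the naive constant strategy $p_t \equiv 1/2$ already fails: a direct linear-programming argument shows that $\caldist(x,p) = \tfrac{1}{2}\bigl|\sum_{t=1}^{T}(x_t-1/2)\bigr|$ for this choice (the optimal calibrated $q$ moves exactly $|D|$ of the ``excess'' ones into the bin at $1$ or $0$), and this quantity has expectation $\Theta(\sqrt{T})$ over i.i.d.\ fair bits. Hence the forecaster must exploit past observations in a non-trivial way.

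The key design choice is to require every prediction to lie in a fixed grid $Q \subset [0,1]$ of only $m = O(\log T)$ values. With such a sparse support, the second part of Theorem~\ref{thm:lowercaldist-vs-caldist} gives
\[
  \caldist(x,p) \;\le\; O(1)\cdot\lowercaldist(x,p) \;+\; O(\log T),
\]
so it suffices to show $\mathbb{E}[\lowercaldist(x,p)] = O(\log^{3/2}T)$. By the equivalence of $\lowercaldist$ with the smooth calibration error $\smCE$ up to constants (discussed in Section~\ref{sec:preliminaries}), it in turn suffices to bound $\mathbb{E}[\smCE(x,p)]$.

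The adaptive rule I would propose maintains, for each $\alpha \in Q$, the running per-bin deviation $D_\alpha(t) := \sum_{s \le t,\, p_s = \alpha}(x_s - \alpha)$. Its default action is to predict $p_t = 1/2$; whenever $|D_{1/2}(t)|$ threatens to exceed a barrier of size $\Theta(\sqrt{\log T})$, the forecaster temporarily switches to a nearby grid value (chosen to pull the walk back toward zero in expectation) and returns to $1/2$ once the imbalance has been absorbed. Because each $x_t$ is Bernoulli$(1/2)$ independently of the history, every process $D_\alpha(\cdot)$ is a bounded-increment martingale on its active time set. A stopping-time argument combined with Azuma--Hoeffding and a union bound over $\poly(T)$ stopping times would then bound $\sup_t|D_\alpha(t)|$ by $O(\sqrt{\log T})$ with high probability for each $\alpha \in Q$, so every bin contributes $O(\sqrt{\log T})$ to $\smCE$, and summing over the $m = O(\log T)$ bins yields the target $O(\log^{3/2}T)$.

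The main obstacle will be to verify that the switching rule really confines \emph{every} bin's walk to an $O(\sqrt{\log T})$ band, rather than merely shifting the imbalance from the default bin $1/2$ into an auxiliary bin where it could grow unchecked. I expect the key technical device to be a potential-function argument tracking the $|Q|$ walks jointly and charging each correction step to a concrete cancellation in the walk that triggered it; alternatively, one could design $Q$ and the switching rule so that any non-default bin is visited only $\polylog(T)$ times, making a direct martingale bracketing argument feasible.
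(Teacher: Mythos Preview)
Your high-level reduction is exactly the paper's: predict only $O(\log T)$ distinct values, bound $\Ex{}{\smCE(x,p)}$ by $O(\log^{3/2}T)$, and invoke the second part of Theorem~\ref{thm:lowercaldist-vs-caldist}. The gap is in the algorithm. Your switching rule cannot keep each $|D_\alpha|$ at $O(\sqrt{\log T})$: while the forecaster predicts $\alpha\neq 1/2$, the process $D_{1/2}$ is \emph{frozen}, not ``pulled back toward zero''---the only way $D_{1/2}$ moves is by predicting $1/2$, and each such increment is an unbiased $\pm\tfrac12$ independent of the past. So switching merely pauses $D_{1/2}$ at the barrier and cannot reduce it. Meanwhile, for every auxiliary bin $\alpha\neq 1/2$ the increments $x_t-\alpha$ have nonzero mean $\tfrac12-\alpha$, so $D_\alpha$ drifts by $n_\alpha(\tfrac12-\alpha)$; Azuma--Hoeffding controls only the fluctuation about this drift, not $|D_\alpha|$ itself. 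The imbalance you push out of the default bin therefore reappears as drift in the auxiliary bins, and neither a joint potential nor restricting each auxiliary bin to $\polylog(T)$ visits escapes this accounting (the latter forces $n_{1/2}\ge T-\polylog(T)$, whence $|D_{1/2}|$ is $\Theta(\sqrt{T})$ with high probability). You are in effect aiming for $\ECE=O(\log^{3/2}T)$, which is strictly stronger than what is needed.

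The paper's Algorithm~\ref{algo:random-bits-log} does not attempt to bound the individual $|\Delta_\alpha|$. It exploits the Lipschitz structure of $\smCE$ via the inequality $\smCE(x,p)\le \bigl|\sum_\alpha\Delta_\alpha\bigr|+\sum_\alpha|\alpha-1/2|\cdot|\Delta_\alpha|$ (Lemma~\ref{lemma:smCE-bound}). The horizon is split into $O(\log T)$ rounds of geometrically shrinking length $T^{(r)}$; in round $r$ the forecaster predicts $1/2$ for $T^{(r)}/2$ steps (accruing bias $\Delta^{(r)}$ of typical size $\sqrt{T^{(r)}}$), then predicts $1/2+\eps^{(r)}$ with $|\eps^{(r)}|\approx\sqrt{(\ln T^{(r)})/T^{(r)}}$ until the round's total bias returns to $[-1,1]$. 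At that point the two bins $1/2$ and $1/2+\eps^{(r)}$ carry \emph{large} opposite deviations of order $\sqrt{T^{(r)}}$, but because they are only $|\eps^{(r)}|$ apart, any $1$-Lipschitz test function extracts at most $|\eps^{(r)}|\cdot|\Delta^{(r)}|=O(\sqrt{\log T})$ from the round. Subadditivity of $\smCE$ over rounds (Lemma~\ref{lemma:smCE-decomp}) finishes the argument. The idea you are missing is that \emph{nearby} bins with large opposite biases nearly cancel in $\smCE$; the algorithm should engineer such cancellation rather than try to keep each bin's bias small.
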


Our proof of Proposition~\ref{prop:random-bits-polylog} has two steps: First, we give a strategy that achieves a small smooth calibration error. This is done by first predicting $1/2$ and then, based on the realization of the random bits, predicts a slightly biased value in the hope of de-biasing the previous mistakes. To translate this to an upper bound on the calibration distance, the first bound in Theorem~\ref{thm:lowercaldist-vs-caldist} is insufficient, since the $\sqrt{T}$ gap would dominate the $\polylog(T)$ error. Fortunately, in the first step, we always predict at most $O(\log T)$ different values, so the second bound in Theorem~\ref{thm:lowercaldist-vs-caldist} can be applied instead. 

\paragraph{Lower bounds from random bits.} Despite the surprising fact above, we still manage to prove a $\poly(T)$ lower bound.
\begin{theorem}\label{thm:lower}
    There is a strategy for the adversary such that any forecasting algorithm must incur an $\Omega(T^{1/3})$ calibration distance in expectation.
\end{theorem}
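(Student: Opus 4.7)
By Yao's minimax principle, the plan is to exhibit a randomized adversary strategy against which every deterministic forecaster incurs expected calibration distance $\Omega(T^{1/3})$. The candidate adversary I have in mind samples a stop time $t^* \in \{T - N, T - N + 1, \ldots, T\}$ with $N = \lceil T^{1/3}\rceil$ and an independent constant bit $b^* \in \{0, 1\}$, each uniformly at random, and then plays independent $\Bern(1/2)$ bits at each $t \le t^*$ followed by the constant $b^*$ at each $t > t^*$. The idea is that the adversary presents $\Theta(T^{1/3})$ essentially different ``worlds'' and the forecaster must hedge between them with only limited information.

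For the lower bound itself, I would use the chain $\caldist(x, p) \ge \lowercaldist(x, p) \ge \frac{1}{2}\smCE(x, p)$ stated in the footnote to Proposition~\ref{prop:random-bits-polylog}, reducing the task to showing $\mathbb{E}[\smCE(x, p)] = \Omega(T^{1/3})$. A direct calculation shows that $\mathbb{E}\bigl[\sum_t x_t \,\big|\, t^*, b^*\bigr] = t^*/2 + b^*(T - t^*)$, so the variance of this conditional expectation across the adversary's randomization of $(t^*, b^*)$ is $\Theta(N^2) = \Theta(T^{2/3})$. Meanwhile, the forecaster's predictions over the first $T - N$ steps are pinned down by the shared random prefix and are identical across worlds, so all flexibility in matching $\sum_t x_t$ must come from the last $N$ predictions, which depend on at most $N$ post-prefix observations.

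The intuition at the heart of the argument is a detection-versus-hedging tradeoff. Any forecaster that commits aggressively to $b^*$ after seeing only $o(\log T)$ post-stop bits would incur large false-positive costs in the all-random sub-scenarios, because runs of the same bit of length $O(\log T)$ occur naturally; hence for most $(t^*, b^*)$ pairs the forecaster's predictions lie in an ambiguous zone away from $\{0, 1\}$ for at least $\Omega(\log T)$ post-stop steps. I plan to exhibit a $1$-Lipschitz bump-like test function $f \colon [0, 1] \to [-1, 1]$ supported on this ambiguous zone, for which $\mathbb{E}\bigl[\sum_t f(p_t)(x_t - p_t)\bigr] = \Omega(T^{1/3})$ by aggregating the per-scenario biases and using the randomization of $b^*$ to prevent cancellation across worlds.

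The main obstacle I anticipate is handling the correlation of an adaptive forecaster's predictions across scenarios: because the random-bit prefix is shared, per-scenario contributions to $\smCE$ are not independent and cannot be summed naively, and the forecaster could in principle concentrate its ``budget'' of adaptation in just a few scenarios. To address this, I plan to couple all $(t^*, b^*)$ worlds onto a single random-bit stream, bound the forecaster's inter-scenario sensitivity by the $O(T^{1/3})$ differing bits between adjacent worlds (telescoping over $t^*$), and close the argument with an anti-concentration estimate on the residual $\sum_t f(p_t)(x_t - p_t)$ so that the $\Theta(T^{2/3})$-variance signal translates into an $\Omega(T^{1/3})$ expected absolute value.
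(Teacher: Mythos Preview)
Your proposed adversary is too weak to force $\Omega(T^{1/3})$; in fact a forecaster achieves $O(T^{1/6})$ against it. Run Algorithm~\ref{algo:random-bits-log} with horizon $T-N$ on the first $T-N$ steps (all of which are genuine random bits, since $t^*\ge T-N$), getting $\smCE$ of $O(\log^{3/2}T)$ on that prefix. On the last $N$ steps, predict $1/2$ until you observe $L=\lceil 2\log_2 T\rceil$ identical bits in a row, then predict that bit. In the random portion of the window (length uniform on $[0,N]$) the bias at $1/2$ is a random walk of expected magnitude $O(\sqrt N)=O(T^{1/6})$; the detection delay after $t^*$ adds $O(\log T)$; post-detection the bias is zero; and false positives have probability $O(N/T^2)$ so contribute $o(1)$. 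By subadditivity (Lemma~\ref{lemma:smCE-decomp}) and Lemma~\ref{lemma:smCE-bound}, $\Ex{}{\smCE(x,p)}=O(T^{1/6})$, and since only $O(\log T)$ distinct values are predicted, Theorem~\ref{thm:lowercaldist-vs-caldist} gives $\Ex{}{\caldist(x,p)}=O(T^{1/6})$.

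The underlying issue is that your adversary is \emph{oblivious} and hides all its information in a window of size $T^{1/3}$: the forecaster sees the outcomes online, detects the switch in $O(\log T)$ steps, and thereafter pays nothing. Your variance-of-conditional-mean computation is correct but does not lower-bound $\smCE$, because $\sum_t p_t$ is not committed in advance; the forecaster adapts. Your detection-delay intuition is also correct, but the cost it identifies is $O(\log T)$, not $\Omega(T^{1/3})$.

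The paper's adversary is fundamentally different: it is \emph{adaptive}, playing random bits until $|S_t|=\left|\sum_{t'\le t}(x_{t'}-p_{t'})\right|\ge cT^{1/3}$ and then locking in the sign of $S_t$. The key lemma (Lemma~\ref{lemma:max-smCE-lower-bound}) shows that against pure random bits either $\max_t|S_t|\ge cT^{1/3}$ with constant probability or $\Ex{}{\smCE}\ge cT^{1/3}$; its proof partitions the horizon into $T^{1/3}$ epochs of length $T^{2/3}$, uses anti-concentration to make each epoch's outcome-sum fluctuate by $\Omega(T^{1/3})$, and then argues via Cauchy--Schwarz that if the forecaster keeps $|S_t|$ small in every epoch it must pay $\sum_t(p_t-1/2)^2=\Omega(T^{1/3})$, which lower-bounds $\smCE$ through the witness $f(v)=1/2-v$. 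The adaptive stopping is what converts ``$|S_t|$ was large at some time'' into ``$\smCE$ is large at time $T$''; an oblivious random stop cannot do this.
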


Theorem~\ref{thm:lower} is proved by providing a minimal additional ability to the adversary that produces random bits. The new adversary will generate random bits until the calibration distance hits $\Omega(T^{1/3})$ at some point, and then either keep playing zeros or keep playing ones, depending on which bit could ensure that the calibration distance is still large in the end.

\subsection{Related Work}\label{sec:related-work}
Calibration is a natural criterion for evaluating probabilistic forecasts. The idea of calibration can be at least traced back to Brier~\cite{Brier50}. A formal definition of calibration appeared in the work of Dawid~\cite{Dawid82,Dawid85}. There are huge bodies of recent work on the calibration of neural networks~\cite{GPSW17} and the use of calibration (and its extension such as multi-calibration) as a measure of algorithmic fairness~\cite{KMR17,HKRR18}. In the following, we focus the discussion mainly on calibration in sequential setups, which is the closest to this paper.

\paragraph{Distance from calibration.} In the context of offline probabilistic prediction, \cite{BGHN23} noted that while ``the notion of perfect calibration is well-understood'', ``there is no consensus'' on how the distance from perfect calibration should be quantified. They proposed to use the following as the ground truth for the distance of a predictor from calibration: the minimum $\ell_1$ distance between the predictor and any predictor that is perfectly calibrated with respect to the underlying distribution. The authors then examined various calibration measures, and identified which of them are \emph{consistent} in the sense that of being polynomially related to this ground truth.

In the sequential setup, however, even the notion of ``perfect calibration'' might be at odds with what intuitively count as the ``right'' predictions. It is easy to construct examples in which the adversary generates the outcomes randomly from a known distribution, yet the only way to achieve perfect calibration is through ``lying'' on some predictions, i.e., predicting a value that is far from the true (conditional) probability of the next outcome (see, e.g., \cite[Example 2]{QV21} for a concrete instance).

On a technical level, the notion of consistency in the study of~\cite{BGHN23} might be loose by a quadratic factor (which is also shown to be unavoidable in their formulation). In contrast, the goal of this work is to pin down the optimal rate of the calibration distance, so this quadratic gap poses a challenge. Roughly speaking, the quadratic gap that is unavoidable in the setup of~\cite{BGHN23} is due to the uncertainty in the granularity of the marginal distribution. This gap is partially avoided in the sequential setup, since the ``marginal'' is always the uniform distribution over $[T]$.

\paragraph{Sequential calibration and variants.} Foster and Vohra~\cite{FV98} gave the first algorithm that achieves a vanishing (squared $L_2$) calibration error as $T \to +\infty$ on a sequence of $T$ adversarially generated outcomes. Alternative proofs were subsequently given by \cite{FL99,Foster99}. In terms of the error rates, the optimal ECE (defined in Equation~\eqref{eq:ECE}) is known to be between $O(T^{2/3})$ (implict in~\cite{FV98}; see~\cite{Hart22} for a formal exposition) and $\Omega(T^{0.528})$~\cite{QV21}.

\cite{FRST11}~studied a strengthened notion of calibration that requires the predictions to be calibrated even when restricted to certain subsets of the time horizon (also called ``checking rules''). They derived convergence bounds that depend on different complexity measures of the family of checking rules.

Relaxed versions of the ECE, including weak calibration~\cite{KF08}, smooth calibration~\cite{FH18} and continuous calibration~\cite{FH21},  have also been studied. These alternative calibration notions also resolve the discontinuity issue of ECE, and were shown to be achievable by deterministic forecasting algorithms. In contrast, any algorithm with a sub-linear ECE must be randomized. \cite{GR22} studied a ``power of two choice'' variant in which the forecaster is allowed to predict two different (yet nearby) values at each step, and use the one closer to the outcome after the outcome is revealed.

\paragraph{Calibration-accuracy trade-off.} Another variant of the problem is when the forecaster is given a hint or expert advice before predicting at each step. The goal is to re-calibrate the expert's predictions without increasing the cumulative loss in the predictions. \cite{KE17,OKS23} gave trade-offs between the ECE incurred by the forecaster and the \emph{regret}, defined as the excess loss compared to always following the hints.

\paragraph{Online multi-calibration.} Multi-calibration,  proposed by~\cite{HKRR18}, is a stronger notion of calibration in the context of fairness of machine learning models in offline setups. This notion requires the predictions to be calibrated on a family of pre-specified subsets of the feature space as well. A recent line of work~\cite{GJNPR22,LNPR22,BGJNRR22,GJRR24} gave algorithms that achieve approximate multi-calibration in the online setup, in which the features and labels are sequentially and adversarially chosen.

\paragraph{Calibrated predictions for decision-making.} Recent work of~\cite{KLST23} and \cite{NRRX23} studied models in which the predictions are used for downstream decision-making. \cite{KLST23} defined ``U-Calibration'', which is shown to be equivalent to the sub-linear regret guarantee for \emph{all} decision makers. \cite{NRRX23} gave algorithms that are calibrated even when evaluated in conjunction with the decisions, which might, in turn, depend on the predictions.

\subsection{Organization of the Paper}
In Section~\ref{sec:preliminaries}, we formally introduce two calibration measures in the literature, the \emph{lower calibration distance} and the \emph{smooth calibration error}, both of which are closely related to the calibration distance, and will play key roles in our proofs.
In Section~\ref{sec:proof-overview}, we sketch the proofs of our main results. We suggest that the readers read this section before delving into the formal proofs, as most of the proofs are based on simple ideas and intuition that need slightly heavier notations to be formalized. We 
discuss the suitability of the calibration distance as a calibration measure, and highlight a few open problems
in Section~\ref{sec:discussion}. The formal proofs are given in Sections \ref{sec:approximation}~through~\ref{sec:lower}.

\section{Preliminaries}\label{sec:preliminaries}
The binary outcomes and the predictions are denoted by $x_1, x_2, \ldots, x_T$ and $p_1, p_2, \ldots, p_T$. We use the shorthands $x_{l:r}$ and $p_{l:r}$ for subsequences $x_l, x_{l+1},\ldots,x_r$ and $p_l, p_{l+1}, \ldots, p_r$. For $\alpha \in [0, 1]$ and $t \in \{0, 1, 2, \ldots, T\}$, $\Delta_{\alpha}(t) \coloneqq \sum_{t'=1}^{t}(x_{t'} - p_{t'})\cdot\1{p_{t'} = \alpha}$ is the total bias that the forecaster incurs on prediction value $\alpha$ up to time $t$. We drop the argument $t$ when it is clear from the context.

\paragraph{Lower calibration distance.}
Below is a formal definition of the lower calibration distance, which is equivalent to the ``lower distance from calibration'' defined by~\cite{BGHN23} up to a normalization factor of $T$.

\begin{definition}[Lower Calibration Distance]\label{def:lower-cal-dist}
The lower calibration distance of predictions $p \in [0, 1]^T$ with respect to outcomes $x \in \{0, 1\}^T$ is
\[
    \lowercaldist(x, p) \coloneqq \inf_{\D \in \lowerCal(x)}\sum_{t=1}^{T}\Ex{q_t \sim \D_t}{|p_t - q_t|},
\]
where $\lowerCal(x)$ is the family of $T$-tuples of distributions $\D = (\D_1, \D_2, \ldots, \D_T)$ such that: (1) The support of each $\D_t$ is finite and contained in $[0, 1]$; (2) $\D$ is perfectly calibrated with respect to $x$ in the sense that $\sum_{t=1}^{T}(x_t - \alpha)\cdot\D_t(\alpha) = 0$ holds for every $\alpha \in [0, 1]$.
\end{definition}
We will use the shorthand $\|p - \D\|_1 \coloneqq \sum_{t=1}^{T}\Ex{q_t \sim \D_t}{|p_t - q_t|}$ for $p \in [0, 1]^T$ and distributions $\D_1, \ldots, \D_T$ over $[0, 1]$. The definition above can then be simplified to $\inf_{\D \in \lowerCal(x)}\|p - \D\|_1$.

\begin{remark}\label{remark:optimal-transport}
    Definition~\ref{def:lower-cal-dist} becomes more natural when viewed through the lens of optimal transport. Imagine that, for each $t \in [T]$, there is one unit of bit $x_t$ located at point $p_t$. Then, each distribution $\D_t$ specifies a way of splitting and transporting the mass to (finitely many) different locations on $[0, 1]$. The distributions $\D_1, \ldots, \D_T$ are in the family $\lowerCal(x)$ if and only if after all the transportations are done, at every location $\alpha \in [0, 1]$, the fraction of ones is exactly $\alpha$. (Indeed, the constraint $\sum_{t=1}^{T}(x_t - \alpha)\cdot\D_t(\alpha) = 0$ is equivalent to $\frac{\sum_{t=1}^{T}x_t\cdot\D_t(\alpha)}{\sum_{t=1}^{T}\D_t(\alpha)} = \alpha$.) The lower calibration distance is exactly the minimum cost of the transportation subject to the calibration constraint on the resulting configuration, when the cost of moving one unit of mass from $p_t$ to $q_t$ is given by $|p_t - q_t|$.

    In comparison, the definition of the calibration distance introduces an additional constraint: each unit of mass cannot be transported to multiple locations, i.e., each $\D_t$ must be a degenerate distribution. This immediately gives the inequality $\lowercaldist(x, p) \le \caldist(x, p)$.
\end{remark}

\paragraph{Smooth calibration error.} Another related calibration measure is the \emph{smooth calibration error} proposed by~\cite{KF08}:
\[
    \smCE(x, p) \coloneqq 
    \sup_{f \in \F}\sum_{t=1}^{T}f(p_t)(x_t - p_t)
    =   \sup_{f \in \F}\sum_{\alpha \in [0, 1]}f(\alpha)\cdot\Delta_{\alpha}(T),
\]
where $\F$ is the family of $1$-Lipschitz functions from $[0, 1]$ to $[-1, 1]$.

It was shown by~\cite{BGHN23} that the smooth calibration error and the lower calibration distance are at most a constant factor away.
\begin{lemma}[Theorem~7.3 of \cite{BGHN23}]\label{lemma:smCE-vs-lowercaldist}
    For any $x \in \{0, 1\}^T$ and $p \in [0, 1]^T$,
    \[\frac{1}{2}\smCE(x, p) \le \lowercaldist(x, p) \le 2\smCE(x, p).\]
\end{lemma}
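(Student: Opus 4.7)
I would prove the two inequalities separately.

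For the easy direction $\smCE(x, p) \le 2\,\lowercaldist(x, p)$: fix any $\D = (\D_1, \ldots, \D_T) \in \lowerCal(x)$ and any $f \in \F$. Perfect calibration of $\D$ yields
\[
\sum_{t=1}^{T} \Ex{q_t \sim \D_t}{f(q_t)(x_t - q_t)} = \sum_{\alpha \in [0,1]} f(\alpha) \sum_{t=1}^{T} \D_t(\alpha)(x_t - \alpha) = 0.
\]
Subtracting this vanishing quantity from $\sum_t f(p_t)(x_t - p_t)$ and decomposing
\[
f(p_t)(x_t - p_t) - f(q_t)(x_t - q_t) = (f(p_t) - f(q_t))(x_t - p_t) + f(q_t)(q_t - p_t),
\]
each piece is bounded by $|p_t - q_t|$ in absolute value using $|x_t - p_t|, |f(q_t)| \le 1$ together with the $1$-Lipschitzness of $f$. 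Summing gives $\sum_t f(p_t)(x_t - p_t) \le 2\,\|p - \D\|_1$, and taking $\sup_{f \in \F}$ and $\inf_{\D \in \lowerCal(x)}$ yields the bound.

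For the reverse inequality $\lowercaldist(x, p) \le 2\,\smCE(x, p)$, my plan is to go through linear programming duality. The infimum defining $\lowercaldist$ is attained by distributions $\D_t$ with common finite support, e.g., $\{0, 1, p_1, \ldots, p_T\}$, since the cost $|p_t - \alpha|$ is piecewise linear and mass sent to an intermediate point can be redistributed to adjacent support points without changing the cost or breaking calibration. With finite support, $\lowercaldist(x, p)$ equals the value of a finite LP whose dual is
\[
\max \sum_{t=1}^{T} \lambda_t \quad \text{s.t.} \quad \lambda_t + \mu(\alpha)(x_t - \alpha) \le |p_t - \alpha| \quad \forall\, (t, \alpha),
\]
with $\lambda_t \in \R$ and $\mu(\alpha) \in \R$ unrestricted, dual to the marginal and calibration constraints respectively. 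Setting $\alpha = p_t$ gives $\lambda_t \le -\mu(p_t)(x_t - p_t)$, so by strong duality the inequality reduces to bounding $-\sum_t \mu(p_t)(x_t - p_t)$ by $2\,\smCE(x, p)$ at an optimal dual $\mu$.

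The principal obstacle is that $\mu$ need not be $1$-Lipschitz or bounded in $[-1, 1]$, so it is not directly an admissible test function in $\F$. My plan is to treat the indices $\{t : x_t = 0\}$ and $\{t : x_t = 1\}$ separately: for $x_t = 0$, the dual inequality rearranges to $\lambda_t \le |p_t - \alpha| + \mu(\alpha)\alpha$ for all $\alpha$, so $\lambda_t \le \tilde h_0(p_t)$ where $\tilde h_0(\beta) := \inf_\alpha [|\beta - \alpha| + \mu(\alpha)\alpha]$ is automatically $1$-Lipschitz in $\beta$; symmetrically, for $x_t = 1$ one obtains $\lambda_t \le \tilde h_1(p_t)$ with a $1$-Lipschitz $\tilde h_1$. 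The most delicate step is then to convert these two one-sided $1$-Lipschitz bounds into a single admissible test function $f \in \F$ so that $\sum_{x_t = 0} \tilde h_0(p_t) + \sum_{x_t = 1} \tilde h_1(p_t)$ is controlled by $\sum_t f(p_t)(x_t - p_t) \le \smCE(x, p)$; I expect this step to use (a) a constant-shift argument that exploits $|\sum_t (x_t - p_t)| \le \smCE(x, p)$ (obtained from the constant test functions $\pm 1 \in \F$), and (b) a clipping/re-centering that uses the oscillation bound for $1$-Lipschitz functions on $[0,1]$. The factor of $2$ in the statement arises because both the $x_t = 0$ and $x_t = 1$ contributions must be absorbed by a single test function in $\F$.
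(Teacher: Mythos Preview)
The paper does not give its own proof of this lemma; it is quoted as Theorem~7.3 of \cite{BGHN23}. The only place where part of the argument appears is inside the proof of Lemma~\ref{lemma:sparse-destination}, where the inequality $\smCE \le 2\,\lowercaldist$ (your ``easy direction'') is re-derived for a two-point configuration via the observation that $v \mapsto f(v)(b-v)$ is $2$-Lipschitz for $b \in \{0,1\}$ and $f \in \F$. Your decomposition
\[
f(p_t)(x_t - p_t) - f(q_t)(x_t - q_t) = (f(p_t)-f(q_t))(x_t - p_t) + f(q_t)(q_t - p_t)
\]
is just an explicit unpacking of that $2$-Lipschitz bound, so this direction is correct and matches what the paper does.

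For the reverse inequality $\lowercaldist \le 2\,\smCE$, the paper offers nothing to compare against. Your LP-duality setup and the reduction to finite support are sound, and the dual you write is correct. But the step you yourself flag as ``most delicate'' is where the proposal stops being a proof and becomes a hope. Having two $1$-Lipschitz functions $\tilde h_0$, $\tilde h_1$ is not the same as having a single $f \in \F$: you need $f$ simultaneously $1$-Lipschitz and bounded in $[-1,1]$, and it must control \emph{both} the $x_t=0$ and $x_t=1$ sums through the specific bilinear form $\sum_t f(p_t)(x_t-p_t)$. Your items (a) and (b) name standard maneuvers (constant shifts, clipping) but do not say how they combine to produce such an $f$ with only a factor-$2$ loss; as written there is no argument connecting $\sum_{x_t=0}\tilde h_0(p_t)+\sum_{x_t=1}\tilde h_1(p_t)$ to $\smCE(x,p)$. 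That connection---equivalently, the construction of an explicit near-optimal transport plan---is exactly the content of this direction, and it is not supplied here.
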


\paragraph{Lipschitz continuity.} Unlike the ECE, $\caldist(x, p)$, $\lowercaldist(x, p)$, and $\smCE(x, p)$ are all Lipschitz in the predictions $p$. Formally, we prove in Appendix~\ref{sec:preliminaries-proofs} that, for fixed $x \in \{0, 1\}^T$ and with respect to the $1$-norm, $\caldist(x, p)$ and $\lowercaldist(x, p)$ are $1$-Lipschitz, while $\smCE(x, p)$ is $2$-Lipschitz.

\section{Proof Overview}\label{sec:proof-overview}
We sketch the proofs of our results in this section. The proof of the approximation guarantees (Theorem~\ref{thm:lowercaldist-vs-caldist}) is the most involved and consists of several technical ingredients, for which we give an overview in Section~\ref{sec:overview-approx}. Given this approximation guarantee, the $O(\sqrt{T})$ upper bound (Theorem~\ref{thm:upper}) follows from a minimax argument and a reduction to online learning, outlined in Section~\ref{sec:overview-upper}.

Both the $\polylog(T)$ upper bound for random bits (Proposition~\ref{prop:random-bits-polylog}) and the $\Omega(T^{1/3})$ lower bound based on random bits and early stopping (Theorem~\ref{thm:lower}) are based on abstracting the setup as a ``controlled random walk'' game, which we define in Section~\ref{sec:overview-upper-random-bits}. We will discuss how to solve the game with a $\polylog(T)$ cost, and why that translates into an upper bound on the calibration distance. Finally, in Section~\ref{sec:overview-lower}, we explain why a connection in the other direction also holds, and how the $\Omega(T^{1/3})$ lower bound follows.

\subsection{Approximation Guarantees}\label{sec:overview-approx}
To show that $\lowercaldist(x, p)$ is a good approximation of $\caldist(x, p)$, we first pick $\D = (\D_1, \D_2, \ldots, \D_T) \in \lowerCal(x)$ as a ``witness'' of $\lowercaldist(x, p)$, i.e., $\lowercaldist(x, p) = \|p - \D\|_1$.\footnote{Technically, we can only find $\D$ that achieves $\|p - \D\|_1 \le \lowercaldist(x, p) + \eps$ for some $\eps > 0$. Since $\eps$ can be made arbitrarily small, the rest of the argument would not be affected.} Then, we round the distributions $\D_1, \ldots, \D_T$ to deterministic values $q_1, \ldots, q_T \in [0, 1]$ such that $q \in \Cal(x)$ and $\|p - q\|_1 \le \alpha\|p - \D\|_1 + \beta$. The desired approximation guarantee would then follow from
\[
    \caldist(x, p)
\le \|p - q\|_1
\le \alpha\|p - \D\|_1 + \beta
=   \alpha\lowercaldist(x, p) + \beta.
\]

This rounding is done in two steps. First, we transform $\D$ into another sequence $\D' = (\D'_1, \ldots, \D'_T)$ of $T$ distributions, such that: (1) $D' \in \lowerCal(x)$, i.e., $\D'$ is still perfectly calibrated; (2) There is a small finite set $S \subset [0, 1]$ that contains the support of every $\D'_t$; (3) $\|p - \D'\|_1$ can be upper bounded in terms of $\|p - \D\|_1$.

Concretely, Lemma~\ref{lemma:sparse-destination-general} gives such a transformation that guarantees
\[
    \|p - \D'\|_1 \le \|p - \D\|_1 + O(\sqrt{T})
\quad\text{and}\quad
    |S| = O(\sqrt{T}).
\]
When $p$ contains at most $m$ different entries, Lemma~\ref{lemma:sparse-destination} shows that we can alternatively achieve
\[
    \|p - \D'\|_1 \le O(1) \cdot \|p - \D\|_1
\quad\text{and}\quad
    |S| = O(m).
\]
The final ingredient is a method of rounding distributions $\D'_1, \ldots, \D'_T$ over set $S$ to deterministic values $(q_1, \ldots, q_T) \in \Cal(x)$. In Lemma~\ref{lemma:sparse-rounding}, we give a rounding scheme with the guarantee $\|p - q\|_1 \le \|p - \D'\|_1 + O(|S|)$. Clearly, Lemmas \ref{lemma:sparse-rounding}~through~\ref{lemma:sparse-destination} together prove Theorem~\ref{thm:lowercaldist-vs-caldist}.

\paragraph{General reduction to small support size.} For the general case, we prove Lemma~\ref{lemma:sparse-destination-general} using a simple binning strategy. Recall from Remark~\ref{remark:optimal-transport} that $\D = (\D_1, \ldots, \D_T)$ specifies a way of transporting $T$ units of mass (labeled with either $0$ or $1$) over the interval $[0, 1]$. For each $t$, a unit amount of bit $x_t$ is originally located at $p_t$, and gets transported according to distribution $\D_t$. The transportation incurs a total cost of $\|p - \D\|_1$, and the condition $\D \in \lowerCal(x)$ requires that, in the resulting configuration, the fraction of ones at each $\alpha \in [0, 1]$ is exactly $\alpha$.

A priori, the bits might be transported to many different destinations. We partition the interval $[0, 1]$ into $\sqrt{T}$ intervals with equal lengths. For the $i$-th interval $\left[\frac{i-1}{\sqrt{T}}, \frac{i}{\sqrt{T}}\right]$, we examine the mass being transported to all the locations within the interval. We \emph{consolidate} these transportations by redirecting them to a single destination, which is chosen such that the calibration constraint is still satisfied. Clearly, there will be at most $\sqrt{T}$ different destinations after the consolidation for all the intervals. Since $\D$ is perfectly calibrated, it is easy to show that the new destination falls into the same interval as the original destinations do, and is thus at a distance $\le 1/\sqrt{T}$. The total increase in the transportation cost will be bounded by $T \cdot (1/\sqrt{T}) = \sqrt{T}$ as desired.

\paragraph{Reduction to small support size under sparsity.} In the setup of Lemma~\ref{lemma:sparse-destination}, each $p_t$ is one of the $m$ values $s_1 < s_2 < \cdots < s_m$. In light of the proof strategy for the general case, it is tempting to try the following: Divide $[0, 1]$ into $m + 1$ intervals by splitting at each $s_i$. For each interval $[s_i, s_{i+1}]$, again, we consolidate all the transportations into the interval by redirecting them to a single location. Unfortunately, this does not work, since a typical interval $[s_i, s_{i+1}]$ has length $\Omega(1/m)$ and the redirection could incur an $\Omega(T/m)$ cost, which is too large.

In our proof, we still examine the mass being transported into the interval $[s_i, s_{i+1}]$ according to $\D$. Since each unit of mass originates at some $p_t \in \{s_1, s_2, \ldots, s_m\}$, the origin must be in $[0, s_i] \cup [s_{i+1}, 1]$. An important simplifying observation is that we may assume that all the bits originate from either $s_i$ or $s_{i+1}$. This is because the transportation of mass from some origin $p_t \in [0, s_i]\cup[s_{i+1}, 1]$ to a destination inside $[s_i, s_{i+1}]$ can be viewed as a two-phase process: first, transport the mass from $p_t$ to one of the endpoints ($s_i$ if $p_t \in [0, s_i]$ and $s_{i+1}$ if $p_t \in [s_{i+1}, 1]$); then, transport it from the endpoint to the actual destination. We will keep the first phase of each transportation unchanged, and focus on consolidating the second phases, in which the origins are either $s_i$ or $s_{i+1}$. We will ensure that, after the consolidation, there are $O(1)$ different destinations for each interval $[s_i, s_{i+1}]$, while the total cost of the second phases increases by at most an $O(1)$ factor. To find such a consolidation strategy, we exploit the connection between the lower calibration distance and smooth calibration error (Lemma~\ref{lemma:smCE-vs-lowercaldist}), and perform a quite involved case analysis.

\paragraph{Rounding of distributions supported over a small set.} Finally, we sketch the proof of our rounding lemma (Lemma~\ref{lemma:sparse-rounding}). The starting point is a sequence of $T$ distributions $\D_1, \ldots, \D_T$ over a common set $S$ of a small size. Let $s_1 < s_2 < \cdots < s_{|S|}$ be the elements of $S$. Suppose that for some $t_1 \ne t_2$, we have $x_{t_1} = x_{t_2}$, $p_{t_1} < p_{t_2}$. Meanwhile, $\D_{t_1}(s_i)$ and $\D_{t_2}(s_j)$ are both positive for some $i > j$. Intuitively, this means that $\D$ is inefficient---if we redirect an $\eps$ probability mass of $\D_{t_1}$ from $s_i$ to $s_j$, and the same amount in $\D_{t_2}$ from $s_j$ to $s_i$, we would end up with the same outcome without increasing the cost. In general, we should expect $\D$ to satisfy the following monotonicity property, or we can tweak it without increasing $\|p - \D\|_1$: for any $t_1, t_2$ such that $x_{t_1} = x_{t_2}$ and $p_{t_1} < p_{t_2}$, every element in the support of $\D_{t_1}$ is less than or equal to every element in the support of $\D_{t_2}$. In fact, this is a simple characterization of the optimal transport on a line.

Once we enforce this monotonicity, the rounding is easy---simply because there will not be much for us to round! Indeed, whenever $\D_t$ has a support of size at least $2$ (say, $\{s_{i}, s_{i+1}\}$), step $t$ must be, among all $t'$ such that $x_{t'} = x_t$ and $\D_{t'}(s_{i+1}) > 0$, the one with the smallest value of $p_{t'}$. This shows that $\D_t$ is degenerate, except for $O(|S|)$ different choices of $t$. For each non-degenerate distribution $\D_t$, we na\"ively pick $q_t = x_t$, so that calibration is satisfied. We also need to change the non-degenerate distributions to maintain calibration. It turns out that this rounding incurs an additional cost of $O(|S|)$, as desired.

\subsection{Calibration Distance Upper Bound}\label{sec:overview-upper}
Theorem~\ref{thm:lowercaldist-vs-caldist} and Lemma~\ref{lemma:smCE-vs-lowercaldist} together give
\[
    \caldist(x, p)
\le \lowercaldist(x, p) + O(\sqrt{T})
\le 2\smCE(x, p) + O(\sqrt{T}).
\]
Therefore, to prove Theorem~\ref{thm:upper}, it suffices to achieve an $O(\sqrt{T})$ smooth calibration error.

Suppose that the forecaster and the adversary are playing a zero-sum game, with the objective being the smooth calibration error. By the minimax theorem\footnote{Technically, we need to restrict the predictions to a finite set to apply the minimax theorem. This is handled by rounding the predictions to a $1/T$-net of $[0, 1]$ and applying Lipschitz continuity in the formal proof.}, we may assume that the adversary's (mixed) strategy is known. Then, at each step $t \in [T]$, we may compute the probability for the adversary to play $x_t = 1$ conditioning on $x_{1:(t-1)}$ and $p_{1:(t-1)}$. A natural strategy is then to choose $p_t$ as this conditional probability. To analyze the smooth calibration error incurred by this strategy, we frame this game as an instance of online learning on the class $\F$ of Lipschitz functions from $[0, 1]$ to $[-1, 1]$, and apply a regret bound in the online learning literature. The $O(\sqrt{T})$ bound follows from the fact that $\F$ has an $O(\sqrt{T})$ \emph{sequential Rademacher complexity}, which is an analogue of the usual Rademacher complexity for the sequential setup.

\subsection{Improved Forecasters for Random Bits}\label{sec:overview-upper-random-bits}
When the adversary commits to producing $T$ random bits, the minimization of the smooth calibration error is, informally, captured by the following control problem:

\begin{quote}
    \textbf{Controlled Random Walk:} The player starts at location $X_0 = 0$. At each step $t \in [T]$, the player first moves by $\eps_t \in \left[-\frac{1}{2}, \frac{1}{2}\right]$ (which may depend on $X_{t-1}$). Then, the nature perturbs the player's location by $\delta_t \in \left\{\pm \frac{1}{2}\right\}$ chosen uniformly at random. In other words, the player is located at $X_t = X_{t-1} + \eps_t + \delta_t$ after step $t$. The \emph{cost} of the player is defined as $|X_T| + \sum_{t=1}^{T}\eps_t^2$.  What is the lowest possible expected cost?
\end{quote}

To see how the game defined above is related to the calibration setup, recall that the smooth calibration error can be written as $\sup_{f \in \F}\sum_{\alpha \in [0, 1]}f(\alpha)\cdot\Delta_{\alpha}$, where $\Delta_{\alpha} = \sum_{t=1}^{T}(x_t - p_t)\cdot\1{p_t = \alpha}$ is the total bias associated with prediction value $\alpha$, and $\F$ is the family of $1$-Lipschitz functions from $[0, 1]$ to $[-1, 1]$. For any $f \in \F$, we have
\begin{align*}
    \sum_{\alpha \in [0, 1]}f(\alpha)\cdot\Delta_{\alpha}
&=  f\left(1/2\right)\cdot\sum_{\alpha \in [0, 1]}\Delta_{\alpha} + \sum_{\alpha \in [0, 1]}\left[f(\alpha) - f\left(1/2\right)\right]\cdot\Delta_{\alpha}\\
&\le \left|\sum_{t=1}^{T}(x_t - p_t)\right| + \sum_{\alpha \in [0, 1]}\left|\alpha - 1/2\right|\cdot|\Delta_{\alpha}|.
\end{align*}
If we write $p_t = 1/2 - \eps_t$ and $x_t = 1/2 + \delta_t$, the first term above reduces to $\left|\sum_{t=1}^{T}(\eps_t + \delta_t)\right| = |X_T|$, the first term in the cost of the player. In the second term, we note that for each $\alpha \in [0, 1]$, the expectation of $\Delta_{\alpha} = \sum_{t=1}^{T}(x_t - p_t)\cdot\1{p_t = \alpha}$ is exactly $(1/2 - \alpha)$ times the expected number of times $\alpha$ is predicted. If we ``assume'' that $\Ex{}{|\Delta_{\alpha}|}$ is equal to the absolute value of $\Ex{}{\Delta_{\alpha}}$\footnote{This is clearly false in general. In the actual proof, we use a concentration argument on $\Delta_{\alpha}$ to show that this approximation is essentially true.}, the second term can be equivalently written as
\[
    \sum_{\alpha \in [0, 1]}\left|\alpha - 1/2\right|^2\cdot\sum_{t=1}^{T}\1{p_t = \alpha}
=   \sum_{t=1}^{T}\left(p_t - 1/2\right)^2
=   \sum_{t=1}^{T}\eps_t^2.
\]
Therefore, an upper bound on the cost in the controlled random walk game gives a uniform upper bound on $\sum_{\alpha \in [0, 1]}f(\alpha)\cdot\Delta_{\alpha}$ over all $f \in \F$ and thus, by definition, upper bounds $\smCE(x, p)$.

\paragraph{A strategy with sub-$\sqrt{T}$ cost.} The trivial strategy of playing $\eps_t = 0$ at every step gives a cost of $\Ex{}{|X_T|} = \Ex{}{\left|\sum_{t=1}^{T}\delta_t\right|} = \Theta(\sqrt{T})$. Can we do better? Consider the following simple strategy: Fix a parameter $\eps \in (0, 1/2]$, and play $\eps_t = -\eps \cdot \sgn(X_{t-1})$ at step $t$. In other words, we move towards the origin by a distance of $\eps$ at each step. This strategy clearly gives $\sum_{t=1}^{T}\eps_t^2 \le T\eps^2$.

For the $|X_T|$ term, the following heuristic argument suggests $\Ex{}{|X_T|} = O(1/\eps)$. Assume that the 
noise $\delta_t$ follows the standard Gaussian instead of the uniform distribution on $\{\pm 1/2\}$. Then, the random process $X_t = X_{t-1} - \eps\cdot\sgn(X_t) + \delta_t$ is a discretization of the following dynamics:
\[
    \frac{\rmd X(t)}{\rmd t} = -\nabla U(X(t)) + \rmd B(t),
\]
where the potential is $U(x) = \eps|x|$, and $B(t)$ is the standard Brownian motion. As $t \to +\infty$, we expect the distribution of $X_t$ to converge to a distribution with density at $x$ proportional to $e^{-\beta U(x)}$ for some constant $\beta$, i.e., the Laplace distribution. This implies $\Ex{}{|X_t|} = O(1/\eps)$. 

If we set $\eps = T^{-1/3}$, both terms in the cost would be bounded by $O(T^{1/3})$, which gives a polynomial improvement over the trivial cost of $\sqrt{T}$. Formalizing this heuristic argument gives a forecasting strategy with $\Ex{}{\smCE(x, p)} = O(T^{1/3})$ against random bits. Since the number of different predictions (i.e., the number of different values of $\eps_t$) is a constant, applying the second part of Theorem~\ref{thm:lowercaldist-vs-caldist} shows that $\caldist(x, p)$ is also at most $O(T^{1/3})$ in expectation.

\paragraph{A strategy with $\polylog(T)$ cost.} The improvement from $T^{1/3}$ to $\polylog(T)$ is done by varying the parameter $\eps$ throughout the game. Interestingly, unlike the usual ``doubling trick'' in online learning, our time horizon is divided into epochs with geometrically \emph{decreasing} lengths.

Suppose that in the first $T/2$ steps, the player simply drifts with the noise. Typically, we expect to have $|X_{T/2}| = O(\sqrt{T})$. For concreteness, assume that $X_{T/2} = \sqrt{T}$. Starting from time $T/2 + 1$, we play $\eps_t = -\alpha / \sqrt{T}$ for some $\alpha = \polylog(T)$. Then, we expect that $X_t$ will drop below $0$ before the game ends with high probability. This is because, if the game (hypothetically) runs for $T/2$ more steps, $X_T$ would roughly follow a Gaussian with mean $\sqrt{T} - \frac{T}{2}\cdot\frac{\alpha}{\sqrt{T}} = -\Omega(\alpha\sqrt{T})$ and variance $O(T)$. For sufficiently large $\alpha$, this will be negative with high probability. Therefore, the player simply waits for $X_t$ to become negative, at which point $X_t$ should be very close to $0$. Now, there are at most $T/2$ steps remaining, and we repeat the same strategy for the rest of the game.

This strategy clearly controls $X_t$ such that $|X_T| = O(1)$ with high probability. To upper bound the $\sum_{t=1}^{T}\eps_t^2$ term, note that in the first epoch, we predict a value with absolute value $\alpha/\sqrt{T}$ at most $T / 2$ times. This contributes at most $\frac{T}{2} \cdot (\alpha / \sqrt{T})^2 = \polylog(T)$ to the sum. Since we repeat this at most $O(\log T)$ times, we end up with a $\polylog(T)$ cost. Finally, since the procedure only involves $O(\log T)$ different values of $\eps_t$, applying the second bound in Theorem~\ref{thm:lowercaldist-vs-caldist} bounds the calibration distance by $\polylog(T)$ as well.

\subsection{Calibration Distance Lower Bound}\label{sec:overview-lower}
Our lower bound proof is based on the observation that a lower bound for the controlled random walk game also gives a lower bound on the smooth calibration error. Again, we write $p_t = 1/2 - \eps_t$ and $x_t = 1/2 + \delta_t$. Then, the location $X_T$ of the player after $T$ steps is exactly given by $X_T = \sum_{t=1}^{T}(\eps_t + \delta_t) = \sum_{t=1}^{T}(x_t - p_t)$, the difference between the total outcomes and total predictions. Recall that $\smCE(x, p)$ is the supremum of $\sum_{t=1}^{T}f(p_t)\cdot(x_t - p_t)$ among all $1$-Lipschitz functions $f: [0, 1] \to [-1, 1]$. In particular, by picking $f$ to be the constant function $1$ or $-1$, we obtain $\smCE(x, p) \ge |X_T|$.

To see how the $\sum_{t=1}^{T}\eps_t^2$ term comes into play, consider the function $f(x) = 1 / 2 - x$, which is in the family $\F$. This gives $\sum_{t=1}^{T}f(p_t)\cdot(x_t - p_t) = \sum_{t=1}^{T}\eps_t\cdot(\eps_t + \delta_t)$. After taking an expectation, the $\eps_t\cdot\delta_t$ term vanishes. Therefore, at least in expectation, the smooth calibration error is lower bounded by the $\sum_{t=1}^{T}\eps_t^2$ term as well.

However, as outlined in Section~\ref{sec:overview-upper-random-bits}, the player can achieve a $\polylog(T)$ cost in the controlled random walk game. To obtain the $\Omega(T^{1/3})$ lower bound, we make another simple observation: as long as we can lower bound the expectation of $\max_{t \in [T]}|X_t| + \sum_{t=1}^{T}\eps_t^2$ by $\Omega(T^{1/3})$, we can obtain the same lower bound in the prediction setting via an \emph{early stopping} trick, which was used by~\cite{QV21} in their lower bound on the ECE.

Indeed, if an algorithm gives $\Ex{}{\sum_{t=1}^{T}\eps_t^2} = \Omega(T^{1/3})$, the connection that we made earlier lower bounds $\Ex{}{\smCE(x, p)}$ by $\Omega(T^{1/3})$ as desired. Otherwise, the value $|X_t|$ must be large at some point $t'$. Equivalently, the bias $\left|\sum_{t=1}^{t'}(x_t - p_t)\right|$ is large. Then, if the adversary deviates from outputting random bits, and keep outputting the same bit, this large bias will remain in the end.

To lower bound this strengthened cost of $\max_{t \in [T]}|X_t| + \sum_{t=1}^{T}\eps_t^2$, we divide the horizon $T$ into $T^{1/3}$ \emph{epochs} of length $T^{2/3}$ each. In a typical block, the sum of $\delta_t$ has an absolute value of $\Omega(\sqrt{T^{2/3}}) = \Omega(T^{1/3})$. Then, if the total control of the player (i.e., the sum of $\eps_t$) is much smaller than $T^{1/3}$ in absolute value, we will catch a large $|X_t|$ during this epoch. In order not to be caught, the player is forced to ensure that the sum of $\eps_t$ is $\pm \Omega(T^{1/3})$ in a typical epoch. This, in turn, lower bounds the sum of $\eps_t^2$ within that epoch by $\Omega(1)$. Summing over the $T^{1/3}$ epochs gives the desired lower bound of $\sum_{t=1}^{T}\eps_t^2 = \Omega(T^{1/3})$.

\section{Discussion and Open Problems}\label{sec:discussion}

\paragraph{Is the calibration distance a good metric?} In this work, we propose to use the calibration distance as a calibration measure in sequential prediction setups. The definition of the calibration distance is natural, and in the same spirit as the work of~\cite{BGHN23} for the offline setup. Compared to the ECE, the calibration distance is better-behaved in being Lipschitz continuous in the predictions. Compared to alternative calibration measures that are continuous (such as weak and smooth calibration), the calibration distance is, from the forecaster's perspective, especially easy and intuitive to certify---to show that the calibration distance is small, the forecaster only needs to output a set of alternative predictions that are calibrated and close to the actual predictions. From this perspective, our proof of Theorem~\ref{thm:lowercaldist-vs-caldist} is \emph{algorithmic} in the sense that it implies an efficient algorithm for the forecaster to find such a certificate.

On the other hand, the calibration distance is still far from being the ``perfect'' calibration measure in the sequential setup. As we highlight in Proposition~\ref{prop:random-bits-polylog}, even on a sequence of random bits, minimizing the calibration distance might incentivize the forecaster to deviate from the ``right'' predictions. Unfortunately, this is unavoidable to some extent---as discussed in Section~\ref{sec:related-work}, such incentive-related issues may arise even when only perfect calibration is concerned.

\paragraph{Stronger approximation guarantees.} An obvious open problem is to strengthen Theorem~\ref{thm:approx} and design better approximation algorithms for the calibration distance. A natural avenue is through refining the structural results that relate the calibration distance to the lower calibration distance. More concretely, is the $O(\sqrt{T})$ upper bound on the gap between $\caldist(x, p)$ and $\lowercaldist(x, p)$ tight? Can we avoid the extra $O(1)$ multiplicative factor for the sparse case?

\paragraph{Explicit and efficient algorithms.} Our proof of Theorem~\ref{thm:upper} is based on the minimax theorem and thus non-constructive. Deriving an actual algorithm requires solving a zero-sum game with an action space that is doubly-exponential in $T$. Is there an explicit and efficient algorithm for matching the $O(\sqrt{T})$ guarantee? A concrete approach is based on the prior work of~\cite{KF08,FH18,FH21}, which gave \emph{deterministic} forecasters that asymptotically satisfy weak or smooth calibration. Roughly speaking, their forecasting algorithms are deterministic because they are based on fixed point theorems rather than the minimax theorem. While these work focused on asymptotic calibration rather than the exact convergence bounds, it follows easily from \cite[Lemma 4.3]{KF08} that their algorithm gives $\Ex{}{\smCE(x, p)} = O(T^{2/3})$ in our notations. Is there a more refined analysis of their approach that gives an $O(\sqrt{T})$ bound? Can we efficiently implement their algorithms, which, as stated, are based on finding fixed points?

\paragraph{Stronger lower bounds.} Another obvious open problem is to strengthen the $\Omega(T^{1/3})$ lower bound, which is essentially based a sequence of random bits (each taking value $1$ with probability $1/2$). A natural attempt is to divide the time horizon into multiple epochs, and use different probabilities for different epochs. The issue is, of course, that the forecaster may predict strategically to decrease the error that it has accumulated in previous epochs.  For the ECE, this difficulty was partially resolved by~\cite{QV21} via a ``sidestepping'' technique, which uses an adaptive, divide-and-conquer strategy for choosing the probabilities for different epochs. Can we use the same approach to bootstrap the $T^{1/3}$ lower bound to a rate closer to $T^{1/2}$?

\section{Proof of the Approximation Guarantees}\label{sec:approximation}
In this section, we prove Theorem~\ref{thm:lowercaldist-vs-caldist}, which states that the lower calibration distance is a good additive approximation of the calibration distance. Then, Theorem~\ref{thm:approx} follows easily from the algorithm of~\cite{BGHN23} that computes the lower calibration distance up to an $\eps$ additive error in $\poly(n, 1/\eps)$ time.

\subsection{Impossibility of Multiplicative Approximation}\label{sec:no-multiplicative-approx}
Before we proceed to the proof, we give a concrete example showing that $\lowercaldist(x, p)$ does not give a good multiplicative approximation of $\caldist(x, p)$, even if the multiplicative factor is allowed to depend on $T$ (but not on $x$ or $p$).

\begin{proposition}\label{prop:multiplicative-approx}
There is no function $f:\bbN \to (0, +\infty)$ such that the following holds for all $T$, $x \in \{0, 1\}^T$ and $p \in [0, 1]^T$:
\begin{equation}\label{eq:multiplicative-approx}
    \lowercaldist(x, p) \ge f(T)\cdot\caldist(x, p).
\end{equation}
\end{proposition}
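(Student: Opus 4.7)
}
The plan is to construct, for $T=4$, a single parametric family $(x,p^{(\eps)})$ such that $\caldist(x,p^{(\eps)}) = \Theta(\eps)$ while $\lowercaldist(x,p^{(\eps)}) = O(\eps^2)$. Sending $\eps \to 0^+$ with $T$ fixed then rules out any $f(T) > 0$ satisfying \eqref{eq:multiplicative-approx}, since for small enough $\eps$ we would get $f(4)\cdot\caldist = \Theta(f(4)\cdot\eps) > O(\eps^2) \geq \lowercaldist$, a contradiction.

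The concrete instance I would use is $x = (0,1,0,1)$ with $p^{(\eps)} = (1/2-\eps,\, 1/2-\eps,\, 1/2+\eps,\, 1/2+\eps)$ for small $\eps \in (0,1/8]$. The rationale, viewed through the optimal transport lens of Remark~\ref{remark:optimal-transport}: at the location $1/2-\eps$ the two bits $\{0,1\}$ sit, contributing bias $2\eps$, and symmetrically at $1/2+\eps$ the bias is $-2\eps$. Randomization can cancel these biases cheaply by a small, local bit-swap; but any deterministic calibrated rounding is forced to move every coordinate a distance of order $\eps$.

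To upper bound $\lowercaldist$, I would exhibit an explicit $\D = (\D_1,\D_2,\D_3,\D_4) \in \lowerCal(x)$. Take $\D_1 = \delta_{1/2-\eps}$, $\D_4 = \delta_{1/2+\eps}$, and let $\D_2$ put mass $2\eps$ on $1/2+\eps$ and $1-2\eps$ on $1/2-\eps$, while $\D_3$ puts mass $2\eps$ on $1/2-\eps$ and $1-2\eps$ on $1/2+\eps$. A direct check (plugging into the two calibration equations at $\alpha \in \{1/2-\eps, 1/2+\eps\}$) shows $\D \in \lowerCal(x)$, and the transport cost is $\|p - \D\|_1 = 2 \cdot (2\eps)(2\eps) = 8\eps^2$. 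Hence $\lowercaldist(x,p^{(\eps)}) \leq 8\eps^2$.

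To lower bound $\caldist$, I use that $\Cal(x)$ is finite: each $q \in \Cal(x)$ corresponds to a partition of $[4]$ on which $q_t$ equals the empirical mean of $x$ on the block containing $t$. Enumerating the partitions of $\{1,2,3,4\}$ gives the candidate predictions $(1/2,1/2,1/2,1/2)$ (cost $4\eps$), single-element splits yielding values $\{0,2/3\}$ or $\{1,1/3\}$ (cost $1\pm 2\eps$), the full split $(0,1,0,1)$ (cost $2$), and a handful of three-part partitions of cost $1+O(\eps)$. For $\eps \leq 1/8$, the uniform $1/2$ prediction is the minimizer, giving $\caldist(x,p^{(\eps)}) = 4\eps = \Omega(\eps)$. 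Combining the two estimates finishes the argument; no step is an obstacle, the only care needed being the finite case analysis over $\Cal(x)$, which I would present as a short table rather than a lengthy calculation.
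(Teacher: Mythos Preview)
Your proposal is correct and follows essentially the same approach as the paper: the identical instance $x=(0,1,0,1)$, $p=(1/2-\eps,1/2-\eps,1/2+\eps,1/2+\eps)$ with $\caldist=4\eps$ and $\lowercaldist=O(\eps^2)$. The only cosmetic differences are that your witness $\D$ uses two support points (swapping mass $2\eps$ between $1/2\pm\eps$) rather than the paper's three (sending mass $1-\beta$ to $1/2$), and the paper shortcuts your partition enumeration by observing that every entry of any $q\in\Cal(x)$ lies in $\{0,1/4,1/3,1/2,2/3,3/4,1\}$, so any $q_i\ne 1/2$ already forces $|p_i-q_i|\ge 1/6-\eps\ge 4\eps$.
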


The proof is based on an example inspired by the proof of~\cite[Lemma 4.5]{BGHN23}. 

\begin{proof}
Let $T = 4$, $x = (0, 1, 0, 1)$, and $p = (1/2 - \eps, 1/2 - \eps, 1/2 + \eps, 1/2 + \eps)$ for some small $\eps \in (0, 1/30)$. We will show that $\caldist(x, p) = 4\eps$ while $\lowercaldist(x, p) = O(\eps^2)$. This implies that the ratio
\[
    \frac{\lowercaldist(x, p)}{\caldist(x, p)}
\le \frac{O(\eps^2)}{4\eps}
=   O(\eps)
\]
can be made arbitrarily small by taking $\eps \to 0^{+}$, so Inequality~\eqref{eq:multiplicative-approx} cannot hold for any fixed $f$.

\paragraph{The calibration distance.} We first note that $\caldist(x, p) \le 4\eps$, as $q = (1/2, 1/2, 1/2, 1/2)$ is perfectly calibrated with respect to $x$, and $\|p - q\|_1 = 4\eps$. Furthermore, for any $q \in \Cal(x)$, all the entries of $q$ must lie in
\[
    \{a/b: a \in \{0,1, \ldots, b\}, b \in [4]\} = \{0, 1/4, 1/3, 1/2, 2/3, 3/4, 1\}.
\]
If any entry $q_i$ is different from $1/2$, the difference $|p_i - q_i|$ must be at least $(1/2 - \eps) - 1/3 = 1/6 - \eps \ge 4\eps$ (the last step follows from $\eps < 1/30$), which implies $\|p - q\|_1 \ge 4\eps$. This shows $\caldist(x, p) = 4\eps$.

\paragraph{The lower calibration distance.} Roughly speaking, the $O(\eps^2)$ lower calibration distance is achieved by transporting $O(\eps)$ units of the ``extra ones (resp., zeros)'' at $1/2 - \eps$ (resp., $1/2 + \eps$) to $1/2$. Formally, let $\D_1$ and $\D_4$ be the degenerate distributions supported on $\{1/2 - \eps\}$ and $\{1/2 + \eps\}$, respectively. Let $\beta = \frac{1/2 - \eps}{1/2 + \eps}$, and define $\D_2$ and $\D_3$ as
\[
    \D_2(1/2 - \eps) = \D_3(1/2 + \eps) = \beta, \quad\quad
    \D_2(1/2) = \D_3(1/2) = 1 - \beta.
\]
We can then verify that $(\D_1, \D_2, \D_3, \D_4) \in \lowerCal(x)$, because for $\alpha = 1/2$, we have
\[
    \sum_{t=1}^{T}(x_t - \alpha) \cdot \D_t(\alpha)
=   (x_2 - 1/2)\cdot (1 - \beta) + (x_3 - 1/2)\cdot(1 - \beta)
=   \frac{1-\beta}{2} - \frac{1-\beta}{2}
=   0,
\]
and for $\alpha = 1/2 - \eps$, we have
\[
    \sum_{t=1}^{T}(x_t - \alpha) \cdot \D_t(\alpha)
=   (x_1 - 1/2 + \eps)\cdot 1 + (x_2 - 1/2 + \eps)\cdot \beta
=   0.
\]
The $\alpha = 1/2 + \eps$ case holds by symmetry. By definition of the lower calibration distance,
\[
    \lowercaldist(x, p)
\le \sum_{t=1}^{T}\Ex{q_t \sim \D_t}{|p_t - q_t|}
=   0 + (1 - \beta)\cdot\eps + (1 - \beta)\cdot \eps + 0
=   \frac{4\eps^2}{1/2 + \eps}
=   O(\eps^2).
\]
\end{proof}

\subsection{Rounding of Distributions with a Small Support}
We start with the following lemma, which is crucial for proving both bounds in Theorem~\ref{thm:lowercaldist-vs-caldist}.

\begin{lemma}\label{lemma:sparse-rounding}
    Suppose that $x \in \{0, 1\}^T$, $p \in [0, 1]^T$, and $\D = (\D_1, \D_2, \ldots, \D_T) \in \lowerCal(x)$, where $\D_1, \ldots, \D_T$ are distributions supported over a finite set $S \subset [0, 1]$. Then,
    \[
        \caldist(x, p) \le \|p - \D\|_1 + 4|S|.
    \]
\end{lemma}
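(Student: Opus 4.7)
The plan is to construct a deterministic $q \in \Cal(x)$ from $\D$ in three stages: (a) replace $\D$ by a monotone coupling with the same cost, (b) round each non-degenerate distribution $\D_t$ to the deterministic prediction $q_t = x_t$, and (c) absorb the resulting calibration residue by re-centering each bin $s \in S$. In stage (a), for each bit $b \in \{0, 1\}$ I replace the coupling between sources $\{t : x_t = b\}$ and destinations $S$ by the monotone (optimal-transport) coupling with the same marginals $M_{b, s} := \sum_{t : x_t = b}\D_t(s)$; this preserves $\D \in \lowerCal(x)$ (since the marginals alone determine $\sum_t(x_t - s)\D_t(s)$ at each $s$) and can only decrease $\|p - \D\|_1$. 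The resulting $\D$ satisfies the property from the overview: for $x_{t_1} = x_{t_2}$ and $p_{t_1} < p_{t_2}$, every element of $\mathrm{supp}(\D_{t_1})$ is $\le$ every element of $\mathrm{supp}(\D_{t_2})$. Using this monotonicity, the supports of non-degenerate sources within each bit are sorted and overlap at most at single points, so an easy counting argument on their index-spans bounds $|N| \le 2(|S| - 1)$, where $N := \{t : |\mathrm{supp}(\D_t)| \ge 2\}$.

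In stage (b) I set $q_t := x_t$ for each $t \in N$ and $q_t := $ the unique point of $\mathrm{supp}(\D_t)$ otherwise, at a cost of at most $|p_t - x_t| \le 1$ per non-degenerate source, so this stage adds at most $|N|$ to $\|p - \D\|_1$. The calibration at each $s_i \in S$ is then violated by
\[
    B_{s_i} := \sum_{t \notin N,~q_t = s_i}(x_t - s_i) = -\sum_{t \in N}(x_t - s_i)\,\D_t(s_i),
\]
using the original calibration identity $\sum_t (x_t - s_i)\D_t(s_i) = 0$. A triangle inequality gives $\sum_{s_i \in S}|B_{s_i}| \le \sum_{t \in N}\sum_{s_i}\D_t(s_i) = |N|$. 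In stage (c), for each $s_i$ with at least one degenerate source currently assigned to it, I re-center by defining $s_i' := \bigl(\sum_{t : q_t = s_i} x_t\bigr) / |\{t : q_t = s_i\}|$ and reassigning $q_t \leftarrow s_i'$ for every such $t$. Calibration at $s_i'$ is then exact by construction, while the added cost at bin $s_i$ is $|\{t : q_t = s_i\}|\cdot |s_i' - s_i| = |B_{s_i}|$, summing to at most $|N|$. Combining the two cost increments yields $\|p - q\|_1 \le \|p - \D\|_1 + 2|N| \le \|p - \D\|_1 + 4(|S|-1) \le \|p - \D\|_1 + 4|S|$, as required.

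The main obstacle will be the corner-case verification for stage (c): I must check that $q \in \Cal(x)$, i.e.\ that calibration holds at \emph{every} $\alpha \in [0,1]$ and not only at the new centers $s_i'$. The key observation is that collisions are benign---if several new centers coincide with each other, with an endpoint in $\{0,1\}$, or with the $x_t$ assigned to a non-degenerate source in stage (b), the merged bin at the shared value $\alpha$ still has weighted-average bit equal to $\alpha$ (since each of the merging pieces does), so the calibration constraint at $\alpha$ remains satisfied; and every $\alpha \notin \{s_i'\}\cup\{0,1\}$ receives no assignment and trivially satisfies calibration.
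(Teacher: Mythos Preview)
Your proposal is correct and follows essentially the same three-stage architecture as the paper's proof: enforce monotonicity via the optimal-transport coupling within each bit, set $q_t=x_t$ on the non-degenerate (``mixed'') steps, and re-center each bin $s\in S$ on the degenerate (``pure'') steps. The count $|N|\le 2(|S|-1)$ matches the paper's $|\Tmixed|\le 2|S|$ up to a constant.

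The one genuine difference is how you bound the re-centering cost in stage~(c). The paper bounds the residue at each individual $s\in S$ by $2$, using that for each bit $b$ at most the two ``boundary'' sources can contribute fractional mass at $s$ (so $\eps_0,\eps_1\in[0,2]$ in its notation), and then sums to $2|S|$. You instead use the global identity $B_{s_i}=-\sum_{t\in N}(x_t-s_i)\D_t(s_i)$ and the estimate $\sum_i|B_{s_i}|\le \sum_{t\in N}\sum_i\D_t(s_i)=|N|$, which is cleaner and directly ties the re-centering cost back to the number of non-degenerate sources rather than to $|S|$. Either route lands at the same $4|S|$ bound; your collision analysis for verifying $q\in\Cal(x)$ (checking that merging bins with common value $\alpha$ preserves the weighted-average condition) is a point the paper handles in the same spirit.
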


The lemma states that if we have distributions $\D_1$ through $\D_T$ supported on a common set of a small size, and they serve as a witness for $\lowercaldist(x, p)$ being small, we can ``round'' them to a sequence of (deterministic) predictions and show that $\caldist(x, p)$ is small as well. The rounding procedure only leads to an additive increase in the distance that is linear in the support size.

\begin{proof}
    We will first transform $\D_1, \ldots, \D_T$ to another $T$ distributions $(\D'_1, \ldots, \D'_T) \in \lowerCal(x)$ that satisfy a monotonicity property. Furthermore, the new distributions are still over set $S$, and the cost $\|p - \D'\|_1$ does not exceed the original cost $\|p - \D\|_1$. With this monotonicity, we apply a simple rounding scheme to produce a good witness $q \in \Cal(x)$ which shows that $\caldist(x, p)$ is small.

    \paragraph{Enforce monotonicity.} For $b \in \{0, 1\}$, let $\T^{(b)} \coloneqq \{t \in [T]: x_t = b\}$ denote the set of time steps where the outcome is $b$. We claim that there exist distributions $\D'_1, \ldots, \D'_T$ over $S$ such that:
    \begin{itemize}
        \item $\D' = (\D'_1, \ldots, \D'_T) \in \lowerCal(x)$.
        \item $\|p - \D'\|_1 \le \|p - \D\|_1$.
        \item There exist total orders on $\T^{(0)}$ and $\T^{(1)}$ (both denoted by ``$\prec$'') such that: For every $b \in \{0, 1\}$ and $t_1, t_2 \in \T^{(b)}$, $t_1 \prec t_2$ implies: (1) $p_{t_1} \le p_{t_2}$; (2) the maximum element in the support of $\D'_{t_1}$ is less than or equal to the minimum element in the support of $\D'_{t_2}$.
    \end{itemize}
    In words, the third condition requires that among all the time steps in $\T^{(b)}$, steps with a small value of $p_t$ corresponds to a distribution $\D'_t$ supported over smaller values.

    \paragraph{Construction of $\D'$.} The existence of $\D'$ should be obvious when viewing the problem as an optimal transport in one dimension: For each $b \in \{0, 1\}$, we originally have one unit of mass on $p_t$ for each $t \in \T^{(b)}$, while the goal is to obtain the configuration $\sum_{t \in \T^{(b)}}\D_t$. In order to minimize the total cost, we should match the two measures in increasing order.

    Nevertheless, we provide an elementary proof of this claim by explicitly constructing the distributions $\D'_1$ through $\D'_T$. For each $b \in \{0, 1\}$, let $m \coloneqq \left|\T^{(b)}\right|$ and $t_1, t_2, \ldots, t_m$ be a permutation of the elements of $\T^{(b)}$ such that $p_{t_1} \le p_{t_2} \le \cdots \le p_{t_m}$. Then, we set $\D'_{t_1}, \D'_{t_2}, \ldots, \D'_{t_m}$ as the unique distributions over set $S$ that satisfy:
    \begin{itemize}
        \item $\sum_{i=1}^{m}\D'_{t_i} = \sum_{i=1}^{m}\D_{t_i}$, i.e., for every $\alpha \in S$, $\sum_{i=1}^{m}\D'_{t_i}(\alpha) = \sum_{i=1}^{m}\D_{t_i}(\alpha)$.
        \item For every $i \in [m-1]$, the maximum element in the support of $\D'_{t_i}$ is less than or equal to the minimum element in the support of $\D'_{t_{i+1}}$.
    \end{itemize}
    More concretely, these $m$ distributions can be found by starting with the total measure $\sum_{i=1}^{m}\D_{t_i}$, and then greedily forming a probability measure by taking one unit of mass from the smallest elements in the support of the remaining measure, until $m$ probability measures are formed.

    By construction, the distributions $\D'_1, \ldots, \D'_T$ are over set $S$, and satisfy the monotonicity constraint (with respect to the total order defined as $t_1 \prec t_2 \prec \cdots \prec t_m$). It remains to show that $\D' \in \lowerCal(x)$ and that the cost of $\D'$ is not higher than that of $\D$.
    
    \paragraph{$\D'$ is perfectly calibrated.} We note that for every $\alpha \in [0, 1]$,
    \[
        \sum_{t=1}^{T}(x_t - \alpha)\D'_t(\alpha)
    =   \sum_{b \in \{0, 1\}}(b - \alpha)\sum_{t \in \T^{(b)}}\D'_t(\alpha)
    =   \sum_{b \in \{0, 1\}}(b - \alpha)\sum_{t \in \T^{(b)}}\D_t(\alpha)
    =   \sum_{t=1}^{T}(x_t - \alpha)\D_t(\alpha)
    =   0.
    \]
    The second step follows from our construction of $\D'$, while the last step holds since $\D \in \lowerCal(x)$. This proves $\D' \in \lowerCal(x)$.

    \paragraph{$\D'$ does not have a higher cost.} Fix $b \in \{0, 1\}$. Let $a_0 < a_1 < \cdots < a_m$ be the distinct elements in $\{p_1, \ldots, p_T\} \cup S$. For each $i \in [m]$, define
    \[
        F_i \coloneqq \sum_{t \in \T^{(b)}}\1{p_t < a_i} 
        \quad \text{and} \quad
        G_i \coloneqq \sum_{t \in \T^{(b)}}\pr{q_t \sim \D_t}{q_t < a_i}.
    \]
    
    We claim that
    \begin{equation}\label{eq:EMD}
        \sum_{t \in \T^{(b)}}\Ex{q_t \sim \D'_t}{|p_t - q_t|}
    =   \sum_{i=1}^{m}(a_i - a_{i-1})\cdot|F_i - G_i|
    \le \sum_{t \in \T^{(b)}}\Ex{q_t \sim \D_t}{|p_t - q_t|}.
    \end{equation}
    Summing over $b \in \{0, 1\}$ proves $\|p - \D'\|_1 \le \|p - \D\|_1$.

    We first prove the first step in Equation~\eqref{eq:EMD}. For each $i \in [m]$, let $\delta_i$ denote the amount of mass transported across the interval $[a_{i-1}, a_i]$ according to $\D'$. Formally, we define
    \[
        \delta_i
    \coloneqq
        \sum_{t \in \T^{(b)}}\pr{q_t \sim \D'_t}{[a_{i-1}, a_i] \subseteq [\min\{p_t, q_t\}, \max\{p_t, q_t\}]}.
    \]
    For any $p, q \in \{a_0, a_1, \ldots, a_m\}$, we have the identity
    \[
        |p - q|
    =   \sum_{i=1}^{m}(a_i - a_{i-1})\cdot\1{[a_{i-1}, a_i] \subseteq [\min\{p, q\}, \max\{p, q\}]}.
    \]
    Thus, we can re-write the cost $\sum_{t \in \T^{(b)}}\Ex{q_t \sim \D'_t}{|p_t - q_t|}$ as $\sum_{i=1}^{m}(a_i - a_{i-1})\cdot\delta_i$, and it remains to prove that $\delta_i = |F_i - G_i|$ for every $i \in [m]$.
    
    Fix $i \in [m]$. Let $t_1 \prec t_2 \prec \cdots$ be the elements of $\T^{(b)}$ sorted according to total order $\prec$. Recall that $F_i$ is the number of values among $\{p_t: t \in T^{(b)}\}$ that are strictly smaller than $a_i$, so we have $p_{t_j} \le a_{i-1}$ for every $j \le F_i$ and $p_{t_j} \ge a_i$ for every $j > F_i$. 
    
    Suppose that $F_i \ge G_i$. By construction of $\D'_t$, for every $j \le \lfloor G_i \rfloor$, the support of $\D'_{t_j}$ is contained in $[0, a_{i-1}]$ while $p_{t_j} \in [0, a_{i-1}]$, so they do not contribute to $\delta_i$. For $j \in \{\lceil G_i \rceil + 1, \ldots, F_i\}$, the support of $\D'_{t_j}$ is completely contained in $[a_i, 1]$ while $p_{t_j} \in [0, a_{i-1}]$, so they contribute $F_i - \lceil G_i \rceil$ to $\delta_i$. Finally, when $G_i$ is not integral, for $j = \lceil G_i \rceil$, we have $p_{t_j} \in [0, a_{i-1}]$ and $\D'_{t_j}$ assigns a probability mass of $\lceil G_i \rceil - G_i$ to $[a_i, 1]$. This contributes $\lceil G_i \rceil - G_i$ to $\delta_i$. Therefore, we conclude that, in this case,
    \[
        \delta_i = (F_i - \lceil G_i \rceil) + (\lceil G_i \rceil - G_i) = |F_i - G_i|.
    \]

    The $F_i < G_i$ case is similar. For $j \le F_i$, we have $p_{t_j} \in [0, a_{i-1}]$, while the support of $\D'_{t_j}$ is also contained in $[0, a_{i-1}]$, so these values of $j$ do not contribute to $\delta_i$. When $F_i + 1 \le j \le \lfloor G_i\rfloor$, the support of $\D'_{t_j}$ is still contained in $[0, a_{i-1}]$, whereas $p_{t_j} \ge a_i$. This contributes $\lfloor G_i\rfloor - F_i$ to $\delta_i$. Finally, when $G_i$ is not integral, for $j = \lceil G_i \rceil$, $\D'_{t_j}$ assigns a probability mass of $G_i - \lfloor G_i \rfloor$ to $[0, a_{i-1}]$, and this contributes $G_i - \lfloor G_i \rfloor$ to $\delta_i$. Again, we have $\delta_i = G_i - F_i = |F_i - G_i|$.

    Next, we prove the second step in Equation~\eqref{eq:EMD}, i.e., $\sum_{t \in \T^{(b)}}\Ex{q_t \sim \D_t}{|p_t - q_t|}$ is lower bounded by $\sum_{i=1}^{m}(a_i - a_{i-1})\cdot|F_i - G_i|$. Similarly, we define
    \[
        \delta_i
    \coloneqq
        \sum_{t \in \T^{(b)}}\pr{q_t \sim \D_t}{[a_{i-1}, a_i] \subseteq [\min\{p_t, q_t\}, \max\{p_t, q_t\}]}
    \]
    as the total mass transported across $[a_{i-1}, a_i]$ according to $\D$, and it suffices to show that $\delta_i \ge |F_i - G_i|$ for every $i \in [m]$.

    Fix $i \in [m]$ and suppose that $F_i \ge G_i$. Note that for any $p, q \in \{a_0, a_1, \ldots, a_m\}$, we have the inequality
    \[
        \1{p \le a_{i-1} \wedge q \ge a_i}
    \ge \1{p < a_i} - \1{q < a_i}.
    \]
    Then, by definition of $\delta_i$, we have
    \[
        \delta_i
    \ge \sum_{t \in \T^{(b)}}\pr{q_t \sim \D_t}{p_t \le a_{i-1} \wedge q_t \ge a_i}
    \ge \sum_{t \in \T^{(b)}}\1{p_t < a_i} - \sum_{t \in \T^{(b)}}\pr{q_t \sim \D_t}{q_t < a_i}
    =   |F_i - G_i|.
    \]
    Similarly, when $F_i < G_i$, using the inequality
    \[
        \1{p \ge a_i \wedge q \le a_{i-1}}
    \ge \1{p \ge a_i} - \1{q \ge a_i},
    \]
    we get
    \begin{align*}
        \delta_i
    \ge \sum_{t \in \T^{(b)}}\pr{q_t \sim \D_t}{p_t \ge a_i \wedge q_t \le a_{i-1}}
    &\ge\sum_{t \in \T^{(b)}}\1{p_t \ge a_i} - \sum_{t \in \T^{(b)}}\pr{q_t \sim \D_t}{q_t \ge a_i}\\
    &=  \left(\left|\T^{(b)}\right| - F_i\right) - \left(\left|\T^{(b)}\right| - G_i\right)
    =   |F_i - G_i|.
    \end{align*}
    This concludes the proof of the inequality $\|p - \D'\|_1 \le \|p - \D\|_1$, and shows that $\D'$ indeed has all the desired properties.

    \paragraph{The rounding scheme.} Now that the distributions $\D'_1, \ldots, \D'_T$ have all the nice properties, it remains to find $q \in \Cal(x)$ such that $\|p - q\|_1 \le \|p - \D'\|_1 + 4|S|$, since the lemma would then follow from
    \[
        \caldist(x, p)
    \le \|p - q\|_1
    \le \|p - \D'\|_1 + 4|S|
    \le \|p - \D\|_1 + 4|S|.
    \]

    Let $\Tpure$ be the set of indices $t \in [T]$ such that $\D'_t$ is degenerate (i.e., with a size-$1$ support). We call each $t \in \Tpure$ a ``pure step'', and each $t \in \Tmixed \coloneqq [T] \setminus \Tpure$ a ``mixed step''. For each pure step $t \in \Tpure$, let $\beta_t \in [0, 1]$ be the (only) element in the support of $\D'_t$.
    
    We choose $q \in [0, 1]^T$ as follows:
    \begin{itemize}
        \item For each pure step $t \in \Tpure$, set $q_t$ to
        \[
            g(\beta_t)
        \coloneqq
            \frac{\sum_{t' \in \Tpure}x_{t'}\cdot \1{\beta_{t'} = \beta_t}}{\sum_{t' \in \Tpure}\1{\beta_{t'} = \beta_t}}.
        \]
        \item For each mixed step $t \in \Tmixed$, set $q_t = x_t$.
    \end{itemize}
    In the remainder of the proof, we verify that $q \in \Cal(x)$ and then upper bound $\|p - q\|_1$.
    
    \paragraph{$q$ is perfectly calibrated.} For every $\alpha \in [0, 1]$, we can write
    \[
        \sum_{t=1}^{T}(x_t - q_t)\cdot\1{q_t = \alpha}
    =   \sum_{t \in \Tpure}(x_t - q_t)\cdot\1{q_t = \alpha} + \sum_{t \in \Tmixed}(x_t - q_t)\cdot\1{q_t = \alpha}.
    \]
    The second summation is $0$, since $q_t = x_t$ holds for every $t \in \Tmixed$. By our choice of $q$, the first summation can be written as
    \begin{align*}
        &~\sum_{t \in \Tpure}(x_t - q_t)\cdot\1{g(\beta_t) = \alpha}\\
    =   &~\sum_{\alpha' \in S}\1{g(\alpha') = \alpha}\sum_{t \in \Tpure}(x_t - \alpha)\cdot\1{\beta_t = \alpha'}\\
    =   &~\sum_{\alpha' \in S}\1{g(\alpha') = \alpha}\cdot\left[\sum_{t \in \Tpure}x_t\cdot\1{\beta_t = \alpha'} - g(\alpha')\cdot\sum_{t \in \Tpure}\1{\beta_t = \alpha'}\right]\\
    =   &~0,
    \end{align*}
    where the last step follows from the definition of $g(\cdot)$. This verifies $q \in \Cal(x)$.
    
    \paragraph{Upper bound $\|p - q\|_1$.} Note that
    \begin{align*}
        \|p - q\|_1
    &=  \sum_{t \in \Tpure}|p_t - q_t| + \sum_{t \in \Tmixed}|p_t - q_t|\\
    &\le\sum_{t \in \Tpure}|p_t - \beta_t| + \sum_{t \in \Tpure}|q_t - \beta_t| + \left|\Tmixed\right|\\
    &\le\sum_{t=1}^{T}\Ex{q_t \sim \D'_t}{|p_t - q_t|} + \sum_{t \in \Tpure}|q_t - \beta_t| + \left|\Tmixed\right|.
    \end{align*}
    In the following, we will show that both $\sum_{t \in \Tpure}|q_t - \beta_t|$ and $\left|\Tmixed\right|$ are upper bounded by $2|S|$, which would conclude the proof.

    \paragraph{Bound the number of mixed steps.} We start by showing $|\Tmixed| \le 2|S|$. Fix $t \in \Tmixed$. Let $b = x_t$ and $s \in S$ be the largest element in the support of $\D'_t$. Since $\D'_t$ is not degenerate, the support of $\D'_t$ contains another element $s' < s$.  Recall that $\T^{(b)}$ is associated with a total order $\prec$ that is consistent with both $p_t$'s and the supports of $\D'_t$. Then, with respect to order $\prec$, $t$ must be the smallest index in $\T^{(b)}$ such that $\D'_t(s) \ne 0$. Indeed, if there exists $t' \prec t$ such that $\D'_{t'}(s) > 0$, the fact that the support of $\D'_t$ contains a smaller element $s' < s$ contradicts the monotonicity. Therefore, we showed that every $t \in \Tmixed$ corresponds to a unique pair $(b, s) \in \{0, 1\}\times S$. This implies $|\Tmixed| \le 2|S|$.

    \paragraph{Bound the additional cost on the pure steps.} Fix $s \in S$. For each $b \in \{0, 1\}$, let
    \[
        n_b \coloneqq \sum_{t \in \Tpure}\1{\beta_t = s \wedge x_t = b}
    \quad \text{and} \quad
        \eps_b \coloneqq \sum_{t \in \Tmixed}\D'_t(s)\cdot\1{x_t = b}.
    \]
    We claim that
    \[
        s = \frac{n_1 + \eps_1}{n_0 + n_1 + \eps_0 + \eps_1}
    \]
    and
    \[
        g(s) = \frac{n_1}{n_0 + n_1}.
    \]
    The latter follows immediately from the definition of $g(\cdot)$, $n_0$ and $n_1$. The former holds since $\D' \in \lowerCal(x)$ implies
    \begin{align*}
        0
    &=  \sum_{t=1}^{T}(x_t - s)\cdot\D'_t(s)\\
    &=  \sum_{t \in \Tpure}(x_t - s)\cdot\D'_t(s) + \sum_{t \in \Tmixed}(x_t - s)\cdot\D'_t(s)\\
    &=  [n_1 - s\cdot (n_0 + n_1)] + [\eps_1 - s\cdot(\eps_0 + \eps_1)],
    \end{align*}
    which, after rearrangement, gives the expression of $s$.

    We also argue that $\eps_0, \eps_1 \in [0, 2]$. Fix $b \in \{0, 1\}$, and let $t_1, t_2 \in \T^{(b)}$ be the smallest and the largest index $t \in \T^{(b)}$ (with respect to total order $\prec$) such that $\D'_t(s) \ne 0$. By monotonicity, for any $t \in \T^{(b)}$ such that $t_1 \prec t \prec t_2$, the support of $\D'_t$ can only contain $s$, which impies $t \in \Tpure$. Therefore, $\eps_b$ is exactly given by $\sum_{t \in \{t_1, t_2\}}\D'_t(s)$, which clearly lies in $[0, 2]$.

    Then, we have
    \[
        (n_0 + n_1) \cdot \left|s - g(s)\right|
    =   (n_0 + n_1) \cdot \left|\frac{n_1 + \eps_1}{n_0 + n_1 + \eps_0 + \eps_1} - \frac{n_1}{n_0 + n_1}\right|.
    \]
    For fixed $n_0$ and $n_1$, the last expression is maximized when either $(\eps_0, \eps_1) = (2, 0)$ or $(\eps_0, \eps_1) = (0, 2)$. A simple calculation shows that the expression is upper bounded by $2$ in both cases.

    Finally, we note that
    \[
        \sum_{t \in \Tpure}|q_t - \beta_t|
    =   \sum_{s \in S}\sum_{t \in \Tpure}|q_t - \beta_t|\cdot\1{\beta_t = s}
    =   \sum_{s \in S}|g(s) - s|\cdot\sum_{t \in \Tpure}\1{\beta_t = s}.
    \]
    Therefore, the contribution of each $s \in S$ to $\sum_{t \in \Tpure}|q_t - \beta_t|$ is exactly $(n_0 + n_1)\cdot|s - g(s)| \le 2$.
    Thus, we have $\sum_{t \in \Tpure}|q_t - \beta_t| \le 2|S|$.
\end{proof}

\subsection{Proof of the Additive Gap}
With Lemma~\ref{lemma:sparse-rounding} in hand, we prove the first part of Theorem~\ref{thm:lowercaldist-vs-caldist}, which upper bounds the gap between $\caldist(x, p)$ and $\lowercaldist(x, p)$ by $O(\sqrt{T})$.

The proof starts by finding $T$ distributions $\hat\D_1, \ldots, \hat\D_T$ that (approximately) achieve the lower calibration distance $\lowercaldist(x, p)$. We refine these distributions to $\D_1, \ldots, \D_T$, so that: (1) the support of every $\D_t$ is contained in the same set of size $O(\sqrt{T})$; (2) $\D$ still approximately achieves $\lowercaldist(x, p)$ up to an $O(\sqrt{T})$ slack. This allows us to invoke our rounding lemma (Lemma~\ref{lemma:sparse-rounding}) to show $\caldist(x, p) \le \lowercaldist(x, p) + O(\sqrt{T})$.

Before proceeding with the proof below, it would be helpful to recall the connection between the lower calibration distance and optimal transport (Remark~\ref{remark:optimal-transport}). In particular, during the proof we will sometimes refer to the distributions $(\D_1, \ldots, \D_T) \in \lowerCal(x)$ and the corresponding transportation of the bits interchangeably.

\begin{lemma}\label{lemma:sparse-destination-general}
    For any $x \in \{0, 1\}^T$, $p \in [0, 1]^T$, there exists a set $S \subseteq [0, 1]$ of size at most $O(\sqrt{T})$ along with distributions $\D_1, \ldots, \D_T$ over $S$, such that $(\D_1, \D_2, \ldots, \D_T) \in \lowerCal(x)$ and
    \[
        \|p - \D\|_1 \le \lowercaldist(x, p) + O(\sqrt{T}).
    \]
\end{lemma}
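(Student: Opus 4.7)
The plan is to start from a near-optimal witness for the lower calibration distance and consolidate its destinations interval by interval. Fix $\eps > 0$ and pick $\hat\D = (\hat\D_1, \ldots, \hat\D_T) \in \lowerCal(x)$ with $\|p - \hat\D\|_1 \le \lowercaldist(x, p) + \eps$; since the final bound will be additive $O(\sqrt T)$ we can let $\eps \to 0^+$ at the end. Partition $[0,1]$ into $N \coloneqq \lceil \sqrt{T}\rceil$ consecutive sub-intervals $I_1, \ldots, I_N$, each of length at most $1/\sqrt{T}$. For each $i \in [N]$, let $M_i \coloneqq \sum_{t=1}^T \sum_{\beta \in I_i} \hat\D_t(\beta)$ be the total mass landing in $I_i$, and, when $M_i > 0$, define the consolidated destination
\[
    \alpha_i \coloneqq \frac{\sum_{t=1}^T x_t \sum_{\beta \in I_i}\hat\D_t(\beta)}{M_i}.
\]
Then set $\D_t(\alpha_i) \coloneqq \sum_{\beta \in I_i}\hat\D_t(\beta)$ for every $t$ and $i$ (ignoring empty intervals), and take $S \coloneqq \{\alpha_i : M_i > 0\}$, so $|S| \le N = O(\sqrt T)$.

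The key step is to verify that $\alpha_i \in I_i$ and that $\D \in \lowerCal(x)$. For each $\beta$ in the support of $\hat\D$, writing $m_\beta \coloneqq \sum_t \hat\D_t(\beta)$, the hypothesis $\hat\D \in \lowerCal(x)$ gives $\sum_t x_t \hat\D_t(\beta) = \beta m_\beta$. Plugging this into the definition of $\alpha_i$ yields $\alpha_i = \frac{\sum_{\beta \in I_i}\beta m_\beta}{\sum_{\beta \in I_i}m_\beta}$, a convex combination of points in $I_i$, so $\alpha_i \in I_i$. The same identity shows
\[
    \sum_{t=1}^T (x_t - \alpha_i)\D_t(\alpha_i)
    = \sum_{\beta \in I_i}\Bigl[\sum_{t=1}^T x_t \hat\D_t(\beta)\Bigr] - \alpha_i M_i
    = \sum_{\beta \in I_i}\beta m_\beta - \alpha_i M_i = 0,
\]
so the calibration constraint is satisfied at every $\alpha_i$, and trivially at all other $\alpha \in [0,1]$ because the $\D_t$ are supported on $S$.

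It remains to bound the cost. For each $t$ and each $\beta$ in the support of $\hat\D_t$ with $\beta \in I_i$, the triangle inequality gives $\bigl||p_t - \alpha_i| - |p_t - \beta|\bigr| \le |\alpha_i - \beta| \le 1/\sqrt{T}$, since $\alpha_i$ and $\beta$ both lie in $I_i$. Summing over all mass,
\[
    \|p - \D\|_1 - \|p - \hat\D\|_1
    \le \sum_{t=1}^T \sum_{i=1}^N \sum_{\beta \in I_i}\hat\D_t(\beta)\cdot|\alpha_i - \beta|
    \le \frac{1}{\sqrt T}\sum_{t=1}^T\sum_{\beta}\hat\D_t(\beta) = \frac{T}{\sqrt T} = \sqrt T.
\]
Combining with $\|p - \hat\D\|_1 \le \lowercaldist(x,p) + \eps$ and sending $\eps \to 0^+$ gives $\|p-\D\|_1 \le \lowercaldist(x,p) + O(\sqrt T)$, completing the argument.

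\textbf{Anticipated obstacle.} The only nontrivial point is that the consolidated destination $\alpha_i$ must simultaneously live in $I_i$ (to control the per-unit cost increase) and satisfy the calibration equation at $\alpha_i$ after consolidation. Both follow from the single observation that $\hat\D$ is already calibrated point-wise, so the ones-mass at each $\beta \in I_i$ is exactly $\beta m_\beta$; once this is written down, the ratio defining $\alpha_i$ is forced and automatically lands in $I_i$. The rest is a uniform $1/\sqrt T$ pointwise cost bound multiplied by total mass $T$.
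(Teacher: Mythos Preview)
Your proof is correct and follows essentially the same approach as the paper: pick a near-optimal witness $\hat\D$, partition $[0,1]$ into $\Theta(\sqrt T)$ equal-length intervals, replace all destinations in each interval $I_i$ by the single point $\alpha_i$ (the paper's $\mu_i$) defined as the ones-fraction of the mass landing in $I_i$, and use the pointwise calibration of $\hat\D$ to show $\alpha_i \in I_i$, which both restores calibration and bounds the extra cost by $T\cdot(1/\sqrt T)$. The only cosmetic gap is that your definition $\D_t(\alpha_i)\coloneqq\sum_{\beta\in I_i}\hat\D_t(\beta)$ tacitly assumes the $\alpha_i$ are distinct; if two coincide you should sum the contributions, and the calibration check still goes through term by term as in the paper.
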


\begin{proof}
    By definition of the lower calibration distance, there exists $\hat\D = (\hat\D_1, \hat\D_2, \ldots, \hat\D_T) \in \lowerCal(x)$ such that $\|p - \hat\D\|_1 \le \lowercaldist(x, p) + 1$.\footnote{We need the ``$+1$'' term in case that the infimum in the definition of $\lowercaldist(x, p)$ cannot be achieved.}

    Pick an integer $m = \Theta(\sqrt{T})$ and define intervals $\Int_i \coloneqq [(i-1)/m, i/m)$ for $i \in [m - 1]$ and $\Int_m \coloneqq [(m-1)/m, 1]$. Roughly speaking, for each $i \in [m]$, we will examine the bits that are transported into the interval $\Int_i$ according to $\hat\D$. We then consolidate these bits into a single destination. The perfect calibration of $\hat\D$ implies that the new destination still falls into $\Int_i$, so our change increases the cost by at most $T / m = O(\sqrt{T})$. Furthermore, the new transportation only involves at most $m$ different destinations (one for each interval $\Int_i$).

    \paragraph{The new destinations.} For each $i \in [m]$, we define
    \[
        \mu_i \coloneqq \frac{\sum_{t=1}^{T}x_t\cdot\hat\D_t(\Int_i)}{\sum_{t=1}^{T}\hat\D_t(\Int_i)}
    =   \frac{\sum_{\alpha \in \Int_i}\sum_{t=1}^{T}x_t\cdot\hat\D_t(\alpha)}{\sum_{\alpha \in \Int_i}\sum_{t=1}^{T}\hat\D_t(\alpha)}.
    \]
    Note that by definition of $\lowerCal(x)$, each $\hat\D_t$ has a finite support, so both $\sum_{t=1}^{T}x_t\cdot\hat\D_t(\alpha)$ and $\sum_{t=1}^{T}\hat\D_t(\alpha)$ take non-zero values only on finitely many choices of $\alpha$. Therefore, the summations over $\alpha$ in the last expression above are actually finite and thus well-defined.
    
    We argue that for every $i \in [m]$, $\mu_i$ falls into $[(i-1)/m, i/m]$. Indeed, we can re-write $\mu_i$ as
    \begin{equation}\label{eq:mu-convex-comb}
        \mu_i
    =   \sum_{\alpha \in \Int_i}\frac{\sum_{t=1}^{T}x_t\cdot\hat\D_t(\alpha)}{\sum_{\beta \in \Int_i}\sum_{t=1}^{T}\hat\D_t(\beta)}
    =   \sum_{\alpha \in \Int_i}\frac{\sum_{t=1}^{T}\hat\D_t(\alpha)}{\sum_{\beta \in \Int_i}\sum_{t=1}^{T}\hat\D_t(\beta)}\cdot \frac{\sum_{t=1}^{T}x_t\cdot\hat\D_t(\alpha)}{\sum_{t=1}^{T}\hat\D_t(\alpha)}.
    \end{equation}
    For each $\alpha \in \Int_i$, it follows from $\hat\D \in \lowerCal(x)$ that $\sum_{t=1}^{T}(x_t - \alpha)\cdot\hat\D_t(\alpha) = 0$
    and, equivalently, $\frac{\sum_{t=1}^{T}x_t \cdot\hat\D_t(\alpha)}{\sum_{t=1}^{T}\hat\D_t(\alpha)} = \alpha \in \Int_i$. Therefore, Equation~\eqref{eq:mu-convex-comb} states that $\mu_i$ is a convex combination of values that lie in interval $\Int_i$, which ensures that $(i-1)/m \le \mu_i \le i / m$.

    \paragraph{The updated transportation.} We define another $T$ distributions $\D_1, \D_2, \ldots, \D_T$ as follows. Let $\phi:[0, 1]\to[0,1]$ be the function that maps every value in each $\Int_i$ to $\mu_i$. Then, each $\D_t$ is defined as the distribution of $\phi(q_t)$ when $q_t \sim \hat\D_t$. We will argue that $\D = (\D_1, \ldots, \D_T)$ is in $\lowerCal(x)$ and that $\|p - \D\|_1$ is comparable to the cost $\|p - \hat\D\|_1$.

    To show that $\D \in \lowerCal(x)$, we note that for any $\alpha \in [0, 1]$, we have
    \begin{align*}
        \sum_{t=1}^{T}(x_t - \alpha)\cdot\D_t(\alpha)
    &=  \sum_{t=1}^{T}(x_t - \alpha)\cdot\pr{q_t \sim \hat\D_t}{\phi(q_t) = \alpha} \tag{definition of $\D_t$}\\
    &=  \sum_{t=1}^{T}(x_t - \alpha)\cdot\sum_{i=1}^{m}\1{\mu_i = \alpha}\cdot\hat\D_t(\Int_i) \tag{definition of $\phi$}\\
    &=  \sum_{i=1}^{m}\1{\mu_i = \alpha}\cdot\left[\sum_{t=1}^{T}x_t\cdot\hat\D_t(\Int_i) - \mu_i \cdot \sum_{t=1}^{T}\hat\D_t(\Int_i)\right]
    =   0. \tag{definition of $\mu_i$}
    \end{align*}
    This proves $\D \in \lowerCal(x)$.

    To show the latter property, we note that since $\phi(x)$ and $x$ always fall into the same interval $\Int_i$, we have $|\phi(x) - x| \le 1 / m$ for every $x \in [0, 1]$. Then,
    \begin{align*}
        \sum_{t=1}^{T}\Ex{q_t \sim \D_t}{|p_t - q_t|}
    &=  \sum_{t=1}^{T}\Ex{q_t \sim \hat\D_t}{|p_t - \phi(q_t)|} \tag{definition of $\D_t$}\\
    &\le\sum_{t=1}^{T}\Ex{q_t \sim \hat\D_t}{|p_t - q_t|} + \sum_{t=1}^{T}\Ex{q_t \sim \hat\D_t}{|q_t - \phi(q_t)|} \tag{triangle inequality}\\
    &\le[\lowercaldist(x, p) + 1] + T\cdot \frac{1}{m} \tag{choice of $\hat\D$ and $|x - \phi(x)| \le 1/m$}\\
    &\le\lowercaldist(x, p) + O(\sqrt{T}). \tag{$m = \Theta(\sqrt{T})$}
    \end{align*}
    Finally, note that each $\D_t$ is over the same set of size $\le m = O(\sqrt{T})$, namely, $\{\mu_1, \mu_2, \ldots, \mu_m\}$. This concludes the proof.
\end{proof}

The first bound in Theorem~\ref{thm:lowercaldist-vs-caldist} then follows easily.

\begin{proof}[Proof of the first part of Theorem~\ref{thm:lowercaldist-vs-caldist}]
    By Lemma~\ref{lemma:sparse-destination-general}, there exists $\D = (\D_1, \D_2, \ldots, \D_T) \in \lowerCal(x)$ such that $\|p - \D\|_1 \le \lowercaldist(x, p) + O(\sqrt{T})$, and each $\D_t$ is over the same set of size $O(\sqrt{T})$. By Lemma~\ref{lemma:sparse-rounding}, we have
    \[
        \caldist(x, p)
    \le \|p - \D\|_1 + O(\sqrt{T})
    \le \lowercaldist(x, p) + O(\sqrt{T}).
    \]
\end{proof}

\subsection{Approximation Guarantee in the Sparse Case}\label{sec:approx-sparse}
Now we deal with the second part of Theorem~\ref{thm:lowercaldist-vs-caldist}, where we have a prediction sequence $p \in [0, 1]^T$ with only $m$ different entries. In order to invoke our Lemma~\ref{lemma:sparse-rounding}, however, we need to show that the lower calibration distance $\lowercaldist(x, p)$ can be approximately achieved by distributions $\D_1, \ldots, \D_T$ over a small set $S \subset [0, 1]$ (more concretely, of size $O(m)$). This step, stated as the lemma below, turns out to be much more complicated.

\begin{lemma}\label{lemma:sparse-destination}
    For any $x \in \{0, 1\}^T$, $p \in [0, 1]^T$, $m = |\{p_1, p_2, \ldots, p_T\}|$ and $\eps > 0$, there exists a set $S \subset [0, 1]$ of size at most $2m + 3$ along with distributions $\D_1, \ldots, \D_T$ over $S$, such that
    \[
        (\D_1, \D_2, \ldots, \D_T) \in \lowerCal(x)
    \]
    and
    \[
        \|p - \D\|_1 \le 20\cdot\lowercaldist(x, p) + \eps.
    \]
\end{lemma}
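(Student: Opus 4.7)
The plan is to start with a witness $\hat\D \in \lowerCal(x)$ satisfying $\|p - \hat\D\|_1 \le \lowercaldist(x, p) + \eps/20$, and then surgically modify it into some $\D \in \lowerCal(x)$ whose support is confined to a set $S$ with $|S| \le 2m+3$, while inflating the cost by at most a constant factor. Let $s_1 < s_2 < \cdots < s_m$ be the distinct entries of $p$, and augment with $s_0 = 0$, $s_{m+1} = 1$, so that $[0,1]$ is partitioned into $m+1$ gaps $(s_i, s_{i+1})$. The target support is $S = \{s_0, s_1, \ldots, s_{m+1}\} \cup \{\gamma_0, \gamma_1, \ldots, \gamma_m\}$, where each $\gamma_i \in (s_i, s_{i+1})$ is a single new ``interior'' destination to be chosen per gap, for a total of $(m+2) + (m+1) = 2m+3$ points.

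First I would perform a lossless \emph{phase decomposition} of $\hat\D$ as suggested in the proof sketch: for each $t$ and each $\alpha \in \mathrm{supp}(\hat\D_t)$ lying in a gap $(s_i, s_{i+1})$ that does not contain $p_t$, split the transport $p_t \to \alpha$ into a first phase from $p_t$ to the nearer endpoint of the gap ($s_i$ if $p_t \le s_i$, $s_{i+1}$ if $p_t \ge s_{i+1}$) and a second phase from that endpoint to $\alpha$. Collinearity on $[0,1]$ makes the total cost of the two phases equal to $|p_t - \alpha|$, so the decomposition is lossless. The first-phase mass already lives on $\{s_0, \ldots, s_{m+1}\} \subseteq S$ and is kept verbatim in $\D$. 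What remains is, gap by gap, to replace the second-phase transports (which originate only from $s_i$ and $s_{i+1}$ and land in $(s_i, s_{i+1})$) by transports supported on $\{s_i, \gamma_i, s_{i+1}\}$ that preserve calibration at all three points and pay at most a constant multiple of the original in-gap cost.

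The main obstacle is this per-gap consolidation. Fix a gap $(s_i, s_{i+1})$ and let $(L_0, L_1)$ and $(R_0, R_1)$ denote the zero- and one-mass leaving $s_i$ and $s_{i+1}$ respectively in the second phase of $\hat\D$. Any mass redirected back to $s_i$ (resp.\ $s_{i+1}$) must have one-fraction equal to $s_i$ (resp.\ $s_{i+1}$) in order not to disturb the preexisting calibration there, which limits how much $L$- or $R$-mass can be absorbed at the endpoints. Whatever remains must be routed to the single new destination $\gamma_i$, whose position is then forced by the calibration equation to be the one-fraction of that remaining mass. To certify that the resulting new in-gap cost is at most $20$ times the original, I would invoke Lemma~\ref{lemma:smCE-vs-lowercaldist}: whenever the configuration $(L_0, L_1, R_0, R_1)$ is ``bad'' in the sense of forcing $\gamma_i$ to lie far from either endpoint while the original $\hat\D$ concentrated its mass near the endpoints, a $1$-Lipschitz hat function supported on $[s_i, s_{i+1}]$ with peak at the appropriate location witnesses a proportional contribution to $\smCE(x,p)$, which in turn lower-bounds $\lowercaldist(x,p)$ up to a factor of $2$. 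The argument reduces to a short case analysis on the signs and magnitudes of the residual biases $L_1 - s_i(L_0+L_1)$ and $R_1 - s_{i+1}(R_0+R_1)$ and on the relative sizes of $L_0+L_1$ versus $R_0+R_1$, with an explicit routing and an explicit $\smCE$ witness in each case. Summing the per-gap bounds yields $\|p - \D\|_1 \le 20\lowercaldist(x,p) + \eps$, where the $\eps$ slack absorbs the $\eps/20$ used in the initial approximation of $\lowercaldist$ and any bookkeeping needed for $\hat\D_t$'s with large (but still finite) supports; combined with $|S| \le 2m+3$, this completes the proof. The case analysis in the per-gap consolidation, with its careful matching of $\smCE$ witnesses to routings, is where I expect most of the technical work to live.
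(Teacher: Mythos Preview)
Your overall plan matches the paper's approach closely: start from a near-optimal witness $\hat\D$, perform the lossless two-phase decomposition so that second-phase mass enters each gap $(s_i,s_{i+1})$ only from the endpoints, then consolidate that second-phase mass onto at most three points (the two endpoints plus one interior $\gamma_i$) via a case analysis on the signs of the residual biases $L_1 - s_i(L_0+L_1)$ and $R_1 - s_{i+1}(R_0+R_1)$. This is exactly the skeleton of the paper's proof, and the quantities you single out are the right ones.

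There is, however, a genuine gap in how you plan to lower-bound the original in-gap cost. You propose to invoke Lemma~\ref{lemma:smCE-vs-lowercaldist} with a hat function supported on $[s_i,s_{i+1}]$ to witness a contribution to the \emph{global} $\smCE(x,p)$ and hence to the \emph{global} $\lowercaldist(x,p)$. This only yields ``new in-gap cost $\le C\cdot\lowercaldist(x,p)$'' per gap, and summing over the $m+1$ gaps introduces an unwanted factor of $m$. You also cannot add the per-gap witnesses into one $1$-Lipschitz function: if they vanish at the shared endpoints $s_j$ the sum is $1$-Lipschitz but then $f(p_t)=0$ for every $t$ and the witness is vacuous; if they do not vanish there the sum fails to be $1$-Lipschitz. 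What the paper does instead is apply the \emph{same duality argument} that underlies Lemma~\ref{lemma:smCE-vs-lowercaldist}, but \emph{locally to the second-phase transport}: it proves directly that for every $1$-Lipschitz $f$,
\[
    2\bigl(\costli_i + \costri_i\bigr) \;\ge\; f(s_i)\,\Deltal_i + f(s_{i+1})\,\Deltar_i,
\]
where $\Deltal_i,\Deltar_i$ are precisely your residual biases and $\costli_i+\costri_i$ is the original second-phase cost in gap $i$. Maximizing over $f$ (isolated as a separate two-point lemma) and combining with two elementary inequalities---one per opposite-sign sub-case, the harder one requiring the better of two routings (merge at $\gamma_i$ versus send zeros to $0$ and ones to $1$)---gives ``new in-gap cost $\le 20(\costli_i+\costri_i)$'', which \emph{does} sum correctly across gaps. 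So your case-analysis plan is right, but the Lipschitz witness must act on the local second-phase problem, not on the global $\smCE(x,p)$.
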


We first show how Lemmas \ref{lemma:sparse-rounding}~and~\ref{lemma:sparse-destination} together imply the second part of Theorem~\ref{thm:lowercaldist-vs-caldist}.

\begin{proof}[Proof of the second part of Theorem~\ref{thm:lowercaldist-vs-caldist}]
    Applying Lemma~\ref{lemma:sparse-destination} with $\eps = 1$ gives a set $S \subset [0, 1]$ of size $\le 2m + 3$, together with distributions $\D_1, \ldots, \D_T$ over $S$ such that $\D \in \lowerCal(x)$ and 
    \[
        \|p - \D\|_1 \le 20 \cdot \lowercaldist(x, p) + 1.
    \]
    Then, by Lemma~\ref{lemma:sparse-rounding},
    \[
        \caldist(x, p)
    \le \|p - \D\|_1 + 4 \cdot (2m + 3)
    \le 20 \cdot \lowercaldist(x, p) + (8m + 13).
    \]
\end{proof}

Before we prove Lemma~\ref{lemma:sparse-destination}, again, we recommend the reader to review Remark~\ref{remark:optimal-transport}. In the following proof, we will frequently mention the transportation of the bits in lieu of the explicit expressions for the probability distributions.

As in the proof of the first part of Theorem~\ref{thm:lowercaldist-vs-caldist}, we start by picking $\hat\D \in \lowerCal(x)$ that approximately achieves $\lowercaldist(x, p)$, and then consolidate the transportation specified by $\hat\D$ such that there will be at most $O(m)$ destinations. 

Let $s_1 < s_2 < \cdots < s_m$ be the $m$ values that appear in the entries of $p$. A natural first attempt would be to examine the bits that are transported into each interval $[s_i, s_{i+1}]$ and merge them to a single destination. Unfortunately, as we show in Appendix~\ref{sec:approximation-proofs}, this na\"ive consolidation could blow up the cost. Instead, our proof of Lemma~\ref{lemma:sparse-destination} involves a much more complicated case analysis based on the amounts of zeros and ones being transported into $[s_i, s_{i+1}]$ on both directions.

\begin{proof}[Proof of Lemma~\ref{lemma:sparse-destination}]
    By definition of $\lowercaldist(x, p)$, there exist $T$ distributions $\hat\D_1, \hat\D_2, \ldots, \hat\D_T$ such that:
    \begin{itemize}
        \item Each $\hat\D_t$ is supported over a finite subset of $[0, 1]$.
        \item $\hat\D_1, \ldots, \hat\D_T$ are perfectly calibrated, i.e., $\sum_{t=1}^{T}(x_t - \alpha)\cdot\hat\D_t(\alpha) = 0$ holds for every $\alpha \in [0, 1]$.
        \item The cost is close to $\lowercaldist(x, p)$: $\|p - \hat\D\|_1 \le \lowercaldist(x, p) + \eps / 20$.\footnote{Again, we need the ``$\eps/20$'' term because the infimum in the definition of $\lowercaldist(x, p)$ might not be achieved by any $\D \in \lowerCal(x)$.}
    \end{itemize}

   \paragraph{Proof overview.} Let $0 = s_1 < s_2 < \cdots < s_{m'} = 1$ be the distinct values among $\{p_1, p_2, \ldots, p_T\} \cup \{0, 1\}$. Note that $m' \le |\{p_1, p_2, \ldots, p_T\}| + 2 = m + 2$. We will transform $\hat\D_1, \ldots, \hat\D_T$ into another $T$ distributions, denoted by $\D_1, \ldots, \D_T$, over a set $S \subset [0, 1]$, such that:
   \begin{itemize}
       \item $|S| \le 2m'-1 \le 2m + 3$;
       \item $(\D_1, \ldots, \D_T) \in \lowerCal(x)$;
       \item $\|p - \D\|_1 \le 20\cdot\|p - \hat\D\|_1$.
    \end{itemize}
    Note that doing so would prove the lemma, since the last condition implies that
    \[
        \|p - \D\|_1
    \le 20\|p - \hat\D\|_1
    \le 20 \cdot (\lowercaldist(x, p) + \eps / 20)
    =   20 \cdot \lowercaldist(x, p) + \eps.
    \]

    To ensure the first property, we examine the probability masses that $\hat \D_1$ through $\hat \D_T$ assign to the interval $(s_i, s_{i+1})$ for each $i \in [m' - 1]$. These can be interpreted as a way of transporting certain fractions of the bits $x_1, \ldots, x_T$ to the interval, so that the resulting configuration is calibrated. A priori, the bits might be transported to many different destinations within the interval $(s_i, s_{i+1})$. We will reroute the transportation, so that the bits will only arrive at five different destinations: $0$, $1$, $s_i$, $s_{i+1}$, and another unique value assigned for this interval. In the end, the supports of $\D_1$ through $\D_T$ will be among $s_1, s_2, \ldots, s_{m'}$ along with $m'-1$ other values. Therefore, the corresponding set $S$ will have size at most $2m'-1$.

    \paragraph{Decomposition of costs.} Our first step is to decompose the cost $\|p - \hat\D\|_1$ into a few parts. Let $\Int_i \coloneqq [s_i, s_{i+1})$ for every $i \in [m' - 2]$ and $\Int_{m'-1} \coloneqq [s_{m'-1}, s_{m'}] = [s_{m'-1}, 1]$. The total cost associated with interval $\Int_i$ is defined as
    \[
        \cost_i \coloneqq \sum_{t=1}^{T}\Ex{q_t \sim \hat \D_t}{|p_t - q_t|\cdot\1{q_t \in \Int_i}}.
    \]
    Furthermore, for each interval $\Int_i$, we decompose the cost according to whether the transportation is from the left or from the right (we view the interval $[0, 1]$ as a line segment in which the small values lie on the left):
    \begin{align*}
        &~\costl_i \coloneqq \sum_{t=1}^{T}\Ex{q_t \sim \hat \D_t}{|p_t - q_t|\cdot\1{q_t \in \Int_i\wedge p_t \le s_i}},\\
        &~\costr_i \coloneqq \sum_{t=1}^{T}\Ex{q_t \sim \hat \D_t}{|p_t - q_t|\cdot\1{q_t \in \Int_i\wedge p_t \ge s_{i+1}}}.
    \end{align*}
    Finally, note that whenever the condition $q_t \in \Int_i \wedge p_t \le s_i$ holds in the definition of $\costl_i$, we have $p_t \le s_i \le q_t$, which gives $|p_t - q_t| = |p_t - s_i| + |q_t - s_i|$. Therefore, we can write $\costl_i = \costlo_i + \costli_i$, where
    \begin{align*}
        \costlo_i \coloneqq \sum_{t=1}^{T}\Ex{q_t \sim \hat \D_t}{|p_t - s_i|\cdot\1{q_t \in \Int_i\wedge p_t \le s_i}},\\
        \costli_i \coloneqq \sum_{t=1}^{T}\Ex{q_t \sim \hat \D_t}{|q_t - s_i|\cdot\1{q_t \in \Int_i\wedge p_t \le s_i}}.
    \end{align*}
    Here the superscripts ``$\sfo$'' and ``$\sfi$'' specify whether the cost is for the transportation outside or inside the interval $\Int_i$.
    Similarly, we decompose $\costr_i$ into the following two terms:
    \begin{align*}
        \costro_i \coloneqq \sum_{t=1}^{T}\Ex{q_t \sim \hat \D_t}{|p_t - s_{i+1}|\cdot\1{q_t \in \Int_i\wedge p_t \ge s_{i+1}}},\\
        \costri_i \coloneqq \sum_{t=1}^{T}\Ex{q_t \sim \hat \D_t}{|q_t - s_{i+1}|\cdot\1{q_t \in \Int_i\wedge p_t \ge s_{i+1}}}.
    \end{align*}
    Our use of the word ``decompose'' can be justified by the following identity:
    \begin{align*}
        \sum_{t=1}^{T}\Ex{q_t\sim\hat\D_t}{|p_t - q_t|}
    &=  \sum_{i=1}^{m'-1}\cost_i\\
    &=  \sum_{i=1}^{m'-1}(\costl_i + \costr_i)\\
    &=  \sum_{i=1}^{m'-1}(\costlo_i +\costli_i + \costro_i + \costri_i).
    \end{align*}
    The first step holds since $\Int_1, \ldots, \Int_{m'-1}$ form a partition of $[0, 1]$, which implies $1 = \sum_{i=1}^{m'-1}\1{x \in \Int_i}$ for any $x \in [0, 1]$. The second step follows from the observation that $p_t$ never falls into $(s_i, s_{i+1})$, so we have $\1{q_t \in \Int_i} = \1{q_t \in \Int_i \wedge p_t \le s_i} + \1{q_t \in \Int_i \wedge p_t \ge s_{i+1}}$.

    \paragraph{Lower bound the cost of the second phase.} For $b \in \{0, 1\}$, let $\unitl_{i,b}$ denote the amount of bit $b$ that is transported into interval $\Int_i$ from $[0, s_i]$. Formally,
    \[
        \unitl_{i,b}
    \coloneqq \sum_{t=1}^{T}\hat\D_t(\Int_i)\cdot\1{x_t = b \wedge p_t \le s_i}.
    \]
    Similarly, $\unitr_{i,b}$ is defined as the amount of bit $b$ moved from $[s_{i+1}, 1]$ to $\Int_i$:
    \[
        \unitr_{i,b}
    \coloneqq \sum_{t=1}^{T}\hat\D_t(\Int_i)\cdot\1{x_t = b \wedge p_t \ge s_{i+1}}.
    \]
    Intuitively, $\hat\D_1$ through $\hat\D_T$ specify the following transportation of bits:
    \begin{itemize}
        \item For each $i \in [m'-1]$, we spend a total cost of $\costlo_i + \costro_i$ to transport zeros and ones from $[0, s_i] \cup [s_{i+1}, 1]$ to either $s_i$ and $s_{i+1}$ (``the first phase'').
        \item At this point, there are $\unitl_{i,0}$ (resp., $\unitl_{i,1}$) units of zeros (resp., ones) at $s_i$, and $\unitr_{i,b}$ units of bit $b$ at $s_{i+1}$.
        \item Then, we further distribute these bits to values within $\Int_i$ so that the outcomes are calibrated (``the second phase''), at a total cost of $\costli_i + \costri_i$.
    \end{itemize}

    The distributions $\D_1, \ldots, \D_T$ that we will define is based on a new transportation that keeps the total cost of the first phase (outside of $\Int_i$). We will change the second phase, so that there will be at most one destination outside $\{s_1, s_2, \ldots, s_{m'}\}$. Furthermore, we make sure that this change only blows up the cost of the second phase by a constant factor.

    For this purpose, we start by lower bounding $\costli_i + \costri_i$. 
    Let $\Deltal_i \coloneqq \unitl_{i,1} - (\unitl_{i,0} + \unitl_{i,1}) \cdot s_i$ and $\Deltar_i \coloneqq \unitr_{i,1} - (\unitr_{i,0} + \unitr_{i,1}) \cdot s_{i+1}$ denote the biases incurred at point $s_i$ and $s_{i+1}$ between the first and the second phases. We will prove the following inequality: For any $1$-Lipschitz function $f:[0,1]\to[-1, 1]$,
    \begin{equation}\label{eq:cost-vs-smCE}
        2\left(\costli_i + \costri_i\right) \ge f(s_i)\cdot\Deltal_i + f(s_{i+1})\cdot\Deltar_i.
    \end{equation}
    The following proof is the same as the one in~\cite{BGHN23} for lower bounding the lower distance from calibration by the smooth calibration error. We include the proof for completeness.

    Fix a $1$-Lipschitz function $f:[0,1]\to[-1, 1]$. Consider the function $g_b(v) \coloneqq f(v)\cdot (b - v)$ defined over $[0, 1]$ for $b \in \{0, 1\}$. Since $|g_b'(v)| = |f'(v)(b - v) - f(v)| \le 2$ for any $v \in [0, 1]$, $g_b$ is $2$-Lipschitz. Then, we have
    \begin{align*}
        2\costli_i
    &=  \sum_{t=1}^{T}\Ex{q_t \sim \hat\D_t}{2|q_t - s_i|\cdot\1{q_t \in \Int_i \wedge p_t \le s_i}}\\
    &\ge\sum_{t=1}^{T}\Ex{q_t \sim \hat\D_t}{[f(s_i)\cdot(x_t - s_i) - f(q_t)\cdot(x_t - q_t)]\cdot\1{q_t \in \Int_i \wedge p_t \le s_i}} \tag{$v\mapsto f(v)\cdot(x_t - v)$ is $2$-Lipschitz}\\
    &=  \sum_{t=1}^{T}\Ex{q_t \sim \hat\D_t}{f(s_i)\cdot(x_t - s_i)\cdot\1{q_t \in \Int_i \wedge p_t \le s_i}}\\
    &- \sum_{t=1}^{T}\Ex{q_t \sim \hat\D_t}{f(q_t)\cdot(x_t - q_t)\cdot\1{q_t \in \Int_i \wedge p_t \le s_i}}.
    \end{align*}
    The first summation in the last expression above can be further simplified into:
    \begin{align*}
        &~f(s_i)\cdot\sum_{t=1}^{T}\Ex{q_t \sim \hat\D_t}{x_t\cdot\1{q_t \in \Int_i \wedge p_t \le s_i}} - f(s_i)\cdot s_i\cdot \sum_{t=1}^{T}\Ex{q_t \sim \hat\D_t}{\1{q_t \in \Int_i \wedge p_t \le s_i}}\\
    =   &~f(s_i)\cdot\sum_{t=1}^{T}\hat\D_t(\Int_i)\cdot\1{x_t = 1 \wedge p_t \le s_i} - f(s_i)\cdot s_i\cdot \sum_{t=1}^{T}\hat\D_t(\Int_i)\cdot\1{p_t \le s_i}\\
    =   &~f(s_i)\cdot[\unitl_{i,1} - s_i\cdot(\unitl_{i,0} + \unitl_{i,1})]
    =   f(s_i) \cdot \Deltal_i,
    \end{align*}
    and thus,
    \begin{equation}\label{eq:costli-lower-bound}
        2\costli_i
    \ge f(s_i)\cdot\Deltal_i - \sum_{t=1}^{T}\Ex{q_t \sim \hat\D_t}{f(q_t)\cdot(x_t - q_t)\cdot\1{q_t \in \Int_i \wedge p_t \le s_i}}.
    \end{equation}
    An analogous argument gives
    \begin{equation}\label{eq:costri-lower-bound}
        2\costri_i
    \ge f(s_{i+1})\cdot\Deltar_i - \sum_{t=1}^{T}\Ex{q_t \sim \hat\D_t}{f(q_t)\cdot(x_t - q_t)\cdot\1{q_t \in \Int_i \wedge p_t \ge s_{i+1}}}.
    \end{equation}
    Finally, Inequality~\eqref{eq:cost-vs-smCE} follows from Inequalities \eqref{eq:costli-lower-bound}~and~\eqref{eq:costri-lower-bound}, together with the observation that the two summations on the right-hand sides of \eqref{eq:costli-lower-bound}~and~\eqref{eq:costri-lower-bound} sum up to
    \begin{align*}
        \sum_{t=1}^{T}\Ex{q_t \sim \hat\D_t}{f(q_t)\cdot(x_t - q_t)\cdot\1{q_t \in \Int_i}}
    &=  \sum_{\alpha \in \Int_i}\sum_{t=1}^{T}\Ex{q_t \sim \hat\D_t}{f(q_t)\cdot(x_t - q_t)\cdot\1{q_t = \alpha}}\\
    &=  \sum_{\alpha \in \Int_i}f(\alpha)\cdot\sum_{t=1}^{T}(x_t - \alpha)\cdot\hat\D_t(\alpha)
    =   0,
    \end{align*}
    where the last step follows from $\hat D \in \lowerCal(x)$.

    Given Inequality~\eqref{eq:cost-vs-smCE}, we apply Lemma~\ref{lemma:two-point-smCE} from Appendix~\ref{sec:approximation-proofs} to lower bound $\costli_i + \costri_i$ by a closed-form expression of $\Deltal_i$, $\Deltar_i$, $s_i$, and $s_{i+1}$.
    \begin{equation}\label{eq:cost-vs-biases}
        2\left(\costli_i + \costri_i\right)
    \ge \begin{cases}
        |\Deltal_i| + |\Deltar_i|, & \Deltal_i\cdot\Deltar_i \ge 0,\\
        |\Deltal_i + \Deltar_i| + (s_{i+1} - s_i)\cdot\min\{|\Deltal_i|, |\Deltar_i|\}, & \Deltal_i\cdot\Deltar_i < 0.
    \end{cases}
    \end{equation}

    \paragraph{Handle the same-sign situation.} It remains to change the second phase of the transportation inside interval $\Int_i$, so that there will be at most one destination (in addition to $s_1, s_2, \ldots, s_{m'}$), while the cost is bounded by $20\cdot(\costli_i + \costri_i)$.

    We start by noting that we may assume $\min\left\{\unitl_{i,0}, \unitl_{i,1}\right\} = \min\left\{\unitr_{i,0}, \unitr_{i,1}\right\} = 0$ without loss of generality. This is because, for example, when both $\unitl_{i,0}$ and $\unitl_{i,1}$ are positive, we may take $\mu \coloneqq \min\{\unitl_{i,0} / (1 - s_i), \unitl_{i,1} / s_i\}$, and let $\mu\cdot s_i$ units of ones and $\mu\cdot(1 - s_i)$ units of zeros be ``settled'' at $s_i$. After this, either $\unitl_{i,0}$ or $\unitl_{i,1}$ becomes zero, and the quantity $\Deltal_i$ is unchanged. The same argument applies to $\unitr_{i,0}$ and $\unitr_{i,1}$ as well.

    We first deal with the case that $\Deltal_i, \Deltar_i \ge 0$. In this case, we have $\unitl_{i,0} = \unitr_{i,0} = 0$, i.e., there are no extra zeros at either $s_i$ or $s_{i+1}$, though there might be extra ones. We will transport these ones to $1$, at a cost of
    \[
        \unitl_{i,1}\cdot(1 - s_i) + \unitr_{i,1}\cdot(1 - s_{i+1})
    =   \Deltal_i + \Deltar_i
    =   |\Deltal_i| + |\Deltar_i|
    \le 2\left(\costli_i + \costri_i\right).
    \]
    The last step above follows from Equation~\eqref{eq:cost-vs-biases}. Similarly, if $\Deltal_i, \Deltar_i \le 0$, we have $\unitl_{i,1} = \unitr_{i,1} = 0$. We will transport all the extra zeros to $0$, and the total cost will be
    \[
        \unitl_{i, 0}\cdot s_i + \unitr_{i, 0}\cdot s_{i+1}
    =   -\Deltal_i - \Deltar_i
    =   |\Deltal_i| + |\Deltar_i|
    \le 2\left(\costli_i + \costri_i\right).
    \]

    In both cases, we settle all the bits that were originally associated with interval $\Int_i$ at a total cost of at most $2\cost_i$. Furthermore, all the destinations lie in the set $\{0, s_i, s_{i+1}, 1\}$.

    \paragraph{Handling opposite signs, the first part.} The case that $\Deltal_i \cdot \Deltar_i < 0$ is more involved. We first deal with the case that $\Deltal_i > 0$ and $\Deltar_i < 0$. Recall that we assumed $\min\{\unitl_{i,0}, \unitl_{i,1}\} = \min\{\unitr_{i,0}, \unitr_{i,1}\} = 0$ without loss of generality. This means that $\unitl_{i,1}, \unitr_{i,0} > 0$, while $\unitl_{i,0} = \unitr_{i,1} = 0$.

    We shorthand $x \coloneqq \unitl_{i,1}$ and $y \coloneqq \unitr_{i,0}$. Our strategy is to move all the bits---the $x$ units of ones at $s_i$ and the $y$ units of zeros at $s_{i+1}$---to value $p \coloneqq \frac{x}{x + y}$. The total cost would be
    \[
        x \cdot\left|p - s_i\right| + y\cdot\left|p - s_{i+1}\right|
    =   (x + y)\cdot \left[p\cdot|p - s_i| + (1 - p)\cdot|p - s_{i+1}|\right].
    \]

    Note that $\Deltal_i = x\cdot(1 - s_i)$ and $\Deltar_i = -y\cdot s_{i+1}$. The right-hand side of Inequality~\eqref{eq:cost-vs-biases} can then be re-written as
    \begin{align*}
        &~\left|x\cdot(1 - s_i) - y\cdot s_{i+1}\right| + (s_{i+1} - s_i)\cdot\min\{x\cdot(1 - s_i), y\cdot s_{i+1}\}\\
    =   &~(x + y)\cdot\left[\left|p\cdot(1-s_i) - (1-p)\cdot s_{i+1}\right| + (s_{i+1} - s_i)\cdot\min\{p\cdot(1-s_i), (1-p)\cdot s_{i+1}\}\right].
    \end{align*}

    Then, applying Lemma~\ref{lemma:technical-ineq-1} from Appendix~\ref{sec:approximation-proofs} with $\alpha = s_i$ and $\beta = s_{i+1}$ shows that the cost of the new transportation is at most
    \[
        2 \cdot \left[|\Deltal_i + \Deltar_i| + (s_{i+1} - s_i)\cdot\min\{|\Deltal_i|, |\Deltar_i|\}\right]
    \le 4\cdot\left(\costli_i + \costri_i\right).
    \]

    \paragraph{Handling opposite signs, the second part.} It remains to handle the case that $\Deltal_i < 0$ and $\Deltar_i > 0$. In this case, we have $\unitl_{i,0}, \unitr_{i,1} > 0$, while $\unitl_{i,1} = \unitr_{i,0} = 0$.

    Again, shorthand $x \coloneqq \unitl_{i,0}$ and $y \coloneqq \unitr_{i,1}$. The key difference is that we will consider the following two strategies, and use the one with a lower cost:
    \begin{itemize}
        \item \textbf{Strategy 1:} Again, move all the bits---the $x$ units of zeros at $s_i$ and the $y$ units of ones at $s_{i+1}$---to value $p \coloneqq \frac{y}{x + y}$. The total cost would be
        \[
            x \cdot\left|\frac{y}{x+y} - s_i\right| + y\cdot\left|\frac{y}{x+y} - s_{i+1}\right|
        =   (x + y)\cdot \left[(1 - p)\cdot|p - s_i| + p\cdot|p - s_{i+1}|\right].
        \]
        \item \textbf{Strategy 2:} Move all the zeros at $s_i$ to $0$, and all the ones at $s_{i+1}$ to $1$. The total cost is
        \[
            x\cdot s_i + y\cdot (1 - s_{i+1})
        =   (x + y) \cdot \left[(1 - p) \cdot s_i + p \cdot (1 - s_{i+1})\right].
        \]
    \end{itemize}

    In this case, $\Deltal_i = -x \cdot s_i$, $\Deltar_i = y\cdot (1 - s_{i+1})$, and the right-hand side of Inequality~\eqref{eq:cost-vs-biases} is given by
    \begin{align*}
        &~\left|x\cdot s_i - y\cdot (1 - s_{i+1})\right| + (s_{i+1} - s_i)\cdot\min\{x\cdot s_i, y\cdot (1 - s_{i+1})\}\\
    =   &~(x + y)\cdot\left[\left|(1 - p)\cdot s_i - p\cdot (1 - s_{i+1})\right| + (s_{i+1} - s_i)\cdot\min\{(1 - p)\cdot s_i, p\cdot (1 - s_{i+1})\}\right].
    \end{align*}

    We apply Lemma~\ref{lemma:technical-ineq-2} from Appendix~\ref{sec:approximation-proofs} with $\alpha = s_i$ and $\beta = s_{i+1}$ to show that the cost of the new transportation is at most
    \[
        10 \cdot \left[|\Deltal_i + \Deltar_i| + (s_{i+1} - s_i)\cdot\min\{|\Deltal_i|, |\Deltar_i|\}\right]
    \le 20\cdot\left(\costli_i + \costri_i\right).
    \]
\end{proof}

\section{Proof of the Upper Bound}\label{sec:upper}
We prove Theorem~\ref{thm:upper} in this section. We first note that it is sufficient to give an algorithm that achieves an $O(\sqrt{T})$ smooth calibration error, since this would imply the desired upper bound as follows:
\begin{align*}
    \Ex{}{\caldist(x, p)}
&\le\Ex{}{\lowercaldist(x, p)} + O(\sqrt{T}) \tag{Theorem~\ref{thm:lowercaldist-vs-caldist}} \\
&\le2\Ex{}{\smCE(x, p)} + O(\sqrt{T}) \tag{Lemma~\ref{lemma:smCE-vs-lowercaldist}}\\
&\le O(\sqrt{T}). 
\end{align*}

Our approach is based on a minimax argument similar to that of Hart~\cite{Hart22} for upper bounding the ECE in sequential calibration. Suppose we already know the adversary's strategy, which might be adaptive and randomized. At each step $t$, based on the previous outcomes $x_1, \ldots, x_{t-1}$ and predictions $p_1, \ldots, p_{t-1}$, we can calculate the conditional probability of $x_t = 1$. The natural strategy is then to predict this value exactly. Then, we may the view the sequences $x \in \{0, 1\}^{T}$ and $p \in [0, 1]^T$ as generated as below:
\begin{itemize}
    \item At each step $t$, $p_t$ is adversarially chosen based on $x_{1:(t-1)}$ and $p_{1:(t-1)}$.
    \item Then, we draw $x_t \sim \Bern(p_t)$.
\end{itemize}

Recall that the smooth calibration error $\smCE(x, p)$ is defined as
\[
    \sup_{f \in \F}\sum_{t=1}^{T}f(p_t)\cdot(x_t - p_t),
\]
where $\F$ is the family of $1$-Lipschitz functions from $[0, 1]$ to $[-1, 1]$. For each \emph{fixed} $f \in \F$, the random process $(X_0, X_1, \ldots, X_T)$ defined as $X_t \coloneqq \sum_{t'=1}^{t}f(p_{t'})\cdot(x_{t'} - p_{t'})$ is a martingale with bounded differences, so $X_T = \sum_{t=1}^{T}f(p_t)\cdot (x_t - p_t)$ is bounded by $O(\sqrt{T})$ with high probability. The difficulty, however, is to show that the same upper bound holds even if we take a supremum over all functions $f \in \F$.

\subsection{An Online Learning Setting}\label{sec:online-learning-setting}
Our proof is based on viewing the discussion above as an instance of online learning. In particular, we follow a formulation in~\cite{RST15a}.

An ``adversary'' and a ``player'' play a game with $T$ steps. At each step $t \in [T]$, the following happen in sequential order:
\begin{itemize}
    \item The adversary picks an ``instance'' $p^*_t \in [0, 1]$.
    \item The player, knowing $p^*_t$, commits to a distribution $\D_t$ over $[-1, 1]$, from which the predicted label $\hat y_t$ will be drawn.
    \item The adversary, knowing $\D_t$, generates the true label $y_t \in [-1, 1]$.
    \item The player draws prediction $\hat y_t \sim \D_t$ and incurs a loss of $\ell(\hat y_t, y_t)$.
\end{itemize}

The player's objective is to minimize the \emph{cumulative regret}, defined as the difference between the player's total loss and the total loss incurred by the best hypothesis in hindsight:
\[
    \Ex{}{\sum_{t=1}^{T}\ell(\hat y_t, y_t)}
-   \Ex{}{\inf_{f \in \F}\sum_{t=1}^{T}\ell(f(p^*_t), y_t)}.
\]
The adversary aims to maximize this regret. This setup exactly matches the learning setting defined in~\cite[Equation (10)]{RST15a}.

\subsection{Regret Bound and Sequential Rademacher Complexity}
The work of~\cite{RST15a} gives an upper bound on the optimal regret in the above online learning setting in terms of the \emph{sequential Rademacher complexity} of the function class $\F$.

\begin{definition}[Sequential Rademacher Complexity]
    The sequential Rademacher complexity of a family $\F$ of functions over $[0, 1]$ is
    \[
        \SRC_T(\F) \coloneqq \sup_{z_1, \ldots, z_T}\Ex{\sigma\sim\{\pm 1\}^T}{\sup_{f \in \F}\sum_{t=1}^{T}\sigma_tf(z_t(\sigma_1, \sigma_2, \ldots, \sigma_{t-1}))},
    \]
    where the outer supremum is taken over all $(z_1, \ldots, z_T)$ such that each $z_t: \{\pm1\}^{t-1} \to [0,1]$.
\end{definition}

\begin{theorem}[Theorem 8 of~\cite{RST15a}]\label{thm:RST-regret-bound}
    Suppose that for any $y \in [-1, 1]$, the loss function $\ell(\cdot, y)$ is convex and $L$-Lipschitz. Then, the optimal regret is at most $2L\cdot\SRC_T(\F)$.
\end{theorem}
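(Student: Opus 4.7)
The plan is to prove this regret bound by the classical three-step route of Rakhlin--Sridharan--Tewari: a sequential minimax theorem, a tangent-sequence symmetrization, and a sequential contraction principle. Let $V$ denote the optimal regret, which is the value of a $T$-round zero-sum game where the adversary picks $(p^*_t, y_t)$ and the player picks a distribution $\D_t$ over $[-1,1]$.

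\emph{Step 1 (Sequential minimax).} I would first write $V$ as nested $\sup$/$\inf$ operations over the $T$ rounds and then exchange $\inf_{\D_t} \sup_{y_t}$ at each round via the sequential minimax theorem of \cite{RST15a}. This is legitimate because $\ell(\cdot, y)$ is convex for each $y \in [-1,1]$ and the player's action space (probability measures on $[-1, 1]$) is convex and compact. The dual value can be expressed as a supremum over all tree-indexed adversary protocols $(p^*_t(\sigma_{<t}), y_t(\sigma_{<t}))$ indexed by signs $\sigma \in \{\pm 1\}^T$, with the player free to best-respond deterministically at each node.

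\emph{Step 2 (Symmetrization and contraction).} By convexity of $\ell$ in its first argument, the player's optimal deterministic prediction at each node is a conditional expectation of $f(p^*_t)$ for the worst-case $f \in \F$. I would then introduce an independent tangent copy $(p^{*\prime}_t, y'_t)$ of $(p^*_t, y_t)$ at each node; Jensen's inequality and the standard trick of attaching independent Rademacher signs to the paired differences yield the symmetrization bound
\[
    V \le 2\sup \Ex{\sigma}{\sup_{f \in \F}\sum_{t=1}^{T}\sigma_t\, \ell(f(p^*_t(\sigma_{<t})), y_t(\sigma_{<t}))}.
\]
The sequential contraction lemma (the tree analog of Ledoux--Talagrand) then uses that $\ell(\cdot, y)$ is $L$-Lipschitz uniformly in $y$ to strip off $\ell$ at a cost of $L$, reducing the right-hand side to $L\cdot \SRC_T(\F)$ by definition of sequential Rademacher complexity. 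Combining the factor $2$ from symmetrization and the factor $L$ from contraction gives $V \le 2L\cdot \SRC_T(\F)$.

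\emph{Main obstacle.} The hardest step is Step~1, the sequential minimax theorem. The difficulty is that the player's and adversary's action spaces are infinite-dimensional (probability measures on a compact interval, and adaptive policies over a continuous state space), and the interchange of $\inf$ and $\sup$ must be justified at every one of the $T$ rounds while preserving measurability of the resulting policies. The resolution is a careful backward induction on the value function of the residual game combined with Sion's minimax theorem applied round by round, where the convexity of $\ell$ in its first argument is essential. Step~2 also has measurability subtleties (the tangent copies must be jointly measurable with respect to the filtration so that the Rademacher signs can be introduced without biasing the adversary's future moves), but once the tangent-sequence formalism is set up these are comparatively routine.
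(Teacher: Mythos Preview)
The paper does not give its own proof of this statement: it is quoted verbatim as Theorem~8 of \cite{RST15a} and used as a black box in the proof of Theorem~\ref{thm:upper}. So there is no ``paper's proof'' to compare against; your proposal is essentially a sketch of how the cited reference establishes the bound.

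As a sketch of the Rakhlin--Sridharan--Tewari argument, your outline is broadly on target: the proof in \cite{RST15a} does proceed by a repeated minimax swap, symmetrization via a tangent sequence to introduce Rademacher signs, and a sequential contraction to peel off the $L$-Lipschitz loss. One imprecision worth flagging: your sentence ``the player's optimal deterministic prediction at each node is a conditional expectation of $f(p^*_t)$ for the worst-case $f \in \F$'' is not right as stated---the infimum over $f$ sits outside the sum and is not available to the player round by round. In the actual argument, after the minimax swap the player's inner infimum is handled by comparing to a tangent (ghost) sample of the adversary's move, not by best-responding to a particular $f$. This is a wording issue rather than a structural gap in your plan.
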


Finally, we will use the following result that upper bounds $\SRC_T(\F)$ in terms of the covering numbers of $\F$.

\begin{theorem}[Theorem 4 of~\cite{RST15b}]\label{thm:RST-SRC-bound}
    Let $\F$ be a family of functions over $[0, 1]$. With respect to $z = (z_1, z_2, \ldots, z_T)$ where $z_t: \{\pm 1\}^{t-1} \to [0, 1]$, a family $\F'$ is a $\delta$-cover of $\F$ if, for any $\sigma \in \{\pm1\}^T$ and $f \in \F$, there exists $f' \in \F'$ such that
    \[
        \sqrt{\frac{1}{T}\sum_{t=1}^{T}(f(z_t(\sigma_{1:(t-1)})) - f'(z_t(\sigma_{1:(t-1)})))^2} \le \delta.
    \]
    Let $\N(\delta, \F, z)$ denote the size of the smallest $\delta$-cover of $\F$ with respect to $z$. Then,
    \[
        \SRC_T(\F) \le \sup_{z_1, z_2, \ldots, z_T}\inf_{\alpha \in [0, 1]}\left\{4\alpha T + 12\sqrt{T}\cdot\int_{\alpha}^{1}\sqrt{\log \N(\delta, \F, z)}~\rmd\delta\right\}.
    \]
\end{theorem}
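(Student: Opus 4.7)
The plan is to prove this by \emph{chaining}, adapting Dudley's classical entropy argument to the sequential (martingale) setting. Fix a $[0,1]$-valued binary tree $z = (z_1, \ldots, z_T)$, set the geometric scales $\delta_j = 2^{-j}$, and for each $j \ge 0$ let $V_j$ be a minimum-size $\delta_j$-cover of $\F$ with respect to $z$, so $|V_j| = \N(\delta_j, \F, z)$; take $V_0 = \{0\}$ (at scale $\delta_0 = 1$, the zero function is a trivial cover for $\F$ if we assume functions are bounded by $1$, which may be done WLOG). For every pair $(f,\sigma) \in \F \times \{\pm 1\}^T$, fix a witness $v_j[f,\sigma] \in V_j$ achieving the $L_2(\sigma)$ cover bound.

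First I would pick an integer $N$ with $\delta_N \le \alpha$ (so $N = \lceil \log_2(1/\alpha) \rceil$) and apply the telescoping identity
\[
    f = v_0[f,\sigma] + \sum_{j=1}^{N}\bigl(v_j[f,\sigma] - v_{j-1}[f,\sigma]\bigr) + \bigl(f - v_N[f,\sigma]\bigr),
\]
evaluated along the path $t \mapsto z_t(\sigma_{1:(t-1)})$. Multiplying by $\sigma_t$, summing over $t$, taking $\sup_f$ and then $\Ex{\sigma}{\cdot}$ decomposes the sequential Rademacher complexity into a residual tail term plus $N$ chaining increments (the $v_0$ term vanishes). For the residual, Cauchy--Schwarz combined with the $L_2(\sigma)$ cover bound gives $\sum_t \sigma_t(f - v_N[f,\sigma])(z_t(\sigma_{1:(t-1)})) \le \sqrt{T}\cdot\sqrt{T\delta_N^2} = T\delta_N \le T\alpha$, contributing $O(\alpha T)$.

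The core work is bounding each chaining increment. The difference $g_{j,f,\sigma} \coloneqq v_j[f,\sigma] - v_{j-1}[f,\sigma]$ ranges over the pairwise-difference set $V_j - V_{j-1}$, which has cardinality at most $\N(\delta_j, \F, z)\cdot\N(\delta_{j-1}, \F, z) \le \N(\delta_j, \F, z)^2$. For each \emph{fixed} $g \in V_j - V_{j-1}$, the sequence $M_t \coloneqq \sum_{s\le t}\sigma_s g(z_s(\sigma_{1:(s-1)}))$ is a martingale with respect to the filtration generated by $\sigma_1, \sigma_2, \ldots$; by the triangle inequality, whenever $(g_j,g_{j-1})$ is actually realized by some $(f,\sigma)$, the path norm satisfies
\[
    \sum_{t=1}^{T}g(z_t(\sigma_{1:(t-1)}))^2 \le T(\delta_j+\delta_{j-1})^2 \le 9T\delta_j^2.
\]
An Azuma/Bernstein-type tail bound together with a union bound over the $\le \N(\delta_j, \F, z)^2$ candidate pairs then yields
\[
    \Ex{\sigma}{\sup_{f\in\F}\sum_{t=1}^{T}\sigma_t g_{j,f,\sigma}(z_t(\sigma_{1:(t-1)}))} \le 6\sqrt{T}\cdot\delta_j\cdot\sqrt{\log \N(\delta_j, \F, z)}.
\]
Summing over $j$ and comparing the dyadic sum $\sum_{j=1}^{N}\delta_j\sqrt{\log \N(\delta_j, \F, z)}$ to the Dudley integral via $\delta_j \le 2(\delta_{j-1}-\delta_j)$ produces at most $2\int_\alpha^1 \sqrt{\log\N(\delta,\F,z)}\,\rmd\delta$, giving the claimed coefficient of $12\sqrt{T}$. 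Taking the supremum over $z$ and the infimum over $\alpha$ finishes the proof.

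The main obstacle, and the reason this is not a verbatim transcription of the classical i.i.d.\ Dudley bound, is that the ``variance'' controlling the sub-Gaussian tail for a fixed $g$ is $\sum_t g(z_t(\sigma_{1:(t-1)}))^2$, which depends on the random path $\sigma$; a naive Azuma inequality in terms of the $L_\infty$ bound on $g$ along the path would only give an $O(1)$ factor instead of the desired $\delta_j$. The fix I would implement is a peeling argument: for each fixed pair $(g_j,g_{j-1}) \in V_j \times V_{j-1}$, isolate the event $\event_{g_j,g_{j-1}}$ on which $(v_j[f,\sigma],v_{j-1}[f,\sigma]) = (g_j,g_{j-1})$ for some $f$, and apply a martingale Bernstein inequality to $M_T \cdot \mathbbm{1}[\event_{g_j,g_{j-1}}]$, where the conditional variance is deterministically bounded by $9T\delta_j^2$ on the relevant event. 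Making this decomposition rigorous---so that the random norm inside the tail bound is replaced by the deterministic cover radius, and so that the constants $4$ and $12$ come out as stated---is the technical heart of the proof.
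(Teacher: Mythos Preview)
The paper does not prove this theorem at all: it is quoted verbatim as Theorem~4 of~\cite{RST15b} and used as a black box to bound $\SRC_T(\F)$ for the Lipschitz class in Lemma~\ref{lemma:SRC-upper-bound}. There is therefore nothing in the paper to compare your proposal against.

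For what it is worth, your sketch is the right shape for the proof in the original reference: a Dudley-style chaining along dyadic scales, with the sequential twist that the cover elements $v_j[f,\sigma]$ are allowed to depend on the path $\sigma$. You have also correctly identified the main technical issue---that the ``variance'' $\sum_t g(z_t(\sigma_{1:(t-1)}))^2$ is random and one cannot directly apply Azuma to a fixed function. One caution: your proposed fix via the indicator $\1{\event_{g_j,g_{j-1}}}$ is delicate, because multiplying a martingale by an indicator of an event that is only $\sigma_{1:T}$-measurable destroys the martingale property. The cleaner route (and the one taken in \cite{RST15b}) is to observe that the cover $V_j$ can itself be taken to consist of $\{\pm 1\}$-indexed \emph{trees} rather than fixed functions, so that for each fixed pair of cover trees the increment process is a genuine martingale with predictable quadratic variation bounded deterministically by $9T\delta_j^2$; the sub-Gaussian bound then applies directly without any peeling over events.
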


We apply Theorem~\ref{thm:RST-SRC-bound} to upper bound the sequential Rademacher complexity of the class of Lipschitz functions.

\begin{lemma}\label{lemma:SRC-upper-bound}
    Let $\F$ be the family of $1$-Lipschitz functions from $[0, 1]$ to $[-1, 1]$. Then,
    \[
        \SRC_T(\F) = O(\sqrt{T}).
    \]
\end{lemma}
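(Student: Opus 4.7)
The plan is to bound $\SRC_T(\F)$ by invoking Theorem~\ref{thm:RST-SRC-bound}, which requires controlling the covering number $\N(\delta, \F, z)$ of the class of $1$-Lipschitz functions from $[0,1]$ to $[-1,1]$. Since the sequential cover in the definition of $\N(\delta, \F, z)$ is only required to approximate each $f \in \F$ on the (at most $T$) values $\{z_t(\sigma_{1:(t-1)})\}$ in the $\ell_2$ average, it suffices to construct a \emph{uniform} $\delta$-cover of $\F$ under the sup norm, whose size provides an upper bound on $\N(\delta, \F, z)$ for every tree $z$.

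To build such a sup-norm cover, I would use the standard discretization argument: put a grid on $[0,1]$ of width $\delta$, obtaining $k = \lceil 1/\delta\rceil + 1$ grid points, and for each $f \in \F$ replace $f$ by a piecewise-linear function $\tilde f$ whose values at the grid points are rounded to the nearest multiple of $\delta$ in $[-1,1]$. This ensures $\|f - \tilde f\|_\infty \le 2\delta$ (reindexing $\delta$ by a constant is harmless). The $1$-Lipschitz constraint implies that the rounded value at consecutive grid points can differ by at most $3$ multiples of $\delta$, so the number of admissible sequences of rounded values is at most $(2/\delta + 1) \cdot 3^{k} = \exp(O(1/\delta))$. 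This gives $\log \N(\delta, \F, z) = O(1/\delta)$, independent of $T$ and of $z$.

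Plugging this bound into Theorem~\ref{thm:RST-SRC-bound}, the entropy integral becomes
\[
    \int_\alpha^1 \sqrt{\log \N(\delta, \F, z)}\, \rmd \delta
    \;\le\; C \int_\alpha^1 \frac{1}{\sqrt{\delta}}\, \rmd \delta
    \;=\; 2C(1 - \sqrt{\alpha})
    \;\le\; 2C,
\]
which is bounded by an absolute constant for every $\alpha \in [0,1]$. Choosing $\alpha = 1/T$ then yields
\[
    \SRC_T(\F) \;\le\; 4 \cdot \tfrac{1}{T} \cdot T + 12\sqrt{T} \cdot O(1) \;=\; O(\sqrt{T}),
\]
which is the claimed bound.

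The argument is essentially routine; the only step requiring care is the construction of the sup-norm cover and the justification that a sup-norm cover suffices for the sequential notion in Theorem~\ref{thm:RST-SRC-bound}. There is no substantial obstacle: the combinatorics of Lipschitz paths on a $\delta$-grid give the standard $\exp(O(1/\delta))$ metric entropy, after which the Dudley-style integral converges and the choice $\alpha = 1/T$ balances the two terms.
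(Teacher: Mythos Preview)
Your proposal is correct and follows essentially the same approach as the paper: build a sup-norm $\delta$-cover of $\F$ of size $\exp(O(1/\delta))$ via grid discretization of both the domain and the range, observe that a sup-norm cover dominates the sequential cover in Theorem~\ref{thm:RST-SRC-bound}, and then evaluate the Dudley integral. The only cosmetic differences are that the paper uses floor rounding with grid spacing $1/k$ where $k=\lceil 2/\delta\rceil$ (which makes the cover elements $1$-Lipschitz, though this is not needed) and takes $\alpha=0$ directly since the integral converges at $0$; your choice $\alpha=1/T$ works equally well.
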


The lemma is proved by a standard construction of covers for the class of Lipschitz functions.

\begin{proof}
    By Theorem~\ref{thm:RST-SRC-bound}, $\SRC_T(\F)$ is upper bounded by
    \begin{equation}\label{eq:SRC-upper-bound}
        \sup_{z_1, z_2, \ldots, z_T}\inf_{\alpha \in [0, 1]}\left\{4\alpha T + 12\sqrt{T}\cdot\int_{\alpha}^{1}\sqrt{\log \N(\delta, \F, z)}~\rmd\delta\right\}.
    \end{equation}

    We fix $z = (z_1, \ldots, z_T)$ and $\delta \in (0, 1]$, and give an upper bound on $\N(\delta, \F, z)$. Let $k = \lceil 2/\delta \rceil$. For a function $f \in \F$, we construct another function $\hat f$ that takes value $\frac{\lfloor f(i/k)\cdot k\rfloor}{k}$ at $i/k$ for each $i \in \{0, 1, \ldots, k\}$. On each interval $[(i-1)/k, i/k]$, $\hat f$ is the linear interpolation between $f((i-1)/k)$ and $f(i/k)$.

    We first note that $\hat f$ is $1$-Lipschitz. For each $i \in [k]$, since $f$ is $1$-Lipschitz, we have
    \[
        \left|f\left(\frac{i-1}{k}\right)\cdot k - f\left(\frac{i}{k}\right)\cdot k\right| = k \cdot \left|f\left(\frac{i-1}{k}\right) - f\left(\frac{i}{k}\right)\right| \le k\cdot\frac{1}{k} = 1.
    \]
    It follows that $\lfloor f((i-1)/k)\cdot k\rfloor$ and $\lfloor f(i/k)\cdot k\rfloor$ differ by at most $1$, which implies $|\hat f((i-1)/k) - \hat f(i/k)| \le 1/k$. Thus, the linear interpolation on the interval $[(i-1)/k, i/k]$ has a slope between $\pm 1$. This shows that $\hat f$ is $1$-Lipschitz.

    Then, we argue that $\hat f$ is point-wise close to $f$. For each $i \in \{0, 1, \ldots, k\}$, we have
    \[
        \left|\hat f(i/k) - f(i/k)\right| \le \frac{1}{k} \le \frac{\delta}{2}.
    \]
    For general $x \in [0, 1]$, there exists $i \in \{0, 1, \ldots, k\}$ such that $|x - i/k| \le 1/(2k)$. It follows that
    \begin{align*}
        |\hat f(x) - f(x)|
    &\le|\hat f(x) - \hat f(i/k)| + |\hat f(i/k) - f(i/k)| + |f(i/k) - f(x)|\\
    &\le |x - i/k| + \frac{\delta}{2} + |x - i/k| \tag{$f$ and $\hat f$ are $1$-Lipschitz}\\
    &\le \frac{1}{2k} + \frac{\delta}{2} + \frac{1}{2k}
    \le \delta.
    \end{align*}
    In particular, regardless of the value of $\sigma \in \{\pm1\}^T$, we have
    \[
        \sqrt{\frac{1}{T}\sum_{t=1}^{T}(f(z_t(\sigma_{1:(t-1)})) - \hat f(z_t(\sigma_{1:(t-1)})))^2}
    \le \sqrt{\frac{1}{T}\cdot T\delta^2}
    =   \delta.
    \]
    
    Then, we show that the resulting function $\hat f$ falls into a small set. $\hat f$ is uniquely determined by its value on $\{0, 1/k, 2/k, \ldots, 1\}$. $\hat f(0)$ is one of the $2k + 1$ multiples of $1/k$ in $[-1, 1]$. For each $i \in [k]$, $\hat f(i/k) - \hat f((i-1)/k)$ falls into $\{-1/k, 0, 1/k\}$. Therefore, $\hat f$ falls into a set of size at most
    \[
        (2k + 1)\cdot 3^k
    \le 3^{2k}
    \le 3^{6/\delta}.
    \]
    This gives $\N(\delta, \F, p) \le 3^{6/\delta}$.
    
    Finally, picking $\alpha = 0$ in the expression in~\eqref{eq:SRC-upper-bound} shows that
    \[
        \SRC(\F) \le 12\sqrt{T}\int_{0}^{1}\sqrt{\frac{6\log 3}{\delta}}~\rmd\delta = O(\sqrt{T}).
    \]
\end{proof}

\subsection{Proof of Theorem~\ref{thm:upper}}
Now we proceed to the proof. A technical issue with the discussion earlier in this section is that, to apply the minimax theorem, the action spaces of the two players need to be finite. This is not true since the forecaster is allowed to make arbitrary predictions between $0$ and $1$. To deal with this issue, we will force the forecaster to restrict its predictions to a $1/T$-net of $[0, 1]$. Since the smooth calibration error $\smCE(x, p)$ is continuous in $p$, this rounding does not blow up the error by much.

\begin{proof}[Proof of Theorem~\ref{thm:upper}]
We will show that, even if the forecaster is only allowed to predict the values in $P \coloneqq \{0, 1/T, 2/T, \ldots, 1\}$, it is still possible to achieve an $O(\sqrt{T})$ smooth calibration error. By Theorem~\ref{thm:lowercaldist-vs-caldist} and Lemma~\ref{lemma:smCE-vs-lowercaldist}, this would give the desired $O(\sqrt{T})$ bound on the calibration distance.

After restricting the space of predictions, a \emph{deterministic} strategy of the adversary (resp., the forecaster) is simply a function from $\bigcup_{t=1}^{T}(\{0, 1\}\times P)^{t-1}$ to $\{0, 1\}$ (resp., to $P$). Both sets are finite (albeit of a doubly exponential size). In general, both players may play a mixture of deterministic strategies. By von Neumann's minimax theorem, it suffices to prove that for every fixed (possibly mixed) strategy of the adversary, the forecaster can achieve an $O(\sqrt{T})$ smooth calibration error.

\paragraph{The forecaster's algorithm.} Now, we describe one such strategy for the forecaster:
\begin{itemize}
    \item At each step $t \in [T]$, based on $x_{1:(t-1)}$ and $p_{1:(t-1)}$, compute the conditional probability for the adversary to play $x_t = 1$. Let $p^*_t$ denote this value.
    \item Predict $p_t \coloneqq \frac{\lfloor T \cdot p^*_t\rfloor}{T}$, which is $p^*_t$ rounded down to the nearest value in $P$.
\end{itemize}
We note that it is, in turn, sufficient to upper bound the expected value of $\smCE(p^*, x)$. This is because, by Lemma~\ref{lemma:Lipschitz-continuity} from Appendix~\ref{sec:preliminaries-proofs},
\[
    \Ex{}{\smCE(x, p)}
\le \Ex{}{\smCE(x, p^*)} + 2\Ex{}{\|p - p^*\|_1},
\]
whereas by our choice of $p$, $\|p - p^*\|_1$ is always at most $\frac{1}{T}\cdot T = 1$.

After fixing the forecaster's strategy, the ``game'' between the adversary and the forecaster can be equivalently described as the following procedure. At the beginning, a function $g$ from $\bigcup_{t=1}^{T}(\{0, 1\}^{t-1}\times [0, 1]^{t-1})$ to $[0, 1]$ is adversarially chosen. Then, for $t = 1, 2, \ldots, T$:
\begin{itemize}
    \item Pick $p^*_t = g\left(x_{1:(t-1)}, p^*_{1:(t-1)}\right)$.
    \item Draw $x_t$ from $\Bern(p^*_t)$.
\end{itemize}
Note that the first step above is equivalent to the original game, since the predictions $p_{1:(t-1)}$ are determined by $p^*_{1:(t-1)}$.

\paragraph{Reduction to online learning.} Now, we further rephrase the procedure described above as an online learning setup in Section~\ref{sec:online-learning-setting}. At each step $t \in [T]$, the adversary picks the ``instance'' $p^*_t$ as $g\left(x_{1:(t-1)}, p^*_{1:(t-1)}\right)$. The player then commits to an arbitrary distribution $\D_t$ over $[-1, 1]$. (We will show later that the choice of $\D_t$ is inconsequential.) The adversary picks the true label $y_t$ by drawing $x_t \sim \Bern(p^*_t)$ and setting $y_t = x_t - p^*_t$. Finally, the player draws $\hat y_t \sim \D_t$ and incurs a loss of $\ell(\hat y_t, y_t) \coloneqq \hat y_t \cdot y_t$.

The regret in the above setup can be simplified into
\begin{equation}\label{eq:regret-simplified}
    \Ex{}{\sum_{t=1}^{T}\hat y_t\cdot(x_t - p^*_t)}
-   \Ex{}{\inf_{f \in \F}\sum_{t=1}^{T}f(p^*_t)\cdot (x_t - p^*_t)}.
\end{equation}
By the definition of the learning procedure and the choice of $x_t$, we have
\[
    \Ex{}{\hat y_t\cdot(x_t - p^*_t)}
=   \Ex{p^*_t, \hat y_t}{\hat y_t\cdot(\Ex{}{x_t|p^*_t, \hat y_t} - p^*_t)}
=   \Ex{p^*_t, \hat y_t}{\hat y_t\cdot(p^*_t - p^*_t)}
=   0
\]
for every $t \in [T]$, so the first term in~\eqref{eq:regret-simplified} is always $0$ regardless of how the player picks $\D_t$ (which determines $\hat y_t$). Note that $f \in \F$ if and only if $-f \in \F$, so the expression in~\eqref{eq:regret-simplified} is equal to
\[
    -\Ex{}{\inf_{f \in \F}\sum_{t=1}^{T}f(p^*_t)\cdot(x_t - p^*_t)}
=   \Ex{}{\sup_{f \in \F}\sum_{t=1}^{T}f(p^*_t)\cdot(x_t - p^*_t)} = \Ex{}{\smCE(x, p^*)}.
\]

So far, we proved that in our online learning setting, when the adversary plays a specific strategy (namely, pick the instance $p^*_t$ as $g(x_{1:(t-1)}, p_{1:(t-1)})$ and the true label $y_t$ as $x_t - p^*_t$, where $x_t \sim \Bern(p^*)$), the regret of any player is given by $\Ex{}{\smCE(x, p^*)}$. Therefore, $\Ex{}{\smCE(x, p^*)}$ is upper bounded by the optimal regret for this setup. Note that for any $y_t \in [-1, 1]$, the loss function $\ell(\cdot, y_t)$ is convex and $1$-Lipschitz in the first parameter. By Theorem~\ref{thm:RST-regret-bound} and Lemma~\ref{lemma:SRC-upper-bound}, the optimal regret is at most $2\SRC(\F) = O(\sqrt{T})$. This concludes the proof.
\end{proof}

\section{Improved Forecasters for Random Bits}\label{sec:random-bits}
In this section, we prove Proposition~\ref{prop:random-bits-polylog}, which gives a $\polylog(T)$ calibration distance when the adversary plays $T$ independent random bits. We will first present a simple forecasting algorithm with an $O(T^{1/3})$ calibration distance in expectation. Then, we use the same idea to further improve the calibration distance to $O(\log^{3/2} T)$.

\subsection{A Sub-Square-Root Upper Bound for Random Bits}
Algorithm~\ref{algo:random-bits} gives a forecasting strategy that achieves an $O(T^{1/3})$ smooth calibration error on a sequence of $T$ random bits.

\begin{algorithm2e}
    \caption{Fixed-Bias Forecaster for Random Bits}
    \label{algo:random-bits}
    \KwIn{Time horizon $T$. Parameter $\eps > 0$. Online access to $x_1, x_2, \ldots, x_T$.}
    \KwOut{Predictions $p_1, p_2, \ldots, p_T$.}
    $S_0 \gets 0$\;
    \For{$t \in [T]$} {
        $p_t \gets 1/2 + \eps\cdot\sgn(S_{t-1})$\;
        Predict $p_t$\;
        Observe $x_t$\;
        $S_t \gets S_{t-1} + (x_t - p_t)$\;
    }
\end{algorithm2e}

The algorithm keeps track of $S_t = \sum_{t'=1}^{t}(x_{t'} - p_{t'})$, the difference between the total outcomes and the total predictions in the first $t$ steps. If $S_{t-1} > 0$ (resp., $S_{t-1} < 0$), the forecaster predicts a value slightly higher (resp., lower) than $1/2$, in the hope that $S_t$ will get closer to $0$.

Now we analyze the deviation from calibration incurred by the above algorithm. We will start by upper bounding the smooth calibration error of Algorithm~\ref{algo:random-bits}, and then invoke Theorem~\ref{thm:lowercaldist-vs-caldist} and Lemma~\ref{lemma:smCE-vs-lowercaldist} to get an upper bound on the calibration distance.

We start with the following simple upper bound on the smooth calibration error. Recall the definition of $\smCE$ and $\Delta_{\alpha}$ from Section~\ref{sec:preliminaries}.

\begin{lemma}\label{lemma:smCE-bound}
    \[
        \smCE(x, p)
    \le \left|\sum_{\alpha \in [0, 1]}\Delta_{\alpha}\right| + \sum_{\alpha \in [0, 1]}|\alpha - 1/2|\cdot|\Delta_{\alpha}|.
    \]
\end{lemma}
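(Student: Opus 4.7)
The plan is to start from the definition $\smCE(x,p) = \sup_{f \in \F} \sum_{\alpha \in [0,1]} f(\alpha) \cdot \Delta_\alpha$ and decompose $f(\alpha)$ around the midpoint $1/2$, writing $f(\alpha) = f(1/2) + [f(\alpha) - f(1/2)]$. This splits the sum into two pieces that match exactly the two terms on the right-hand side of the claimed bound.

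For the first piece, $f(1/2) \cdot \sum_{\alpha \in [0,1]} \Delta_\alpha$, I would use the fact that $|f(1/2)| \le 1$ (since $f$ maps to $[-1,1]$) to bound its absolute value by $\left|\sum_\alpha \Delta_\alpha\right|$. For the second piece, $\sum_\alpha [f(\alpha) - f(1/2)] \cdot \Delta_\alpha$, I would use the $1$-Lipschitz property of $f$ to get $|f(\alpha) - f(1/2)| \le |\alpha - 1/2|$, and then bound the sum term-by-term in absolute value by $\sum_\alpha |\alpha - 1/2| \cdot |\Delta_\alpha|$.

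Combining the two bounds via the triangle inequality gives, for every $f \in \F$, the uniform upper bound
\[
\sum_{\alpha \in [0,1]} f(\alpha) \cdot \Delta_\alpha \le \left|\sum_{\alpha \in [0,1]} \Delta_\alpha\right| + \sum_{\alpha \in [0,1]} |\alpha - 1/2| \cdot |\Delta_\alpha|.
\]
Taking the supremum over $f \in \F$ on the left yields $\smCE(x,p)$ and completes the proof. Since the right-hand side does not depend on $f$, no further work is required. There is no real obstacle here; the only thing to check is that the summations are well-defined, which follows from the fact (noted in the preliminaries) that $\Delta_\alpha$ is nonzero on only finitely many $\alpha$.
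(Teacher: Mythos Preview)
Your proposal is correct and follows essentially the same approach as the paper: decompose $f(\alpha) = f(1/2) + [f(\alpha) - f(1/2)]$, then bound the first part using $|f(1/2)|\le 1$ and the second using the $1$-Lipschitz property, which is exactly what the paper does.
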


\begin{proof}
    By definition, we have
    \begin{align*}
        \smCE(x, p)
    &=  \sup_{f \in \F}\sum_{\alpha \in [0, 1]}f(\alpha)\cdot\Delta_{\alpha}\\
    &=  \sup_{f \in \F}\left[f(1/2) \cdot \sum_{\alpha \in [0, 1]}\Delta_{\alpha} + \sum_{\alpha \in [0, 1]}(f(\alpha) - f(1/2)) \cdot \Delta_{\alpha}\right]\\
    &\le\sup_{f \in \F}\left[f(1/2) \cdot \sum_{\alpha \in [0, 1]}\Delta_{\alpha}\right] + \sum_{\alpha \in [0, 1]}\sup_{f \in \F}\left[(f(\alpha) - f(1/2)) \cdot \Delta_{\alpha}\right]\\
    &\le\left|\sum_{\alpha \in [0, 1]}\Delta_{\alpha}\right| + \sum_{\alpha \in [0, 1]}|\alpha - 1/2| \cdot |\Delta_{\alpha}|.
    \end{align*}
    The last step holds since the functions in $\F$ are both bounded and $1$-Lipschitz.
\end{proof}

Note that the $\sum_{\alpha \in [0, 1]}\Delta_{\alpha}$ term in Lemma~\ref{lemma:smCE-bound} is exactly $S_T = \sum_{t=1}^{T}(x_t - p_t)$ in Algorithm~\ref{algo:random-bits}. The following lemma gives a bound on the stochastic process $(S_0, S_1, \ldots, S_T)$.

\begin{lemma}\label{lemma:random-walk-with-drifts}
    For $\eps \in (0, \frac{1}{2}]$, consider the stochastic process $(X_0, X_1, X_2, \ldots)$ defined as follows:
    \begin{itemize}
        \item $X_0 = 0$.
        \item $x_1, x_2, x_3, \ldots$ are independent samples from $\Bern(1/2)$.
        \item For $t \ge 1$, $X_t = X_{t-1} + x_t - \left(\frac{1}{2} + \eps\cdot\sgn(X_{t-1})\right)$.
    \end{itemize}
    Then, for any $t \ge 0$, $\Delta \ge 0$ and $C = e^{1/2}$, it holds that
    \[
        \pr{}{|X_t| \ge \Delta} \le C\cdot e^{-\eps\Delta}.
    \]
\end{lemma}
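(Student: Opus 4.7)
The plan is to reduce the tail bound to a moment generating function estimate via Markov's inequality. Since
\[
    \pr{}{|X_t| \ge \Delta} \le e^{-\eps\Delta} \cdot \mathbb{E}[e^{\eps|X_t|}],
\]
it suffices to establish that $\mathbb{E}[e^{\eps|X_t|}] \le e^{1/2}$ uniformly in $t \ge 0$, which then gives the tail bound with the stated constant $C = e^{1/2}$.

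I would first compute the one-step conditional MGF $\mathbb{E}[e^{\eps|X_t|} \mid |X_{t-1}| = u]$ explicitly, exploiting the symmetry of the dynamics under $X \mapsto -X$ so that the conditional distribution of $|X_t|$ depends only on $|X_{t-1}|$. The computation splits into two regimes depending on whether the noise $\xi_t \in \{\pm 1/2\}$ can flip the sign of $X_t$ relative to $X_{t-1}$. In the \emph{contraction regime} $u \ge 1/2 + \eps$ the sign is preserved, yielding
\[
    \mathbb{E}[e^{\eps|X_t|} \mid u] = e^{\eps u} \cdot e^{-\eps^2}\cosh(\eps/2) \le e^{\eps u},
\]
where the multiplicative factor is $\le 1$ by the subexponential estimate $\cosh(y) \le e^{y^2/2}$. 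In the \emph{pocket regime} $u < 1/2 + \eps$ the sign can flip, and a direct calculation gives $\mathbb{E}[e^{\eps|X_t|} \mid u] = e^{\eps/2}\cosh(\eps(u-\eps))$, which is maximized at $u = 1/2 + \eps$ with value $(1+e^\eps)/2 \le e^{1/2}$ (using $\eps \le 1/2$). Together these yield $\mathbb{E}[e^{\eps|X_t|} \mid |X_{t-1}|] \le \max(\alpha\, e^{\eps|X_{t-1}|},\, \beta)$ where $\alpha \coloneqq e^{-\eps^2}\cosh(\eps/2) \le 1$ and $\beta \coloneqq (1+e^\eps)/2 \le e^{1/2}$, with the two bounds meeting exactly at the boundary $u = 1/2 + \eps$.

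The MGF bound $\mathbb{E}[e^{\eps|X_t|}] \le e^{1/2}$ is then established by induction on $t$; the base case $t = 0$ is immediate. A helpful structural observation is that the pocket boundary $1/2 + \eps$ lies inside the region $\{u : e^{\eps u} \le e^{1/2}\} = \{u \le 1/(2\eps)\}$ (one checks $\eps(1/2+\eps) \le 1/2$ for $\eps \in (0, 1/2]$), so the per-step bound $\beta$ is already at most $e^{1/2}$ whenever $e^{\eps|X_{t-1}|} \le e^{1/2}$, while for larger $|X_{t-1}|$ the contraction factor $\alpha < 1$ strictly shrinks the MGF from its current value. The main obstacle is closing the induction at the tight constant $e^{1/2}$: a naive application of $\max(a,b) \le a + b$ introduces a spurious factor $1/(1 - \alpha) = \Omega(1/\eps^2)$ that blows up as $\eps \to 0$. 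The fix is to keep the $\max$ structure intact and split the expectation across the threshold $\{|X_{t-1}| \le 1/(2\eps)\}$ versus its complement, using $\mathbb{E}[e^{\eps|X_t|} \mid |X_{t-1}|] \le e^{1/2}$ on the first event and $\mathbb{E}[e^{\eps|X_t|} \mid |X_{t-1}|] \le \alpha e^{\eps|X_{t-1}|}$ on the second. An alternative route that bypasses the delicate aggregation is to use a last-exit decomposition of the event $\{|X_t| \ge \Delta\}$: conditioning on $\sigma \coloneqq \max\{s \le t : X_s \le 0\}$, one checks directly that $|X_{\sigma+1}| \le 1/2 + \eps$, and then applies Doob's optional stopping to the supermartingale $(e^{\eps X_s})_{s > \sigma}$ along the positive excursion to obtain the factor $e^{\eps(1/2 + \eps) - \eps\Delta} \le e^{1/2} e^{-\eps\Delta}$.
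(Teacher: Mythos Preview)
Your one-step conditional MGF computations are correct, and so is the identification of the contraction factor $\alpha = e^{-\eps^2}\cosh(\eps/2) \le 1$ and the pocket bound $\beta = (1+e^\eps)/2 \le e^{1/2}$. The gap is in closing the induction with the \emph{scalar} hypothesis $\mathbb{E}[e^{\eps|X_{t-1}|}] \le e^{1/2}$. Your proposed split at the threshold $1/(2\eps)$ gives
\[
    \mathbb{E}[e^{\eps|X_t|}]
    \;\le\;
    e^{1/2}\Pr[A] + \alpha\,\mathbb{E}\bigl[e^{\eps|X_{t-1}|}\mathbf{1}_{A^c}\bigr],
    \qquad A \coloneqq \{|X_{t-1}| \le 1/(2\eps)\},
\]
and the only control the IH offers on the second term is $\mathbb{E}[e^{\eps|X_{t-1}|}\mathbf{1}_{A^c}] \le e^{1/2} - \mathbb{E}[e^{\eps|X_{t-1}|}\mathbf{1}_A] \le e^{1/2} - \Pr[A]$. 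Plugging in yields $\alpha e^{1/2} + (e^{1/2}-\alpha)\Pr[A]$, which is $\le e^{1/2}$ only when $\Pr[A] \le e^{1/2}(1-\alpha)/(e^{1/2}-\alpha)$. For small $\eps$ this upper bound is $\Theta(\eps^2)$, while $\Pr[A]$ is typically close to $1$ --- so the step does not close. Concretely, there exist distributions for $|X_{t-1}|$ with $\mathbb{E}[e^{\eps|X_{t-1}|}] \le e^{1/2}$ (mass near $0$ plus a tiny mass far out) for which your one-step bound gives $\mathbb{E}[e^{\eps|X_t|}] > e^{1/2}$; the scalar MGF invariant is simply too weak to rule them out. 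Your alternative last-exit route also has a gap: $\sigma = \max\{s \le t : X_s \le 0\}$ is not a stopping time, so Doob's theorem does not apply to $(e^{\eps X_s})_{s > \sigma}$; and if one instead unions over $s$ while dropping the ``stay positive'' constraint, the resulting sum $\sum_s \Pr[X_s \le 0,\,X_{s+1} > 0]$ is the expected number of upcrossings, which can be $\Theta(t)$.

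The paper's proof avoids both issues by inducting not on a single moment but on the full family of tail bounds $\Pr[|X_{t-1}| \ge \Delta] \le C e^{-\eps\Delta}$ simultaneously for all $\Delta \ge 0$. For each fixed $\Delta > 1$, the event $\{|X_t| \ge \Delta\}$ is contained in the union of $\{|X_{t-1}| \ge \Delta - (1/2-\eps),\ \sgn(x_t - 1/2) = \sgn(X_{t-1})\}$ and $\{|X_{t-1}| \ge \Delta + (1/2+\eps),\ \sgn(x_t - 1/2) = -\sgn(X_{t-1})\}$; applying the IH at these two shifted levels and using independence of $x_t$ gives exactly $\Pr[|X_t| \ge \Delta] \le C e^{-\eps\Delta}\cdot e^{-\eps^2}\cosh(\eps/2) \le C e^{-\eps\Delta}$, i.e., your contraction factor $\alpha \le 1$ reappears but now acting level-by-level. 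The regime $\Delta \le 1$ is handled trivially by $C e^{-\eps\Delta} \ge e^{1/2 - 1/2} = 1$. The point is that maintaining the entire tail profile is strictly stronger than maintaining the MGF at the critical rate $\eps$, and is what makes the induction go through with the exact constant $C = e^{1/2}$.
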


\begin{proof}
    We prove the lemma by an induction on $t$. The inequality clearly holds for $t = 0$ and all $\Delta \ge 0$. Now, assuming the inequality for $X_{t - 1}$ and all $\Delta \ge 0$, we prove the $X_t$ case. When $\Delta \le 1$, the inequality holds trivially, since we have $\eps\Delta \le 1/2$, which implies
    \[
        \pr{}{|X_t| \ge \Delta}
    \le 1
    =   C\cdot e^{-1/2}
    \le C\cdot e^{-\eps\Delta}.
    \]

    It remains to handle the $\Delta > 1$ case. In order to reach $|X_t| \ge \Delta > 1$, we must have $X_{t-1} \ne 0$; otherwise we would have $X_t \in \{-1/2, 1/2\}$. Furthermore, one of the following two must hold:
    \begin{itemize}
        \item $|X_{t-1}| \ge \Delta - (1/2 - \eps)$ and $\sgn(x_t - 1/2) = \sgn(X_{t-1})$.
        \item $|X_{t-1}| \ge \Delta + (1/2 + \eps)$ and $\sgn(x_t - 1/2) = -\sgn(X_{t-1})$.
    \end{itemize}
    Note that by the inductive hypothesis, $|X_{t-1}| \ge \Delta - (1/2 - \eps)$ holds with probability at most $C\cdot e^{-\eps[\Delta - (1/2 - \eps)]}$. In addition, conditioning on this event, the probability of $\sgn(x_t - 1/2) = \sgn(X_{t-1})$ is still $1/2$ by independence. Thus, the probability of the former is at most $\frac{C}{2} e^{-\eps\Delta + \eps/2 - \eps^2}$. An analogous argument upper bounds the probability of the latter condition by $\frac{C}{2}e^{-\eps\Delta - \eps/2 - \eps^2}$.

    To conclude the inductive step, we need the inequality
    \[
        \frac{C}{2} e^{-\eps\Delta + \eps/2 - \eps^2} + \frac{C}{2}e^{-\eps\Delta - \eps/2 - \eps^2}
    \le C\cdot e^{-\eps\Delta},
    \]
    which is equivalent to
    \[
        e^{\eps/2} + e^{-\eps/2} \le 2e^{\eps^2}.
    \]
    The last inequality can be shown to hold for all $\eps \ge 0$ via Taylor expansion. This completes the proof.
\end{proof}

\begin{lemma}\label{lemma:random-bits-smCE}
    On a sequence of $T$ independent random bits, the smooth calibration error incurred by Algorithm~\ref{algo:random-bits} with $\eps = T^{-1/3}$ is $O(T^{1/3})$ in expectation.
\end{lemma}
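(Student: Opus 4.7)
The plan is to apply Lemma~\ref{lemma:smCE-bound}, which bounds $\smCE(x, p)$ by $\left|\sum_{\alpha} \Delta_\alpha\right| + \sum_\alpha |\alpha - 1/2| \cdot |\Delta_\alpha|$. The first term equals $|S_T|$ by definition. Since Algorithm~\ref{algo:random-bits} only ever predicts values in $\{1/2 - \eps,\, 1/2,\, 1/2 + \eps\}$, and $|\alpha - 1/2|$ vanishes at $\alpha = 1/2$, the second term collapses to $\eps \cdot (|\Delta_{1/2+\eps}| + |\Delta_{1/2-\eps}|)$. So it suffices to prove $\E[|S_T|] = O(T^{1/3})$ and $\eps \cdot \E[|\Delta_{1/2+\eps}| + |\Delta_{1/2-\eps}|] = O(T^{1/3})$.

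For the first quantity, observe that the process $(S_0, S_1, \ldots, S_T)$ tracked by the algorithm is exactly the process $(X_t)$ in Lemma~\ref{lemma:random-walk-with-drifts} (the outcomes $x_t$ are i.i.d.\ $\Bern(1/2)$ by the random-bits assumption, and the update $S_t = S_{t-1} + x_t - (1/2 + \eps \cdot \sgn(S_{t-1}))$ matches). Integrating the tail bound $\Pr[|S_T| \ge \Delta] \le C e^{-\eps \Delta}$ gives $\E[|S_T|] \le C/\eps = O(T^{1/3})$ when $\eps = T^{-1/3}$.

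For the second quantity, I decompose $\Delta_{1/2+\eps} = M_{+} - \eps N_{+}$, where $N_{+} = |\{t : p_t = 1/2+\eps\}|$ and $M_{+} \coloneqq \sum_{t : p_t = 1/2+\eps} (x_t - 1/2)$. The key observation is that $\1{p_t = 1/2+\eps} = \1{S_{t-1} > 0}$ is measurable with respect to $x_{1:(t-1)}$, while $\E[x_t - 1/2 \mid x_{1:(t-1)}] = 0$ under the random-bits assumption; so $M_{+}$ is a sum of bounded martingale differences, each conditionally of variance at most $1/4$. By orthogonality $\E[M_{+}^2] \le T/4$, hence $\E[|M_{+}|] \le \sqrt{T}/2$. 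Combining with the trivial bound $N_{+} \le T$ gives $\eps \E[|\Delta_{1/2+\eps}|] \le \eps \sqrt{T}/2 + \eps^2 T$. The symmetric argument applies to $\Delta_{1/2-\eps}$, and together we get a bound of order $\eps \sqrt{T} + \eps^2 T = T^{1/6} + T^{1/3} = O(T^{1/3})$.

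The one point requiring some care is the martingale step: one must verify that $\1{p_t = 1/2+\eps}$ is indeed predictable (determined by $x_{1:(t-1)}$ through $S_{t-1}$, without peeking at $x_t$), which makes each summand of $M_{+}$ a martingale difference and legitimizes the variance bound. Beyond this, everything is a routine combination of Lemmas \ref{lemma:smCE-bound}~and~\ref{lemma:random-walk-with-drifts} with a standard $L^2$ control of a stopped martingale.
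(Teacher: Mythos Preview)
Your proof is correct and follows the same skeleton as the paper: apply Lemma~\ref{lemma:smCE-bound}, bound $\Ex{}{|S_T|}$ via Lemma~\ref{lemma:random-walk-with-drifts}, and control $|\Delta_{1/2\pm\eps}|$ separately. The one place you diverge is in the latter step. The paper couples the bits used on the $\{p_t = 1/2+\eps\}$ steps with a pre-sampled i.i.d.\ sequence, then applies a Chernoff bound and a union bound over the (random) number $m$ of such steps to get $|\Delta_{1/2+\eps}| = O(T^{2/3})$ with probability $1 - 1/T$. Your martingale-difference argument is more direct: predictability of $\1{p_t = 1/2+\eps}$ plus orthogonality of increments immediately yields $\Ex{}{M_+^2} \le T/4$, avoiding both the coupling device and the union bound. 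Both routes land at $\eps\cdot\Ex{}{|\Delta_{1/2+\eps}|} = O(T^{1/3})$, so the difference is stylistic; if anything your $L^2$ bound is cleaner and requires fewer moving parts.
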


\begin{proof}
    Note that Algorithm~\ref{algo:random-bits} only predicts three different values: $1/2$, $1/2 + \eps$, and $1/2 - \eps$. In light of Lemma~\ref{lemma:smCE-bound}, it suffices to upper bound the expectation of the following three terms at time $T$: (1) $|\Delta_{1/2} + \Delta_{1/2 + \eps} + \Delta_{1/2 - \eps}|$; (2) $|\Delta_{1/2 + \eps}|$; (3) $|\Delta_{1/2 - \eps}|$.

    \paragraph{The first term.} The first part is done by Lemma~\ref{lemma:random-walk-with-drifts}: The stochastic process $S_t \coloneqq \sum_{t'=1}^{t}(x_{t'} - p_{t'})$ exactly matches the one defined in Lemma~\ref{lemma:random-walk-with-drifts}. Therefore, the first term is exactly the absolute value of $S_T = \sum_{t=1}^{T}(x_{t} - p_{t})$. By Lemma~\ref{lemma:random-walk-with-drifts}, we have
    \[
        \Ex{}{|S_T|}
    =   \int_{0}^{+\infty}\pr{}{|S_T| \ge \tau}~\rmd\tau
    \le e^{1/2}\int_{0}^{+\infty}e^{-\eps\tau}~\rmd\tau
    =   \frac{e^{1/2}}{\eps}
    =   O(T^{1/3}).
    \]

    \paragraph{The second term.} To analyze the second part, it is convenient to assume that the nature samples random bits $b_1, b_2, \ldots, b_T \sim \Bern(1/2)$ independently at the beginning, and uses these bits one by one as the outcomes for the steps on which $1/2 + \eps$ is predicted. More formally, whenever the forecaster predicts $1/2 + \eps$ at time $t$, the nature calculates $k = \sum_{t'=1}^{t}\1{p_{t'} = 1/2 + \eps}$ and sets $x_t = b_k$. Note that this change does not alter the distribution of the random outcomes, and thus the execution of Algorithm~\ref{algo:random-bits} remains unchanged.

    Then, we note that $\Delta_{1/2 + \eps}$ can be written as
    \[
        \sum_{i=1}^{m}b_i - m \cdot (1/2 + \eps),
    \]
    where $m$ is the number of times $1/2 + \eps$ is predicted. By a Chernoff bound and a union bound over $m \in [T]$, for any $\delta \in (0, 1)$, it holds with probability $1 - \delta$ that for all $m \in [T]$,
    \[
        \left|\sum_{i=1}^{m}b_i - \frac{m}{2}\right| \le \sqrt{\frac{T\ln(2T/\delta)}{2}}.
    \]
    The above implies
    \[
        |\Delta_{1/2 + \eps}|
    \le \left|\sum_{i=1}^{m}b_i - \frac{m}{2}\right| + m\eps
    \le \sqrt{\frac{T\ln(2T/\delta)}{2}} + T\eps.
    \]
    Setting $\delta = 1/T$ and $\eps = 1/T^{1/3}$ shows that $|\Delta_{1/2 + \eps}| = O(T^{2/3})$ with probability $1 - 1/T$. Finally, since $|\Delta_{1/2 + \eps}|$ is always upper bounded by $T$, we have
    \[
        \Ex{}{|\Delta_{1/2 + \eps}|} = O(T^{2/3}) + \delta\cdot T = O(T^{2/3}).
    \]

    \paragraph{Wrapping up.} By an analogous argument to the above, we have $\Ex{}{|\Delta_{1/2 - \eps}|} = O(T^{2/3})$. Finally, by Lemma~\ref{lemma:smCE-bound}, the expected smooth calibration error is upper bounded by
    \[
        \Ex{}{|S_T|} + \eps\Ex{}{|\Delta_{1/2 + \eps}|} + \eps\Ex{}{|\Delta_{1/2 - \eps}|}
    \le O(T^{1/3}) + 2T^{-1/3}\cdot O(T^{2/3})
    =   O(T^{1/3}).
    \]
\end{proof}

\begin{corollary}\label{cor:random-bits-caldist}
    The calibration distance incurred by Algorithm~\ref{algo:random-bits} with $\eps = T^{-1/3}$ is $O(T^{1/3})$ in expectation.
\end{corollary}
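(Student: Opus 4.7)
The plan is to chain together three already-established facts: the smooth-calibration bound from Lemma~\ref{lemma:random-bits-smCE}, the constant-factor comparison between smooth calibration error and lower calibration distance from Lemma~\ref{lemma:smCE-vs-lowercaldist}, and the \emph{sparse} version of the approximation guarantee from Theorem~\ref{thm:lowercaldist-vs-caldist}. The crucial structural observation is that Algorithm~\ref{algo:random-bits} only ever predicts values in the three-element set $\{1/2 - \eps,\, 1/2,\, 1/2 + \eps\}$, so the sparsity parameter $m = |\{p_1, \ldots, p_T\}|$ is at most $3$ regardless of $T$. This is precisely the regime where the second bound of Theorem~\ref{thm:lowercaldist-vs-caldist} is useful, since the $O(m) = O(1)$ additive slack is far better than the generic $O(\sqrt{T})$ slack given by the first bound.

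Concretely, the chain I would write down is:
\[
    \Ex{}{\caldist(x, p)}
\le O(1)\cdot\Ex{}{\lowercaldist(x, p)} + O(m)
\le O(1)\cdot\Ex{}{\smCE(x, p)} + O(1)
\le O(T^{1/3}),
\]
where the first inequality is the sparse case of Theorem~\ref{thm:lowercaldist-vs-caldist} applied pointwise to each realization (with $m \le 3$), the second is $\lowercaldist(x, p) \le 2\,\smCE(x, p)$ from Lemma~\ref{lemma:smCE-vs-lowercaldist}, and the third is Lemma~\ref{lemma:random-bits-smCE} with $\eps = T^{-1/3}$.

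There is essentially no obstacle here; the entire content of the corollary is packaging. The one small thing to note carefully is that Theorem~\ref{thm:lowercaldist-vs-caldist} is a deterministic, pointwise statement, so one applies it for each outcome sequence $x$ and prediction sequence $p$ produced by the algorithm, and then takes expectations on both sides using the linearity of expectation together with the constant value of $m$. Had the algorithm used, say, the full $\Theta(T)$ possible values near $1/2$, we would have been forced to use the first (general) bound in Theorem~\ref{thm:lowercaldist-vs-caldist} and obtained only $O(\sqrt{T})$; the point of this corollary is precisely to highlight that the constant sparsity of Algorithm~\ref{algo:random-bits} lets the $O(T^{1/3})$ smooth-calibration bound pass through to the calibration distance without degradation.
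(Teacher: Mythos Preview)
Your proposal is correct and matches the paper's proof essentially verbatim: the paper also invokes Lemma~\ref{lemma:random-bits-smCE} and Lemma~\ref{lemma:smCE-vs-lowercaldist} to get an $O(T^{1/3})$ expected lower calibration distance, then applies the second (sparse) part of Theorem~\ref{thm:lowercaldist-vs-caldist} using the observation that Algorithm~\ref{algo:random-bits} predicts at most three distinct values.
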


\begin{proof}
    Lemmas \ref{lemma:smCE-vs-lowercaldist}~and~\ref{lemma:random-bits-smCE} imply that Algorithm~\ref{algo:random-bits} incurs an $O(T^{1/3})$ lower calibration distance in expectation. Since Algorithm~\ref{algo:random-bits} predicts at most three different values, the corollary follows from the second part of Theorem~\ref{thm:lowercaldist-vs-caldist}.
\end{proof}

\subsection{A Polylogarithmic Calibration Distance for Random Bits}

In the previous section, we saw how a simple strategy improves the calibration distance from $\Theta(T^{1/2})$ to $O(T^{1/3})$ on a random bit sequence of length $T$. It turns out that applying the same idea in a slightly more involved way would reduce the distance significantly, to $O(\log^{3/2}T)$.

The forecaster's strategy starts by predicting the value $1/2$ for $T/2$ steps. After that, we expect an $O(\sqrt{T})$ gap between the counts of ones and zeros so far. Say that the number of ones is larger. Then, in the remaining $T/2$ steps, the forecaster keeps predicting $1/2 + \sqrt{\ln T / T}$, until the sum of $x_t$'s and the sum of $p_t$'s are roughly the same. The key observation is that, when the forecaster succeeds in bringing this difference down to zero, the expected smooth calibration error so far is merely $O(\sqrt{\log T})$. For the remaining time steps, we recursively apply the same strategy (with a smaller value of $T$), and the process must end in $O(\log T)$ rounds. It is relatively easy to show that the errors in different rounds can be aggregated together to give the $O(\log^{3/2}T)$ upper bound.

Formally, we state the strategy of the forecaster in Algorithm~\ref{algo:random-bits-log}.

\begin{algorithm2e}
    \caption{Adaptive-Bias Forecaster for Random Bits}
    \label{algo:random-bits-log}
    \KwIn{Time horizon $T$. Parameter $\eps > 0$. Online access to $x_1, x_2, \ldots, x_T$.}
    \KwOut{Predictions $p_1, p_2, \ldots, p_T$.}
    $t \gets 0$\;
    \For{$r = 1, 2, 3, \ldots$} {
        $T^{(r)} \gets T - t$\;
        \For{$i = 1, 2, \ldots, T^{(r)} / 2$} {
            $t \gets t + 1$\;
            $p_t \gets 1/2$;
            $p^{(r)}_i \gets 1/2$;
            Predict $p_t$\;
            Observe $x_t$;
            $x^{(r)}_i \gets x_t$\;
        }
        $\Delta^{(r)}\gets \sum_{i=1}^{T^{(r)} / 2}(x^{(r)}_i - p^{(r)}_i)$\;\label{line:Delta_r}
        $\eps^{(r)} \gets \sgn\left(\Delta^{(r)}\right)\cdot\min\left\{\frac{2\left|\Delta^{(r)}\right|}{T^{(r)}} + \sqrt{\frac{\ln T^{(r)}}{T^{(r)}}}, \frac{1}{2}\right\}$\;\label{line:eps_r}
        \For{$i = T^{(r)} / 2 + 1, \ldots, T^{(r)}$}{
            $t \gets t + 1$\;
            $p_t \gets 1/2+\eps^{(r)}$;
            $p^{(r)}_i \gets 1/2+\eps^{(r)}$;
            Predict $p_t$\;
            Observe $x_t$;
            $x^{(r)}_i \gets x_t$\;
            \lIf{$\sum_{j=1}^{i}\left(x^{(r)}_j - p^{(r)}_j\right) \in [-1, 1]$}{\textbf{break}}\label{line:break}
        }
        \lIf{$t = T$}{\textbf{break}}
    }
\end{algorithm2e}

We will prove the following bound, which immediately implies Proposition~\ref{prop:random-bits-polylog}.

\begin{lemma}\label{lemma:random-bits-polylog-smCE}
    On a sequence of $T$ independent random bits, the smooth calibration error incurred by Algorithm~\ref{algo:random-bits-log} is $O(\log^{3/2}T)$ in expectation.
\end{lemma}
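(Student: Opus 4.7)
The plan is to invoke Lemma~\ref{lemma:smCE-bound}, which splits $\smCE(x,p)$ into a ``total bias'' term $|\sum_\alpha \Delta_\alpha| = |S_T|$ and a ``weighted bias'' term $\sum_\alpha |\alpha - 1/2|\cdot|\Delta_\alpha|$, and to bound each separately. The only prediction values used by Algorithm~\ref{algo:random-bits-log} are $1/2$ together with $\{1/2 + \eps^{(r)}\}_{r=1}^{R}$, where $R$ denotes the (random) number of rounds played; since the first half of each round always runs to completion, $T^{(r+1)} \le T^{(r)}/2$ deterministically, so $R \le \log_2 T + O(1)$. Write $W^{(r)} \coloneqq \Delta^{(r)}$ (Line~\ref{line:Delta_r}); conditionally on the history before round $r$, $W^{(r)}$ is a sum of $T^{(r)}/2$ i.i.d.\ centered $\pm 1/2$ variables, so $\Ex{}{|W^{(r)}|} = O(\sqrt{T^{(r)}})$ and $\Ex{}{(W^{(r)})^2} = T^{(r)}/8$.

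The next step is to show that, except on a bad event of probability $O(\log T / T)$, the break in Line~\ref{line:break} fires in every round, which will bound the residual within-round bias by $1$. Defining $U_i \coloneqq \sum_{j=1}^{i}(x^{(r)}_j - p^{(r)}_j)$, so that $U_{T^{(r)}/2} = W^{(r)}$, the choice of $\eps^{(r)}$ in Line~\ref{line:eps_r} makes $\Ex{}{U_{T^{(r)}}} = -\sgn(W^{(r)}) \cdot \tfrac12 \sqrt{T^{(r)} \ln T^{(r)}}$ in the regime $|W^{(r)}| \le T^{(r)}/4$, while Hoeffding's inequality gives $\bigl|U_{T^{(r)}} - \Ex{}{U_{T^{(r)}}}\bigr| < \tfrac12 \sqrt{T^{(r)} \ln T^{(r)}}$ with probability $1 - 2/T^{(r)}$. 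On this event $\sgn(U_{T^{(r)}}) \ne \sgn(W^{(r)})$, and since $|U_i - U_{i-1}| \le 1$, the process must pass through $[-1, 1]$ and trigger the break. The complementary regime $|W^{(r)}| > T^{(r)}/4$ has exponentially small probability and will be handled identically.

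On the good event that every round breaks, the first term $|S_T|$ equals $|\sum_r U_{i^*_r}|$, which is at most $R = O(\log T)$. For the second term, the only nonzero contributions come from $\alpha = 1/2 + \eps^{(r)}$, and since the within-round cumulative sum goes from $W^{(r)}$ at the start of the second half to some value in $[-1, 1]$ at the break, the bias on the steps predicting $1/2 + \eps^{(r)}$ obeys $|\Delta_{1/2+\eps^{(r)}}| \le |W^{(r)}| + 1$. Substituting $|\eps^{(r)}| \le 2|W^{(r)}|/T^{(r)} + \sqrt{\ln T^{(r)}/T^{(r)}}$ and taking conditional expectation over $W^{(r)}$, a direct calculation yields
\[
    \Ex{}{|\eps^{(r)}|\cdot(|W^{(r)}| + 1)}
\le \frac{2\,\Ex{}{(W^{(r)})^2} + 2\Ex{}{|W^{(r)}|}}{T^{(r)}} + \sqrt{\frac{\ln T^{(r)}}{T^{(r)}}} \cdot \bigl(\Ex{}{|W^{(r)}|} + 1\bigr) = O\bigl(\sqrt{\log T}\bigr),
\]
using $\Ex{}{(W^{(r)})^2} = T^{(r)}/8$ and $\Ex{}{|W^{(r)}|} = O(\sqrt{T^{(r)}})$. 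Summing over $R = O(\log T)$ rounds gives $O(\log^{3/2} T)$; the trivial bound $\smCE(x,p) \le T$ on the $O(\log T/T)$-probability bad event contributes only $o(1)$, so $\Ex{}{\smCE(x,p)} = O(\log^{3/2} T)$ as desired.

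The main obstacle will be the concentration step showing that the break triggers in every round, because the drift $\tfrac12 \sqrt{T^{(r)} \ln T^{(r)}}$ only barely exceeds the Hoeffding noise at the same scale, so the constants in the tail bound must be tracked carefully and the edge regime where $\eps^{(r)}$ is clamped to $1/2$ needs a separate (though easy) argument. A secondary subtlety is that $T^{(r)}$ is itself random (it depends on when previous rounds broke), which I would handle by carrying out all estimates conditionally on the history up to the start of round $r$ and then taking total expectation.
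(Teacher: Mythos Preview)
Your approach is essentially the paper's: both decompose $\smCE$ into a total-bias term and per-value weighted-bias terms, analyze each round's contribution on the event that the break fires, and control the failure via Hoeffding. The paper packages this slightly differently---it first uses subadditivity of $\smCE$ under concatenation (Lemma~\ref{lemma:smCE-decomp}) so that it suffices to bound $\Ex{}{\smCE(x^{(r)},p^{(r)})}$ for each round---but the per-round computation in its Lemma~\ref{lemma:smCE-single-round} is exactly your $\Ex{}{|\eps^{(r)}|\,(|W^{(r)}|+1)}=O(\sqrt{\log T})$.

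There is, however, one genuine gap in your failure accounting. Your Hoeffding bound gives that round $r$ fails with conditional probability $O(1/T^{(r)})$, not $O(1/T)$. While $T^{(r)}\le T/2^{r-1}$, there is no useful \emph{lower} bound on $T^{(r)}$: it depends on when the previous round happened to break, and late rounds can have $T^{(r)}=O(1)$, giving $\Omega(1)$ conditional failure probability. So the union bound over rounds does not yield $O(\log T/T)$, and pairing the bad event with the crude bound $\smCE(x,p)\le T$ does not give a negligible contribution. The paper avoids this precisely by localizing: conditionally on $T^{(r)}=L$, the contribution of round $r$ on the failure event is at most $L$ (not $T$), so the failure case costs $L\cdot O(1/L)=O(1)$ in expectation per round, hence $O(\log T)$ in total. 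The same localization works in your framework once you write $|S_T|\le\sum_r |S^{(r)}|$ and bound round $r$'s terms by $T^{(r)}$ on its own failure event rather than invoking the global bound $T$.
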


\begin{proof}[Proof of Proposition~\ref{prop:random-bits-polylog}]
    Note that in Algorithm~\ref{algo:random-bits-log}, the outer for-loop is executed at most $O(\log T)$ times. Furthermore, each prediction made by the forecaster is either $1/2$, or a value  $p_i^{(r)}$ specific to a round $r$. Therefore, $p_1, p_2, \ldots, p_T$ contain at most $O(\log T)$ different values. Then, by Lemma~\ref{lemma:smCE-vs-lowercaldist}, Lemma~\ref{lemma:random-bits-polylog-smCE} and the second part of Theorem~\ref{thm:lowercaldist-vs-caldist}, Algorithm~\ref{algo:random-bits-log} achieves
    \[
        \Ex{}{\caldist(x,p)}
    \le O(1)\cdot\Ex{}{\smCE(x, p)} + O(\log T)
    =   O(\log^{3/2}T).
    \]
\end{proof}

Our proof Lemma~\ref{lemma:random-bits-polylog-smCE} is decomposed into two parts: First, we argue that it suffices to bound the expected smooth calibration error on outcomes $x^{(r)}$ and predictions $p^{(r)}$, and their sum gives an upper bound on $\smCE(x, p)$. Second, we show that for every $r$, the error is bounded by $O(\sqrt{\log T})$ in expectation. Lemma~\ref{lemma:random-bits-polylog-smCE} then directly follows, since there are at most $O(\log T)$ rounds.

Formally, we have the following two lemmas. The first lemma simply states that the smooth calibration error is subadditive with respect to sequence concatenation.

\begin{lemma}\label{lemma:smCE-decomp}
    Let $x^{(1)} \in \{0, 1\}^{\len^{(1)}}$, $\ldots$, $x^{(R)} \in \{0, 1\}^{\len^{(R)}}$ be binary sequences, and $p^{(1)} \in [0, 1]^{\len^{(1)}}$, $\ldots$, $p^{(R)} \in [0, 1]^{\len^{(R)}}$ be sequences with the corresponding lengths. Let $x$ and $p$ be the concatenations of $x^{(r)}$ and $p^{(r)}$ in ascending order. Then,
    \[
        \smCE(x, p) \le \sum_{r=1}^{R}\smCE(x^{(r)}, p^{(r)}).
    \]
\end{lemma}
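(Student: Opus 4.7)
The plan is to unfold the definition of $\smCE$ on the concatenated sequence and then exploit the subadditivity of the supremum. Since $x$ is the concatenation of $x^{(1)}, \ldots, x^{(R)}$ and $p$ is the concatenation of $p^{(1)}, \ldots, p^{(R)}$, for any $1$-Lipschitz $f \in \F$ the sum $\sum_{t=1}^{T} f(p_t)(x_t - p_t)$ splits cleanly as
\[
    \sum_{t=1}^{T} f(p_t)(x_t - p_t) = \sum_{r=1}^{R} \sum_{i=1}^{\len^{(r)}} f\!\left(p^{(r)}_i\right)\!\left(x^{(r)}_i - p^{(r)}_i\right).
\]
This is just because the inner products with the $f(p_t)(x_t - p_t)$ summand regrouped by which block $r$ the index $t$ falls into.

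Next I would observe that the same function $f$ is used in every block, so each inner sum is bounded by the supremum over $\F$ restricted to that block, i.e.,
\[
    \sum_{i=1}^{\len^{(r)}} f\!\left(p^{(r)}_i\right)\!\left(x^{(r)}_i - p^{(r)}_i\right) \le \smCE(x^{(r)}, p^{(r)}).
\]
Summing over $r \in [R]$ and taking the supremum over $f \in \F$ on the left-hand side gives the desired inequality, using the elementary fact that $\sup_f \sum_r A_r(f) \le \sum_r \sup_f A_r(f)$.

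There is no real obstacle here: the argument is a one-line application of ``sup of a sum is at most sum of sups,'' which is valid precisely because each block's bound $\smCE(x^{(r)}, p^{(r)})$ is defined as an independent supremum over the same class $\F$. The only thing worth being explicit about is the indexing bookkeeping for the concatenation, which can be phrased by defining partial sums $T^{(r)} \coloneqq \sum_{r'<r} \len^{(r')}$ so that the $t$-th entry of $x$ with $T^{(r)} < t \le T^{(r+1)}$ equals $x^{(r)}_{t - T^{(r)}}$, and similarly for $p$.
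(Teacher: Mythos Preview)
Your proposal is correct and follows essentially the same approach as the paper: unfold the definition of $\smCE$, split the sum over the concatenated sequence into block sums, and apply the inequality $\sup_{f}\sum_r A_r(f) \le \sum_r \sup_f A_r(f)$.
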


The second lemma bounds the expected smooth calibration error in each round.

\begin{lemma}\label{lemma:smCE-single-round}
    Let $R \coloneqq \lceil \log_2T\rceil$. Over the randomness in the execution of Algorithm~\ref{algo:random-bits-log}, define random variables $X_1, X_2, \ldots, X_R$ as follows: For every $r \in [R]$, if the algorithm reaches the $r$-th round, $X_r = \smCE(x^{(r)}, p^{(r)})$; otherwise, $X_r = 0$. Then, for every $r \in [R]$,
    \[
        \Ex{}{X_r} = O(\sqrt{\log T}),
    \]
    where $O(\cdot)$ hides a universal constant that is independent of $T$ and $r$.
\end{lemma}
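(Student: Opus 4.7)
My plan is to condition on the execution reaching round $r$ and on the value $\ell := T^{(r)}$. Given these, the bits $x^{(r)}_1, \ldots, x^{(r)}_\ell$ are fresh independent $\Bern(1/2)$ samples, and $p^{(r)}$ takes only two distinct values, $1/2$ and $1/2 + \eps^{(r)}$. Writing $S^{(r)} := \Delta_{1/2} + \Delta_{1/2 + \eps^{(r)}}$ for the total bias accumulated across round $r$, Lemma~\ref{lemma:smCE-bound} applied to $(x^{(r)}, p^{(r)})$ gives
\[
    \smCE(x^{(r)}, p^{(r)}) \le |S^{(r)}| + |\eps^{(r)}| \cdot |\Delta_{1/2 + \eps^{(r)}}|,
\]
and the elementary identity $|\Delta_{1/2 + \eps^{(r)}}| = |S^{(r)} - \Delta^{(r)}| \le |S^{(r)}| + |\Delta^{(r)}|$ reduces everything to bounding $\Ex{}{|S^{(r)}|}$ and $\Ex{}{|\eps^{(r)}|(|S^{(r)}| + |\Delta^{(r)}|)}$.

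The core step is a case split on the bad event $B_r$ that the break on line~\ref{line:break} is never triggered, i.e., the running bias in the second half never enters $[-1, 1]$. On $\overline{B_r}$ the stopping rule forces $|S^{(r)}| \le 1$. To control $\Pr[B_r]$, I would assume $\Delta^{(r)} > 0$ without loss of generality by symmetry and observe that, since each per-step increment lies in $[-1, 1]$, the running bias cannot jump from above $1$ down to below $-1$ in a single step; hence on $B_r$ the running bias must stay strictly above $1$ throughout the entire second half. In particular, the final position of the \emph{unstopped} second-half process must exceed $1$. By line~\ref{line:eps_r} (setting aside the negligible-probability event that the cap at $1/2$ is active), the expected final position equals $\Delta^{(r)} - (\ell/2)\eps^{(r)} = -\tfrac{1}{2}\sqrt{\ell \ln \ell}$, so the event in question requires the noise to exceed roughly $\tfrac{1}{2}\sqrt{\ell \ln \ell}$. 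A Hoeffding bound on the sum of $\ell/2$ independent, $[-1, 1]$-bounded increments then yields $\Pr[B_r \mid \Delta^{(r)}] = O(1/\ell)$.

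Combining the pieces: on $\overline{B_r}$ we have $|S^{(r)}| \le 1$ and $|\eps^{(r)}| \le \tfrac{2|\Delta^{(r)}|}{\ell} + \sqrt{\tfrac{\ln \ell}{\ell}}$, so the dominant term reduces to $\Ex{}{(\Delta^{(r)})^2}/\ell + \sqrt{\ln \ell/\ell} \cdot \Ex{}{|\Delta^{(r)}|}$. Since $\Delta^{(r)}$ is a sum of $\ell/2$ centered $\Bern(1/2)$ samples, $\Ex{}{(\Delta^{(r)})^2} = \ell/8$ and $\Ex{}{|\Delta^{(r)}|} \le \sqrt{\ell/8}$, yielding an $O(\sqrt{\log \ell})$ bound. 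On $B_r$, both $|S^{(r)}|$ and $|\Delta_{1/2 + \eps^{(r)}}|$ are at most $\ell$ and $|\eps^{(r)}| \le 1/2$, so the bad-event contribution is at most $O(\ell) \cdot \Pr[B_r] = O(1)$. Summing the two contributions and using $\ell \le T$ gives $\Ex{}{X_r \mid \ell} = O(\sqrt{\log T})$; taking the outer expectation over $\ell$ finishes the proof.

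The main obstacle I anticipate is the tail bound $\Pr[B_r] = O(1/\ell)$, which must be handled carefully because the second half uses a data-dependent stopping rule. The cleanest route is the one sketched above: reduce $B_r$ to a tail event for the fixed-length, unstopped second-half process, for which Hoeffding applies directly thanks to the $\sqrt{\ell \ln \ell}$-sized drift margin built into the choice of $\eps^{(r)}$.
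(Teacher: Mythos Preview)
Your proposal is correct and essentially identical to the paper's proof: the same conditioning on $T^{(r)}=\ell$, the same two-value decomposition of $\smCE$ (you invoke Lemma~\ref{lemma:smCE-bound}, the paper re-derives it inline), the same success/failure split with $|S^{(r)}|\le 1$ on success, and the same reduction of the failure probability to a Hoeffding/Chernoff tail on the unstopped second-half sum using the $\tfrac12\sqrt{\ell\ln\ell}$ drift margin. The only cosmetic difference is that the paper argues ``failure $\Rightarrow$ final bias $\ge 0$'' via the intermediate-value observation, whereas you argue the equivalent ``failure $\Rightarrow$ bias stays $>1$ throughout''; both yield $\Pr[B_r]=O(1/\ell)$ and the same $O(\sqrt{\log T})$ bound.
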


It is clear that Lemma~\ref{lemma:random-bits-polylog-smCE} is a directly corollary of Lemmas \ref{lemma:smCE-decomp}~and~\ref{lemma:smCE-single-round}.

\begin{proof}[Proof of Lemma~\ref{lemma:smCE-decomp}]
    By definition of the smooth calibration error, we have
    \begin{align*}
        \smCE(x,p)
    &=   \sup_{f \in \F}\left[\sum_{r=1}^R\sum_{t=1}^{\len^{(r)}}f(p^{(r)}_t)\cdot(x^{(r)}_t - p^{(r)}_t)\right]\\
    &\le\sum_{r=1}^R\sup_{f \in \F}\left[\sum_{t=1}^{\len^{(r)}}f(p^{(r)}_t)\cdot(x^{(r)}_t - p^{(r)}_t)\right]\\
    &=   \sum_{r=1}^R\smCE(x^{(r)}, p^{(r)}).
    \end{align*}
\end{proof}

\begin{proof}[Proof of Lemma~\ref{lemma:smCE-single-round}]
    We fix a round $r \in [R]$, and condition on the event that $T^{(r)} = L$ holds at the beginning of the $r$-th iteration of the outer for-loop in Algorithm~\ref{algo:random-bits-log}. Note that the event $T^{(r)} = L$ only depends on the first $T - L$ bits $x_1, x_2, \ldots, x_{T-L}$, so conditioning on $T^{(r)} = L$ does not change the distribution of the remaining random bits $x_{T-L+1}$ through $x_T$.

    We say that Round~$r$ of Algorithm~\ref{algo:random-bits-log} \emph{succeeds} if the round ends by taking the break on Line~\ref{line:break} in the inner for-loop; Round~$r$ \emph{fails} otherwise.

    \paragraph{The case that Round~$r$ succeeds.} We start by upper bounding the value of $X_r$ assuming that the $r$-th round succeeds. Let $\len^{(r)}$ denote the number of steps in Round~$r$. For $\alpha \in \{1/2, 1/2 + \eps^{(r)}\}$, we define $\Delta_{\alpha}$ as the total bias incurred by the steps on which $\alpha$ is predicted during Round~$r$, i.e.,
    \[
        \Delta_{\alpha} \coloneqq \sum_{i=1}^{\len^{(r)}}\left(x^{(r)}_i - p^{(r)}_i\right) \cdot \1{p^{(r)}_i = \alpha}.
    \]
    By definition of the smooth calibration error, we have
    \begin{align*}
        X_r
    &=   \smCE\left(x^{(r)}, p^{(r)}\right)
    =   \sup_{f \in \F}\left[f(1/2)\Delta_{1/2} + f(1/2 + \eps^{(r)})\Delta_{1/2 + \eps^{(r)}}\right]\\
    &=  \sup_{f \in \F}\left[f(1/2)(\Delta_{1/2} + \Delta_{1/2 + \eps^{(r)}}) + (f(1/2 + \eps^{(r)}) - f(1/2))\Delta_{1/2 + \eps^{(r)}}\right]\\
    &\le    \sup_{f \in \F}\left[f(1/2)(\Delta_{1/2} + \Delta_{1/2 + \eps^{(r)}})\right] + \sup_{f \in \F}\left[(f(1/2 + \eps^{(r)}) - f(1/2))\Delta_{1/2 + \eps^{(r)}}\right]\\
    &\le   \left|\Delta_{1/2} + \Delta_{1/2 + \eps^{(r)}}\right| + \left|\eps^{(r)}\right|\cdot\left|\Delta_{1/2 + \eps^{(r)}}\right|,
    \end{align*}
    where the last step follows since every $f \in \F$ is $1$-Lipschitz and bounded between $-1$ and $+1$.
    
    When Round~$r$ succeeds, we have
    \[
        \Delta_{1/2} + \Delta_{1/2 + \eps^{(r)}}
    =   \sum_{i=1}^{\len^{(r)}}\left(x^{(r)}_i - p^{(r)}_i\right)
    \in [-1, 1],
    \]
    which further implies $\left|\Delta_{1/2 + \eps^{(r)}}\right| \le |\Delta_{1/2}| + 1$.

    Therefore, assuming the success of Round~$r$, we have
    \[
        X_r
    \le 1 + \left|\eps^{(r)}\right|\cdot\left(\left|\Delta_{1/2}\right| + 1\right)
    \le \frac{3}{2} + \frac{2\Delta_{1/2}^2}{L} + |\Delta_{1/2}|\sqrt{\frac{\ln L}{L}}.
    \]

    \paragraph{Control the failure probability.} To show that Round~$r$ succeeds with high probability, we first argue that when Algorithm~\ref{algo:random-bits-log} chooses $\eps^{(r)}$ in Line~\ref{line:eps_r}, the minimum takes the first value with high probability. Indeed, this is true as long as
    \[
        \frac{2|\Delta^{(r)}|}{L} + \sqrt{\frac{\ln L}{L}} \le \frac{1}{2},
    \]
    which is equivalent to
    \[
        |\Delta^{(r)}|
    \le \left(\frac{1}{4} - \frac{1}{2}\sqrt{\frac{\ln L}{L}}\right)\cdot L.
    \]
    For sufficiently large $L$, we have $\frac{1}{2}\sqrt{\ln L / L} \le 1/8$, so the above is true as long as $|\Delta^{(r)}| \le L / 8$. Noting that $\Delta^{(r)}$ is the difference between a sample from $\Binomial(L/2, 1/2)$ and its mean $L/4$, it follows from a Chernoff bound that $|\Delta^{(r)}| \le L / 8$ holds with probability $1 - e^{-\Omega(L)}$.

    Now, assuming that $\eps^{(r)}$ satisfies the aforementioned condition, what is the probability for Round~$r$ to fail? Without loss of generality, assume that $\Delta^{(r)} \ge 0$. Then, the failure of Round~$r$ would imply $\Delta^{(r)} + \sum_{i=L/2+1}^{L}(x^{(r)}_i - p^{(r)}_i) \ge 0$; otherwise there must be a time step $i \in [L/2+1, L]$ at which $\sum_{j=1}^{i}(x^{(r)}_j - p^{(r)}_j)$ falls into the interval $[-1, 1]$, which allows the round to end. Recalling that $p^{(r)}_i = 1/2 + \eps^{(r)}$ for every $i \in [L/2+1, L]$, we can rewrite the inequality $\Delta^{(r)} + \sum_{i=L/2+1}^{L}(x^{(r)}_i - p^{(r)}_i) \ge 0$ as
    \[
        \sum_{i=L/2+1}^{L}x^{(r)}_i - \frac{L}{4}
    \ge -\Delta^{(r)} + \frac{L}{2}\eps^{(r)}
    =   \frac{1}{2}\sqrt{L\ln L},
    \]
    where the second step applies $\eps^{(r)} = 2\Delta^{(r)}/L + \sqrt{\ln L / L}$. Again, since the left-hand side above is a Binomial random variable (from $\Binomial(L/2, 1/2)$) minus its mean, the probability for the above inequality to hold is, by a Chernoff bound, at most
    \[
        \exp\left(-2\cdot\frac{L}{2}\cdot\left(\frac{\sqrt{L\ln L}/2}{L/2}\right)^2\right) = \frac{1}{L}.
    \]
    Therefore, we conclude that the probability for Round~$r$ to fail (conditioning on $T^{(r)} = L$) is at most $e^{-\Omega(L)} + \frac{1}{L} = O(1/L)$.

    \paragraph{Put everything together.} Our analysis for the case that Round~$r$ succeeds, together with the observation that $X_r$ is always at most $L$, implies that when $T^{(r)} = L$,
    \[
        X_r \le \frac{3}{2} + \frac{2\Delta_{1/2}^2}{L} + |\Delta_{1/2}|\sqrt{\frac{\ln L}{L}} + L \cdot \1{\text{Round }r\text{ fails}}
    \]
    always holds. Therefore, we have
    \begin{align*}
        \Ex{}{X_r|T^{(r)} = L}
    &\le\Ex{}{\frac{3}{2} + \frac{2\Delta_{1/2}^2}{L} + |\Delta_{1/2}|\sqrt{\frac{\ln L}{L}}\Bigg|T^{(r)} = L} + L\cdot\pr{}{\text{Round }r\text{ fails}\Big|T^{(r)} = L}\\
    &\le O(\sqrt{\log L}) + L \cdot O(1/L)\\
    &=  O(\sqrt{\log T}). \tag{$L \le T$}
    \end{align*}
    The second step above applies the observation that conditioning on $T^{(r)} = L$, $\Delta_{1/2}$ is the difference between a sample from $\Binomial(L/2, 1/2)$ and its mean $L/4$, which implies $\Ex{}{\Delta^2_{1/2}} = O(L)$ and $\Ex{}{|\Delta_{1/2}|} = O(\sqrt{L})$.

    Finally, the bound on $\Ex{}{X_r}$ follows from taking an expectation over the value of $L = T^{(r)}$.
\end{proof}

\section{Proof of the Lower Bound}\label{sec:lower}
We prove Theorem~\ref{thm:lower} in this section. It is sufficient to lower bound the smooth calibration error incurred by the forecaster, since by Remark~\ref{remark:optimal-transport} and Lemma~\ref{lemma:smCE-vs-lowercaldist}, we have
\[
    \caldist(x, p)
\ge \lowercaldist(x, p)
\ge \frac{1}{2}\smCE(x, p).
\]

Recall from Lemma~\ref{lemma:random-bits-polylog-smCE} that, on a random bit sequence, a forecaster \emph{might} achieve an $o(T^{1/3})$ smooth calibration error in the end. The following lemma states that, in this case, the forecaster must incur an $\Omega(T^{1/3})$ bias---defined as the difference between the total outcome and the total predictions---at some point. Later, we prove Theorem~\ref{thm:lower} by giving a simple adaptive strategy for the adversary that aims to catch this large bias.

\begin{lemma}\label{lemma:max-smCE-lower-bound}
    There exists a universal constant $c > 0$ such that the following holds for all sufficiently large $T$ and every forecaster $\A$: For $t \in [T]$, let random variable $S_t$ denote the value of $\sum_{t'=1}^{t}(x_{t'} - p_{t'})$ when $\A$ is executed against $T$ independent random bits. Then, at least one of the following two holds:
    \[
        \pr{}{\max_{t \in [T]}|S_t| \ge cT^{1/3}} \ge c
    \]
    or
    \[
        \Ex{}{\smCE(x, p)} \ge cT^{1/3},
    \]
    where the probability and expectation are over the randomness in both the random bits and the algorithm $\A$.
\end{lemma}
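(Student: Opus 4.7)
The plan is to instantiate the outline in Section~\ref{sec:overview-lower}. Write $\eps_t \coloneqq 1/2 - p_t \in [-1/2, 1/2]$ and $\delta_t \coloneqq x_t - 1/2 \in \{\pm 1/2\}$, so that $S_t = \sum_{t'=1}^{t}(\eps_{t'} + \delta_{t'})$. Since $x_1, \ldots, x_T$ are i.i.d.\ uniform and $\A$'s prediction $p_t$ is measurable with respect to $(x_{1:t-1}, p_{1:t-1})$ together with $\A$'s own internal randomness, $\eps_t$ is independent of $\delta_t$ and $\Ex{}{\delta_t} = 0$. The function $f(u) \coloneqq 1/2 - u$ is $1$-Lipschitz from $[0,1]$ to $[-1/2, 1/2] \subseteq [-1,1]$, so $f \in \F$, and plugging it into the definition of $\smCE$ yields the pointwise bound
\[
    \smCE(x, p) \;\ge\; \sum_{t=1}^{T} \eps_t(\eps_t + \delta_t).
\]
Taking expectations and using $\Ex{}{\eps_t \delta_t} = 0$ gives $\Ex{}{\smCE(x, p)} \ge \Ex{}{\sum_t \eps_t^2}$, so it suffices to prove the second conclusion of the lemma with $\smCE(x,p)$ replaced by $\sum_t \eps_t^2$.

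Next, I would partition $[T]$ into $K \coloneqq \lfloor T^{1/3}\rfloor$ consecutive epochs $I_1, \ldots, I_K$ of length between $T^{2/3}$ and $2T^{2/3}$, and set $D_k \coloneqq \sum_{t \in I_k}\delta_t$ and $E_k \coloneqq \sum_{t \in I_k}\eps_t$, so that the increment of $S_t$ over epoch $k$ equals $E_k + D_k$. Since each $D_k$ is a sum of $\Theta(T^{2/3})$ i.i.d.\ $\pm 1/2$ signs, Rademacher anti-concentration (e.g., Berry--Esseen, with third absolute moment $1/8$) provides an absolute constant $c_0 \in (0, 1]$ for which $\pr{}{|D_k| \ge 3c_0 T^{1/3}} \ge 1/2$ uniformly in $k$ and in all sufficiently large $T$. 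Call epoch $k$ \emph{good} when this event holds, and let $N$ be the number of good epochs, so $\Ex{}{N} \ge K/2 \ge T^{1/3}/4$ for all large $T$.

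The deterministic core is the following observation. On any sample path where $\max_{t \in [T]}|S_t| < c_0 T^{1/3}$ and epoch $k$ is good, the triangle inequality forces $|E_k| \ge |D_k| - 2c_0 T^{1/3} \ge c_0 T^{1/3}$, and Cauchy--Schwarz then gives $\sum_{t \in I_k}\eps_t^2 \ge E_k^2/|I_k| \ge c_0^2/2$. Summing over good epochs (the contribution is $0$ otherwise) yields the pointwise inequality
\[
    \sum_{t=1}^{T}\eps_t^2
\;\ge\; \frac{c_0^2}{2}\cdot N \cdot \1{\max_{t \in [T]}|S_t| < c_0 T^{1/3}}.
\]
Taking expectations and bounding $N \le K \le T^{1/3}$,
\[
    \Ex{}{\sum_{t}\eps_t^2}
\;\ge\; \frac{c_0^2}{2}\left(\Ex{}{N} - T^{1/3}\cdot\pr{}{\max_{t}|S_t| \ge c_0 T^{1/3}}\right).
\]
Fix a sufficiently small positive constant $c \le c_0$ (depending only on $c_0$). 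If $\pr{}{\max_{t}|S_t| \ge c T^{1/3}} \ge c$, the first conclusion of the lemma holds. Otherwise, since $c \le c_0$, we have $\pr{}{\max_{t}|S_t| \ge c_0 T^{1/3}} \le \pr{}{\max_{t}|S_t| \ge c T^{1/3}} < c$, so the display above gives $\Ex{}{\sum_t \eps_t^2} \ge \frac{c_0^2}{2}(T^{1/3}/4 - cT^{1/3}) \ge cT^{1/3}$ for small enough $c$, and the second conclusion follows from $\Ex{}{\smCE(x,p)} \ge \Ex{}{\sum_t \eps_t^2}$.

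The main obstacle is making the anti-concentration estimate for $D_k$ hold uniformly in $k$ and in $T$ (routine from Berry--Esseen, but it must be stated carefully since the constant $c_0$ has to be independent of $T$) and correctly handling the measurability that justifies $\Ex{}{\eps_t \delta_t} = 0$ when $\A$ is randomized (one models $\A$'s randomness as an independent auxiliary source, so that $\eps_t$ is a function of $x_{1:t-1}$ and that source and is therefore independent of $\delta_t$). Everything else---a single linear test function, one Cauchy--Schwarz application per epoch, and a union-bound-style expectation calculation---is routine.
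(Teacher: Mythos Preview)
Your argument is correct and shares the paper's core ingredients: the test function $f(u)=1/2-u$ to reduce $\Ex{}{\smCE}$ to $\Ex{}{\sum_t\eps_t^2}$, the $T^{1/3}$-many epochs of length $\Theta(T^{2/3})$, binomial anti-concentration to produce ``good'' epochs, and Cauchy--Schwarz to convert a large $|E_k|$ into a constant-sized contribution to $\sum_{t\in I_k}\eps_t^2$.

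The organization differs in a useful way. The paper does an upfront case split on $\Ex{}{\sum_t\eps_t^2}$; in the small-expectation case it uses a Chernoff bound (to get many good epochs with high probability), Markov's inequality (to get many ``weak'' epochs with $\sum_{t\in I_k}\eps_t^2$ small), and a union bound to locate one epoch that is both, which then certifies the first conclusion directly. You instead derive a single pointwise inequality $\sum_t\eps_t^2 \ge \tfrac{c_0^2}{2}N\cdot\1{\max_t|S_t|<c_0T^{1/3}}$ and bound $\Ex{}{N}$ by linearity alone, so no Chernoff step, no weak-epoch concept, and no union bound are needed. This is a genuine streamlining of the same argument; the paper's route, on the other hand, is slightly more self-contained about the anti-concentration constant (it proves its own Lemma~\ref{lemma:binomial-anti-concentration} rather than invoking Berry--Esseen).
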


We first show how Theorem~\ref{thm:lower} follows from Lemma~\ref{lemma:max-smCE-lower-bound}. Given an algorithm for the forecaster, if the second condition in the lemma holds, we immediately get the desired lower bound. If the first condition holds, we let the adversary keep outputting independent random bits until $|S_t|$ reaches $cT^{1/3}$, at which point the adversary deviates from giving random bits, and starts outputting a fixed bit instead. The key is to ensure that after the adversary deviates, the smooth calibration error at the end of the $T$ steps is still $\Omega(T^{1/3})$.

\begin{proof}[Proof of Theorem~\ref{thm:lower}]
    Let $c$ be the constant in Lemma~\ref{lemma:max-smCE-lower-bound}. Consider the following \emph{mixed} adversary:
    \begin{itemize}
        \item First, the adversary decides whether it is \emph{oblivious} or \emph{adaptive} uniformly at random.
        \item If the adversary decides to be oblivious, output $T$ independent random bits; otherwise, proceed with the following steps.
        \item At each step $t$, independently draw the outcome $x_t \sim \Bern(1/2)$, until the game ends or $S_t \ge c T^{1/3}$ holds at some point.
        \item If $S_t > 0$, keep outputting bit $1$ for the remaining $T - t$ steps; otherwise, output zeros for the rest of the game.
    \end{itemize}

    Fix an arbitrary algorithm $\A$. Since $\caldist(x, p) \ge \frac{1}{2}\smCE(x, p)$, it suffices to prove that $\A$ incurs an $\Omega(T^{1/3})$ smooth calibration error against the mixed adversary defined above. By Lemma~\ref{lemma:max-smCE-lower-bound}, at least one of the two conditions in the lemma must hold when $\A$ runs on $T$ random bits. If the latter holds, i.e., $\Ex{}{\smCE(x, p)} \ge cT^{1/3}$, we get the desired lower bound. This is because the mixed adversary chooses to be oblivious with probability $1/2$, which implies a lower bound of $cT^{1/3} / 2 = \Omega(T^{1/3})$ on the expected smooth calibration error.
    
    Otherwise, assume that the former condition holds. Consider two instances of the algorithm, denoted by $\A_1$ and $\A_2$, such that $\A_1$ runs on the oblivious adversary, whereas $\A_2$ runs against the adaptive adversary. Importantly, the two instances are coupled such that the two adversaries use the same random bits, and $\A_1$ and $\A_2$ share their internal randomness.

    Let $\event$ denote the event that, for instance $\A_1$, $\max_{t \in [T]}|S_t| \ge cT^{1/3}$ holds. We will show that, whenever event $\event$ happens, the other instance $\A_2$ gives $\smCE(x, p) \ge cT^{1/3}$. It would then follow from the first condition in Lemma~\ref{lemma:max-smCE-lower-bound} that
    \[
        \pr{\A_2}{\smCE(x, p) \ge cT^{1/3}}
    \ge \pr{\A_1}{\event}
    \ge c.
    \]
    Finally, since the mixed adversary decides to be adaptive with probability $1/2$, we have a lower bound of $\frac{c}{2} \cdot cT^{1/3} = \Omega(T^{1/3})$ on the smooth calibration error when $\A$ faces the mixed adversary.

    Now assume that $\event$ happens. Let $t$ be the first time step at which $|S_t| \ge cT^{1/3}$ holds. By definition, the adaptive adversary in the $\A_2$ instance deviates from the random bits at time $t$. If $S_t > 0$, we have $x_{t+1} = x_{t+2} = \cdots = x_T = 1$, which implies
    \[
        \smCE(x, p)
    \ge \sum_{t'=1}^{T}(x_{t'} - p_{t'})
    \ge \sum_{t'=1}^{t}(x_{t'} - p_{t'})
    =   S_t \ge cT^{1/3}.
    \]
    Similarly, if $S_t < 0$, we get $x_{t+1} = \cdots = x_T = 0$, which gives
    \[
        \smCE(x, p)
    \ge \sum_{t'=1}^{T}(-1)\cdot(x_{t'} - p_{t'})
    =   \sum_{t'=1}^{T}(p_{t'} - x_{t'})
    \ge \sum_{t'=1}^{t}(p_{t'} - x_{t'})
    =   -S_t
    \ge cT^{1/3}.
    \]
    This completes the proof.
\end{proof}

To prove Lemma~\ref{lemma:max-smCE-lower-bound}, we use the following standard anti-concentration bound for Binomial distributions, for which we give a proof in Appendix~\ref{sec:lower-proofs} for completeness.

\begin{lemma}\label{lemma:binomial-anti-concentration}
    For all sufficiently large integer $n$,
    \[
        \pr{X \sim \Binomial(n, 1/2)}{|X - n / 2| \ge \sqrt{n} / 10}
    \ge \frac{3}{4}.
    \]
\end{lemma}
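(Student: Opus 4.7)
The plan is to derive the anti-concentration bound from the central limit theorem applied to the normalized binomial. Writing $Y_n \coloneqq (X - n/2)/(\sqrt{n}/2)$ with $X \sim \Binomial(n, 1/2)$, the random variable $Y_n$ is a sum of $n$ i.i.d.\ mean-zero bounded random variables (namely $2B_i - 1$ with $B_i \sim \Bern(1/2)$, each rescaled by $1/\sqrt{n}$), and has variance $1$. By the central limit theorem (or Berry--Esseen, which additionally yields an explicit $O(1/\sqrt{n})$ rate), $Y_n$ converges in distribution to a standard normal $Z \sim \mathcal{N}(0, 1)$.

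The event $|X - n/2| \ge \sqrt{n}/10$ is exactly $|Y_n| \ge 1/5$. The second step is to observe that
\[
    \pr{}{|Z| \ge 1/5} = 2(1 - \Phi(0.2)) > 0.84,
\]
using a standard numerical bound on the Gaussian CDF at $0.2$ (e.g., $\Phi(0.2) < 0.58$, which follows from integrating $\phi(z) = (2\pi)^{-1/2}e^{-z^2/2}$ on $[0, 0.2]$ and using the simple bound $\phi(z) \le \phi(0) = 1/\sqrt{2\pi} < 0.4$).

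Combining the two ingredients, convergence in distribution (together with the fact that the set $\{y : |y| \ge 1/5\}$ has a boundary of Lebesgue measure zero under the normal law) gives $\pr{}{|Y_n| \ge 1/5} \to \pr{}{|Z| \ge 1/5} > 0.84$. Hence for all sufficiently large $n$, the left-hand side exceeds $3/4$, which is what we need. The only potentially delicate step is the numerical lower bound on the Gaussian probability; everything else is entirely routine. I do not foresee real obstacles, since the gap $0.84 > 0.75$ leaves ample slack against the CLT error, and one can instead invoke Berry--Esseen to make the ``sufficiently large $n$'' explicit if desired.
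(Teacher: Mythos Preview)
Your argument is correct. It differs from the paper's proof, which is more elementary and avoids the central limit theorem altogether: the paper bounds the maximum point mass of $\Binomial(n,1/2)$ by $(1+o_n(1))\sqrt{2/\pi}/\sqrt{n} \le 1/\sqrt{n}$ via Stirling's approximation (treating the even and odd cases separately), and then simply counts that there are at most $2\cdot\sqrt{n}/10 + 1$ integers $j$ with $|j - n/2| < \sqrt{n}/10$, yielding $\pr{}{|X - n/2| < \sqrt{n}/10} \le (1/\sqrt{n})(\sqrt{n}/5 + 1) \le 1/4$ for large $n$. Your CLT route is shorter and conceptually cleaner, and (as you note) upgrading to Berry--Esseen would even make the threshold on $n$ explicit; the paper's route, by contrast, is entirely self-contained, relying only on Stirling and a crude mode bound rather than any distributional limit theorem.
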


Now, we prove Lemma~\ref{lemma:max-smCE-lower-bound}.

\begin{proof}[Proof of Lemma~\ref{lemma:max-smCE-lower-bound}]
    We prove the lemma for $c = 1/3600$. Fix an algorithm $\A$. Let random variable $\eps_t$ denote the value of $p_t - 1/2$ when we run $\A$ on a sequence of $T$ random bits. The key quantity in our proof will be $X \coloneqq \sum_{t=1}^{T}\eps_t^2$.

    \paragraph{Case 1. $X$ is large in expectation.} If $\Ex{}{X} \ge cT^{1/3}$, we argue that $\smCE(x, p)$ will be large as well. Consider the function $f: v \mapsto 1/2 - v$ over $[0, 1]$, which is clearly $1$-Lipschitz and bounded between $-1$ and $1$. By definition of the smooth calibration error,
    \[
        \smCE(x, p)
    \ge \sum_{t=1}^{T}f(p_t)\cdot (x_t - p_t)
    =   \sum_{t=1}^{T}(1/2 - p_t)\cdot (x_t - p_t).
    \]
    For every $t \in [T]$, note that $x_t$ and $p_t$ are independent conditioning on $x_{1:(t-1)}$ and $p_{1:(t-1)}$. Thus, we have $\Ex{}{(1/2 - p_t)\cdot(x_t - 1/2)} = 0$, which implies
    \[
        \Ex{}{(1/2 - p_t)\cdot(x_t - p_t)}
    =   \Ex{}{(1/2 - p_t)\cdot (x_t - 1/2)}  - \Ex{}{(1/2 - p_t)\cdot (p_t - 1/2)}
    =   \Ex{}{\eps_t^2}.
    \]
    We conclude that $\Ex{}{\smCE(x, p)}
    \ge \Ex{}{\sum_{t=1}^{T}\eps_t^2}
    =   \Ex{}{X}
    \ge cT^{1/3}$.

    \paragraph{Case 2. $X$ is small in expectation.} Pick $L = \lfloor T^{2/3}\rfloor$ and let $m = \lfloor T / L\rfloor \ge T^{1/3}$. We divide the first $mL \le T$ steps into $m$ epochs, each of length $L$. For $i \in [m]$, let $\T_i \coloneqq \{(i - 1)\cdot L + 1, (i - 1) \cdot L + 2, \ldots, i \cdot L\}$ denote the time steps in the $i$-th epoch. We say that the $i$-th epoch is \emph{good}, if
    \[
        \left|\sum_{t \in \T_i}x_t - \frac{L}{2}\right| \ge \frac{\sqrt{L}}{10}.
    \]
    The $i$-th epoch is called \emph{weak}, if
    \[
        \sum_{t \in \T_i}\eps_t^2 \le \frac{1}{400}.
    \]

    We first note that with high probability, there are many good epochs and many weak epochs. For good epochs, the claim follows from Lemma~\ref{lemma:binomial-anti-concentration} and a Chernoff bound: The quantity $\sum_{t \in \T_i}x_t$ follows the Binomial distribution $\Binomial(L, 1/2)$. By Lemma~\ref{lemma:binomial-anti-concentration}, the probability for each epoch to be good is at least $3/4$, as long as $L$ is sufficiently large. By a Chernoff bound, with probability at least $1 - \exp(-\Omega(m))$, there are at least $\frac{2}{3}m$ good epochs. Again, when $T$ is sufficiently large, the failure probability $\exp(-\Omega(m))$ is at most $1/3$.

    For weak epochs, recall that we are under the assumption that $\Ex{}{X} = \Ex{}{\sum_{t=1}^{T}\eps_t^2} \le cT^{1/3} = T^{1/3} / 3600$. The expected number of epochs that are not weak is then at most $\frac{T^{1/3} / 3600}{1/400} = \frac{1}{9}T^{1/3}$. By Markov's inequality, the probability that there are $\ge \frac{1}{3}T^{1/3}$ epochs that are not weak is at most $\frac{\frac{1}{9}T^{1/3}}{\frac{1}{3}T^{1/3}} = \frac{1}{3}$. In other words, with probability at least $2/3$, there are at least $\frac{2}{3}T^{1/3} \ge \frac{2}{3}m$ weak epochs.

    By a union bound, with probability at least $1/3$, there are at least $\frac{2}{3}m$ good epochs and at least $\frac{2}{3}m$ weak epochs. In particular, with probability at least $1/3$, there exists an epoch $i$ that is both good and weak. Recall that $S_t$ is the partial sum of $(x_t - p_t)$. We have
    \[
        S_{iL} - S_{(i-1)L}
    =   \sum_{t \in \T_i}(x_t - p_t)
    =   \sum_{t \in \T_i}(x_t - 1/2) - \sum_{t \in \T_i}(p_t - 1/2)
    =   \left(\sum_{t \in \T_i}x_t - \frac{L}{2}\right) - \sum_{t \in \T_i}\eps_t.
    \]
    By definition of good and weak epochs, we have $\left|\sum_{t \in \T_i}x_t - \frac{L}{2}\right| \ge \frac{\sqrt{L}}{10}$ and $\left|\sum_{t \in \T_i}\eps_t\right|
    \le \sqrt{L}\cdot\sqrt{\sum_{t \in \T_i}\eps_t^2}
    \le \frac{\sqrt{L}}{20}$, where the second bound follows from the Cauchy-Schwarz inequality.
    These two bounds further imply
    \[
        |S_{iL} - S_{(i-1)L}| \ge \frac{\sqrt{L}}{10} - \frac{\sqrt{L}}{20}
        =   \frac{\sqrt{L}}{20},
    \]
    and thus,
    \[
        \max_{t \in [T]}|S_t|
    \ge \max\{S_{iL}, S_{(i-1)L}\}
    \ge \frac{\sqrt{L}}{40}.
    \]
    Recall that $L = \lfloor T^{2/3} \rfloor$ and $c = 1/3600$. For all sufficiently large $T$, the above gives a stronger guarantee than $\pr{}{\max_{t \in [T]}|S_t| \ge cT^{1/3}} \ge c$. This completes the proof.
\end{proof}

\bibliographystyle{alpha}
\bibliography{main}

\newpage

\appendix

\section{Basic Facts about Calibration Measures}\label{sec:preliminaries-proofs}

We first show that the calibration distance is always upper bounded by the ECE.

\begin{proposition}\label{prop:caldist-vs-ECE}
    For any $x \in \{0, 1\}^T$ and $p \in [0, 1]^T$,
    \[
        \caldist(x, p) \le \ECE(x, p).
    \]
\end{proposition}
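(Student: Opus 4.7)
The plan is to exhibit an explicit calibrated prediction sequence $q \in \Cal(x)$ whose $L_1$ distance to $p$ matches the ECE exactly. For each value $\alpha$ that appears among $p_1, \ldots, p_T$, let $T_\alpha \coloneqq \{t \in [T] : p_t = \alpha\}$ and $n_\alpha \coloneqq |T_\alpha|$. I would define $q_t \coloneqq \bar{x}_\alpha$ for every $t \in T_\alpha$, where $\bar{x}_\alpha \coloneqq \frac{1}{n_\alpha} \sum_{t \in T_\alpha} x_t$ is the empirical frequency of ones among the steps on which $\alpha$ was predicted.

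Next I would verify that $q \in \Cal(x)$. The key observation is that $\{t : q_t = \beta\}$ is a disjoint union of sets $T_\alpha$ ranging over those $\alpha$ with $\bar{x}_\alpha = \beta$. On each such $T_\alpha$, the empirical frequency of ones equals $\bar{x}_\alpha = \beta$ by construction, so $\sum_{t \in T_\alpha}(x_t - \beta) = 0$. Summing over the finitely many $\alpha$'s with $\bar{x}_\alpha = \beta$ shows that $\sum_{t}(x_t - q_t) \cdot \1{q_t = \beta} = 0$ for every $\beta \in [0, 1]$, establishing perfect calibration.

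Finally I would compute $\|p - q\|_1$ bin by bin: for each $\alpha$, the contribution of $T_\alpha$ is $n_\alpha |\alpha - \bar{x}_\alpha| = \left|n_\alpha \alpha - \sum_{t \in T_\alpha} x_t\right| = \left|\sum_{t \in T_\alpha}(x_t - \alpha)\right| = |\Delta_\alpha(T)|$. Summing over $\alpha$ in the (finite) support of $p$ gives $\|p - q\|_1 = \ECE(x, p)$, and since $\caldist(x, p) \le \|p - q\|_1$ by definition, the proposition follows.

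There is no real obstacle here; the argument is essentially a one-line construction plus two routine checks. The only subtlety worth flagging is the need to allow different $\alpha$'s to collapse to the same $\bar{x}_\alpha$ (which is why I verify calibration by grouping the $T_\alpha$'s that land on a common value $\beta$), but this causes no difficulty because calibration aggregates additively over such groups.
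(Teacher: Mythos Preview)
Your proposal is correct and essentially identical to the paper's proof: both set $q_t$ to be the empirical frequency of ones on the bin $\{t':p_{t'}=p_t\}$, verify calibration by grouping bins that map to the same value $\beta$, and compute $\|p-q\|_1$ bin by bin to recover the ECE. You have also explicitly flagged the only subtlety (collisions $\bar x_\alpha = \bar x_{\alpha'}$), which the paper handles in the same way.
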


\begin{proof}
    For each $t \in [T]$, define
    \[
        q_t \coloneqq g(p_t) \coloneqq \frac{\sum_{t'=1}^{T}x_{t'}\cdot\1{p_{t'} = p_t}}{\sum_{t'=1}^{T}\1{p_{t'} = p_t}}
    \]
    as the actual frequency of ones when the value $p_t$ is predicted. Note that $q_t$ is completely determined by $p_t$.

    We first show that $q$ is in $\Cal(x)$: For any $\alpha \in [0, 1]$, it holds that
    \begin{align*}
        \sum_{t=1}^{T}(x_t - q_t)\cdot\1{q_t = \alpha}
    &=  \sum_{\beta \in [0, 1]}\1{g(\beta) = \alpha}\sum_{t=1}^{T}(x_t - \alpha)\cdot\1{p_t = \beta}\\
    &=  \sum_{\beta \in [0, 1]}\1{g(\beta) = \alpha}\left[\sum_{t=1}^{T}x_t\cdot\1{p_t = \beta} - g(\beta)\cdot\sum_{t=1}^{T}\1{p_t = \beta}\right]\\
    &=  0,
    \end{align*}
    where the last step follows from the definition of $g(\cdot)$.
    This shows $q \in \Cal(x)$.

    Then, we compute the distance $\|p - q\|_1$:
    \[
        \|p - q\|_1
    =   \sum_{t=1}^{T}|p_t - q_t|
    =   \sum_{\beta \in [0, 1]}\sum_{t=1}^{T}|p_t - q_t|\cdot\1{p_t = \beta}.
    \]
    Fix $\beta \in \{p_1, p_2, \ldots, p_T\}$. We note that $p_t = \beta$ implies $q_t = g(\beta)$. It follows that
    \begin{align*}
        \sum_{t=1}^{T}|p_t - q_t|\cdot\1{p_t = \beta}
    &=  \left|\beta - g(\beta)\right|\cdot\sum_{t=1}^{T}\1{p_t = \beta}\\
    &=  \left|\beta\cdot\sum_{t=1}^{T}\1{p_t = \beta} - g(\beta)\cdot\sum_{t=1}^{T}\1{p_t = \beta}\right|\\
    &=  \left|\sum_{t=1}^{T}p_t\cdot\1{p_t = \beta} - \sum_{t=1}^{T}x_t\cdot\1{p_t = \beta}\right|\\
    &=  \left|\sum_{t=1}^{T}(x_t - p_t)\cdot\1{p_t = \beta}\right|.
    \end{align*}

    Summing over $\beta \in [0, 1]$ gives
    \[
        \|p - q\|_1
    =   \sum_{\beta \in [0, 1]}\left|\sum_{t=1}^{T}(x_t - p_t)\cdot\1{p_t = \beta}\right|
    =   \ECE(x, p).
    \]

    Since we showed that $q \in \Cal(x)$ and $\|p - q\|_1 = \ECE(x, p)$, the definition of $\caldist(x, p)$ immediately implies $\caldist(x, p) \le \ECE(x, p)$.
\end{proof}

The following lemma states the Lipschitz continuity of $\caldist$, $\lowercaldist$ and $\smCE$.

\begin{lemma}\label{lemma:Lipschitz-continuity}
    ~~The following inequalities hold for any $x \in \{0, 1\}^T$ and $p, \tilde p \in [0, 1]^T$:
    \[
        |\caldist(x, p) - \caldist(x, \tilde p)| \le \|p - \tilde p\|_1,
    \]
    \[
        |\lowercaldist(x, p) - \lowercaldist(x, \tilde p)| \le \|p - \tilde p\|_1,
    \]
    \[
        |\smCE(x, p) - \smCE(x, \tilde p)| \le 2\|p - \tilde p\|_1.
    \]
\end{lemma}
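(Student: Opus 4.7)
\textbf{Proof plan for Lemma~\ref{lemma:Lipschitz-continuity}.}

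The plan is to handle each of the three calibration measures separately. In all three cases, the measure is defined as an optimum (either an infimum or a supremum) of a sum of ``per-step cost'' terms, so the natural strategy is to take a near-optimal witness that certifies the value on one of the two sides, use it as a (possibly suboptimal) witness on the other side, and bound the resulting change via the triangle inequality. In what follows I sketch the three arguments; by the symmetry of swapping the roles of $p$ and $\tilde p$ it is enough in each case to bound, say, $\caldist(x,p) - \caldist(x,\tilde p)$ from above by $\|p - \tilde p\|_1$ (and analogously for the other two).

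For $\caldist$, let $q \in \Cal(x)$ be a minimizer for $\caldist(x,\tilde p)$ (this exists because $\Cal(x)$ is finite by the footnote after its definition). Then $q$ is a feasible witness for $\caldist(x,p)$ as well, and the ordinary triangle inequality on the $\ell_1$ norm gives $\caldist(x,p) \le \|p - q\|_1 \le \|p - \tilde p\|_1 + \|\tilde p - q\|_1 = \|p - \tilde p\|_1 + \caldist(x, \tilde p)$. The argument for $\lowercaldist$ is the same, except that the witness is now a tuple of distributions $\D \in \lowerCal(x)$; the only extra ingredient is the inequality $\|p - \D\|_1 \le \|p - \tilde p\|_1 + \|\tilde p - \D\|_1$, which follows termwise from $\Ex{q_t \sim \D_t}{|p_t - q_t|} \le |p_t - \tilde p_t| + \Ex{q_t \sim \D_t}{|\tilde p_t - q_t|}$. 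Since the infimum in the definition of $\lowercaldist$ might not be attained, one picks a witness with $\|\tilde p - \D\|_1 \le \lowercaldist(x,\tilde p) + \eta$ for arbitrarily small $\eta > 0$ and then sends $\eta \to 0$.

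For $\smCE$, the additional complication is that two things change simultaneously when we swap $p$ for $\tilde p$: the argument of $f$ and the residual $x_t - p_t$. Let $f \in \F$ be any $1$-Lipschitz function bounded in $[-1,1]$. I would write out the telescoping identity
\[
    f(p_t)(x_t - p_t) - f(\tilde p_t)(x_t - \tilde p_t)
=   [f(p_t) - f(\tilde p_t)](x_t - p_t) + f(\tilde p_t)(\tilde p_t - p_t),
\]
and then bound the first summand by $|p_t - \tilde p_t|$ using $|f(p_t) - f(\tilde p_t)| \le |p_t - \tilde p_t|$ (Lipschitzness) and $|x_t - p_t| \le 1$, and the second summand by $|\tilde p_t - p_t|$ using $|f(\tilde p_t)| \le 1$. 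Summing over $t$ yields
\[
    \sum_{t=1}^{T} f(p_t)(x_t - p_t) \le \sum_{t=1}^{T} f(\tilde p_t)(x_t - \tilde p_t) + 2\|p - \tilde p\|_1 \le \smCE(x,\tilde p) + 2\|p - \tilde p\|_1,
\]
and taking the supremum over $f \in \F$ on the left completes the direction we need.

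There is no real obstacle here; the only point that requires a moment's care is the attainability issue in the $\lowercaldist$ case (handled by a vanishing slack $\eta$) and the factor of $2$ for $\smCE$, which is necessary because both the function value and the argument contribute a $\|p - \tilde p\|_1$-sized perturbation.
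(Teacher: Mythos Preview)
Your proposal is correct and follows essentially the same approach as the paper: triangle inequality against a (near-)optimal witness for $\caldist$ and $\lowercaldist$, and an add-and-subtract decomposition of $f(p_t)(x_t-p_t)-f(\tilde p_t)(x_t-\tilde p_t)$ for $\smCE$. The only cosmetic differences are that the paper works directly with the infimum (avoiding the explicit $\eta$ slack) and inserts the intermediate term $f(p_t)(x_t-\tilde p_t)$ rather than $f(\tilde p_t)(x_t-p_t)$, which yields the same $2|p_t-\tilde p_t|$ bound.
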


\begin{proof}
    By definition,
    \[
        \caldist(x, p)
    =   \min_{q \in \Cal(x)}\|p - q\|_1
    \le \min_{q \in \Cal(x)}\left(\|p - \tilde p\|_1 + \|\tilde p - q\|_1\right)
    =   \caldist(x, \tilde p) + \|p - \tilde p\|_1.
    \]
    By symmetry, we have $\caldist(x, \tilde p) \le \caldist(x, p) + \|p - \tilde p\|_1$. This proves the first inequality.
    
    The proof for $\lowercaldist$ is analogous:
    \begin{align*}
        \lowercaldist(x, p)
    &=  \inf_{\D \in \lowerCal(x)}\sum_{t=1}^{T}\Ex{q_t \sim \D_t}{|p_t - q_t|}\\
    &\le\inf_{\D \in \lowerCal(x)}\left[\sum_{t=1}^{T}\Ex{q_t \sim \D_t}{|p_t - \tilde p_t| + |\tilde p_t - q_t|}\right]\\
    &=  \inf_{\D \in \lowerCal(x)}\left[\sum_{t=1}^{T}\Ex{q_t \sim \D_t}{|\tilde p_t - q_t|}\right] + \|p - \tilde p\|_1\\
    &=  \lowercaldist(x, \tilde p) + \|p - \tilde p\|_1.
    \end{align*}
    Again, we also have $\lowercaldist(x, \tilde p) \le \lowercaldist(x, p) + \|p - \tilde p\|_1$. This proves the second inequality.
    
    Let $f$ be a $1$-Lipschitz function from $[0, 1]$ to $[-1, 1]$. Note that for any $p, q \in [0, 1]$ and $x \in \{0, 1\}$, we have
    \begin{align*}
        &~\left|f(p)\cdot(x - p) - f(q)\cdot(x - q)\right|\\
    \le &~\left|f(p)\cdot(x - p) - f(p)\cdot(x - q)\right| + \left|f(p)\cdot(x - q) - f(q)\cdot(x - q)\right|\\
    =   &~|f(p)|\cdot|p - q| + |f(p) - f(q)|\cdot|x - q|\\
    \le &~2|p-q|.
    \end{align*}
    The last step follows from $|f(p)| \le 1$, $|f(p) - f(q)| \le |p - q|$, and $|x - q| \le 1$.

    Thus, by definition of the smooth calibration error,
    \begin{align*}
        \smCE(x, p)
    &=  \sup_{f \in \F}\sum_{t=1}^{T}f(p_t)\cdot(x_t - p_t)\\
    &\le\sup_{f \in \F}\sum_{t=1}^{T}\left[f(\tilde p_t)\cdot(x_t - \tilde p_t) + 2|p_t - \tilde p_t|\right]\\
    &=  \sup_{f \in \F}\sum_{t=1}^{T}f(\tilde p_t)\cdot(x_t - \tilde p_t) + 2\|p - \tilde p\|_1\\
    &=  \smCE(x, \tilde p) + 2\|p - \tilde p\|_1.
    \end{align*}
    By symmetry, we also have $\smCE(x, \tilde p) \le \smCE(x, p) + 2\|p - \tilde p\|_1$. This completes the proof.
\end{proof}

\section{Proofs for Section~\ref{sec:approximation}}\label{sec:approximation-proofs}
\subsection{Failure of Na\"ive Consolidation}
In the context of Lemma~\ref{lemma:sparse-destination}, we give a concrete example in which the straightforward way of merging the transportation fails to keep the cost low. This explains why the proof of Lemma~\ref{lemma:sparse-destination} involves a complicated consolidation strategy.

Let $T = 2k$. The outcomes $x = (1, 0, \ldots, 0, 1, 1, \ldots, 1, 0)$ consist of $1$ followed by $k - 1$ copies of zeros, $k - 1$ copies of ones and a single zero. The predictions $p = (\eps, \eps, \ldots, \eps, 1 - \eps, 1 - \eps, \ldots, 1 - \eps)$ contain $k$ copies of $\eps$ followed by $k$ copies of $1 - \eps$, where $\eps = 1/(2k)$. Let $\D_1, \ldots, \D_k$ be degenerate distributions over $\{1/k\}$ and $\D_{k+1}, \ldots, \D_{2k}$ be degenerate distributions over $\{1 - 1/k\}$. Clearly, we have $\D \in \lowerCal(x)$. Furthermore, the cost of $\D$ is given by
\[
    \sum_{t=1}^{T}\Ex{q_t \sim \D_t}{|p_t - q_t|} = T \cdot \frac{1}{2k} = 1.
\]

With the notation in Section~\ref{sec:approx-sparse}, we have $s_1 = \eps$, $s_2 = 1 - \eps$, and all the transportation (specified by $\D$) are into the interval $[s_1, s_2]$. However, if we consolidate all these transportation, we would end up with a new destination of $1/2$, and the cost would surge to $\Omega(T)$.

\subsection{Technical Lemmas}
We state and prove all the technical lemmas used in Section~\ref{sec:approximation} here.
\begin{lemma}\label{lemma:two-point-smCE}
    For any $x, y \in [0, 1]$ and $\Delta_x, \Delta_y \in \R$, there exists a $1$-Lipschitz function $f:[0,1]\to[-1,1]$ such that
    \[
        f(x)\cdot\Delta_x + f(y)\cdot\Delta_y
    =   \begin{cases}
        |\Delta_x| + |\Delta_y|, & \Delta_x\Delta_y \ge 0,\\
        |\Delta_x + \Delta_y| + |x - y|\cdot\min\{|\Delta_x|, |\Delta_y|\}, & \Delta_x\Delta_y < 0.
    \end{cases}
    \]
\end{lemma}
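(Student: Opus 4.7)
The plan is to construct $f$ explicitly, splitting into two cases based on whether $\Delta_x$ and $\Delta_y$ have the same sign.

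In the same-sign case ($\Delta_x\Delta_y\ge 0$), a constant function suffices: take $f\equiv 1$ if $\Delta_x+\Delta_y\ge 0$, and $f\equiv -1$ otherwise. This is trivially $1$-Lipschitz and $[-1,1]$-valued, and $f(x)\Delta_x+f(y)\Delta_y$ collapses to $|\Delta_x+\Delta_y|=|\Delta_x|+|\Delta_y|$ since both $\Delta_x$ and $\Delta_y$ share the sign of $f$.

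The interesting case is $\Delta_x\Delta_y<0$. Since the claimed identity is symmetric in the pair $(x,\Delta_x)\leftrightarrow(y,\Delta_y)$, I may assume $|\Delta_x|\ge|\Delta_y|$ without loss of generality. Let $\sigma\coloneqq\sgn(\Delta_x)\in\{\pm 1\}$ and define the tent function $f(z)\coloneqq\sigma\cdot(1-|z-x|)$. For $z\in[0,1]$ we have $|z-x|\in[0,1]$, so $f(z)\in[-1,1]$, and $f$ is $1$-Lipschitz because $z\mapsto|z-x|$ is. Then $f(x)=\sigma$ gives $f(x)\Delta_x=|\Delta_x|$, and since $\sigma\Delta_y=-|\Delta_y|$ by the opposite-sign hypothesis,
\[
    f(y)\Delta_y=\sigma(1-|x-y|)\Delta_y=-|\Delta_y|+|x-y|\cdot|\Delta_y|.
\]
Summing yields $(|\Delta_x|-|\Delta_y|)+|x-y|\cdot|\Delta_y|$; under $|\Delta_x|\ge|\Delta_y|$ the first parenthesis equals $|\Delta_x+\Delta_y|$, and $|\Delta_y|=\min\{|\Delta_x|,|\Delta_y|\}$, matching the claimed value.

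The construction is entirely elementary. The only things that need a moment's care are the bound $|f(z)|\le 1$ on all of $[0,1]$ (which follows from $|z-x|\le 1$) and the validity of the WLOG swap (which holds because both the hypothesis $\Delta_x\Delta_y<0$ and the target expression are symmetric in the two pairs). I do not anticipate any substantive obstacle; the main conceptual point is simply to realize that, in the opposite-sign subcase, forcing the $1$-Lipschitz constraint to be tight on the segment between $x$ and $y$---while pinning $f$ to $\pm 1$ at the endpoint with larger $|\Delta|$---is exactly what gives rise to the $|x-y|\cdot\min\{|\Delta_x|,|\Delta_y|\}$ correction.
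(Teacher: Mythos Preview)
Your proposal is correct and essentially identical to the paper's proof: the same-sign case uses a constant function $f\equiv\pm1$, and the opposite-sign case uses the tent function $f(z)=\sgn(\Delta_x)\cdot(1-|z-x|)$ centered at the point with larger $|\Delta|$. The only cosmetic difference is that you invoke a WLOG symmetry to reduce to $|\Delta_x|\ge|\Delta_y|$, whereas the paper writes out both subcases explicitly.
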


\begin{proof}
    First, suppose that $\Delta_x\Delta_y \ge 0$, in which case there exists $s \in \{\pm 1\}$ such that $s\cdot\Delta_x = |\Delta_x|$ and $s\cdot\Delta_y = |\Delta_y|$. Then, for the constant function $f(v) \coloneqq s$, we have
    \[
        f(x)\cdot\Delta_x + f(y)\cdot\Delta_y
    =   |\Delta_x| + |\Delta_y|.
    \]

    Then, suppose that $\Delta_x\Delta_y < 0$ and $|\Delta_x| \ge |\Delta_y|$. We take $f(v) \coloneqq \sgn(\Delta_x)\cdot(1 - |v - x|)$, which is clearly $1$-Lipschiz and bounded between $-1$ and $+1$. This gives
    \begin{align*}
        f(x)\cdot\Delta_x + f(y)\cdot\Delta_y
    &=   \sgn(\Delta_x)\cdot \Delta_x + \sgn(\Delta_x)\cdot(1 - |x - y|)\cdot\Delta_y\\
    &=  |\Delta_x| - (1 - |x - y|)\cdot|\Delta_y|\\
    &=  (|\Delta_x| - |\Delta_y|) + |x - y|\cdot|\Delta_y|\\
    &=  |\Delta_x + \Delta_y| + |x - y|\cdot\min\{|\Delta_x|, |\Delta_y|\}.
    \end{align*}
    Similarly, when $\Delta_x\Delta_y < 0$ and $|\Delta_x| < |\Delta_y|$, taking $f(v) \coloneqq \sgn(\Delta_y)\cdot(1 - |v - y|)$ also gives
    \begin{align*}
        f(x)\cdot\Delta_x + f(y)\cdot\Delta_y
    &=   \sgn(\Delta_y)\cdot \Delta_y + \sgn(\Delta_y)\cdot(1 - |x - y|)\cdot\Delta_x\\
    &=  |\Delta_y| - (1 - |x - y|)\cdot|\Delta_x|\\
    &=  (|\Delta_y| - |\Delta_x|) + |x - y|\cdot|\Delta_x|\\
    &=  |\Delta_x + \Delta_y| + |x - y|\cdot\min\{|\Delta_x|, |\Delta_y|\}.
    \end{align*}
\end{proof}

\begin{lemma}\label{lemma:technical-ineq-1}
    For any $0 \le \alpha \le \beta \le 1$ and $p \in [0, 1]$, we have
    \[
        p\cdot|p - \alpha| + (1 - p)\cdot|p - \beta|
    \le 2 \cdot \left[\left|p\cdot(1-\alpha) - (1-p)\cdot \beta\right| + (\beta - \alpha)\cdot\min\{p\cdot(1-\alpha), (1-p)\cdot \beta\}\right].
    \]
\end{lemma}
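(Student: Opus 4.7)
The plan is to prove the inequality by a case analysis on where $p$ lies relative to $\alpha$ and $\beta$. Throughout, I abbreviate $A \coloneqq p(1-\alpha)$ and $B \coloneqq (1-p)\beta$, so that the right-hand side becomes $2|A - B| + 2(\beta - \alpha)\min\{A, B\}$.

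First I would dispatch the two outer cases $p \le \alpha$ and $p \ge \beta$. When $p \le \alpha$, the absolute values open as $|p - \alpha| = \alpha - p$ and $|p - \beta| = \beta - p$, and a direct expansion gives $p|p - \alpha| + (1-p)|p - \beta| = \beta - p(1 + \beta - \alpha)$, which is exactly $B - A$. A check of endpoints shows $A \le B$ throughout this range, so the left-hand side equals $|A - B|$ and the inequality follows from LHS $= |A - B| \le 2|A - B| \le$ RHS. The case $p \ge \beta$ is symmetric and yields LHS $= A - B = |A - B|$ in the same manner.

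The main case is $\alpha \le p \le \beta$. Here the identity
\[
    p(p-\alpha) + (1-p)(\beta - p) = A + B - 2p(1-p),
\]
obtained by substituting $p\alpha = p - A$ and $p\beta = \beta - B$, together with $A + B = |A - B| + 2\min\{A, B\}$, reduces the claim to
\[
    2(1 - \beta + \alpha)\min\{A, B\} - 2p(1-p) \le |A - B|.
\]
I would then split this case according to the sign of $A - B$, the transition point being $p_{\star} \coloneqq \beta/(1 + \beta - \alpha)$, which one easily checks lies in $[\alpha, \beta]$.

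The final step is to observe that on each of the two subintervals $[\alpha, p_{\star}]$ (where $A \le B$) and $[p_{\star}, \beta]$ (where $A \ge B$), the function $\mathrm{RHS}(p) - \mathrm{LHS}(p)$ is a quadratic in $p$ with leading coefficient $-2$: the left-hand side contributes $+2p^2$, while the simplified right-hand side is affine in $p$ once the sign of $A - B$ is fixed. Hence $\mathrm{RHS} - \mathrm{LHS}$ is concave in $p$ on each subinterval, and it suffices to verify nonnegativity at the three boundary points $p \in \{\alpha, p_{\star}, \beta\}$. At $p = \alpha$ the difference simplifies to $(\beta - \alpha)(1 - \alpha)(1 + 2\alpha)$; at $p = \beta$ it becomes $\beta(\beta - \alpha)(3 - 2\beta)$; and at $p_{\star}$, exploiting the relation $A = B$, it reduces to $\frac{2\beta(1-\alpha)(\beta - \alpha)^2}{(1 + \beta - \alpha)^2}$. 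Each is manifestly nonnegative for $0 \le \alpha \le \beta \le 1$. The main obstacle I anticipate is keeping the algebra at $p_{\star}$ tractable; using the relation $A = B$ directly, rather than expanding polynomials in $p$, keeps that computation clean.
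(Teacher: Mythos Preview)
Your proof is correct. The two outer cases ($p\le\alpha$ and $p\ge\beta$) are handled exactly as in the paper, by recognising that the left-hand side collapses to $B-A$ or $A-B$ and that this already equals $|A-B|$.

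In the middle case $\alpha\le p\le\beta$ your argument diverges from the paper's. The paper rewrites the left-hand side as $|A-B|+2\min\{p(p-\alpha),(1-p)(\beta-p)\}$ and then, after dropping a nonnegative $|A-B|$ from the right, reduces the claim to the clean auxiliary inequality
\[
\min\{p(p-\alpha),(1-p)(\beta-p)\}\le(\beta-\alpha)\min\{p(1-\alpha),(1-p)\beta\},
\]
which it proves by a further case split on which term attains the first minimum. You instead observe that on each of the two subintervals (determined by the sign of $A-B$) the difference $\mathrm{RHS}-\mathrm{LHS}$ is a concave quadratic in $p$, and you verify nonnegativity at the three breakpoints $\alpha$, $p_\star=\beta/(1+\beta-\alpha)$, and $\beta$. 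Your endpoint computations are right (including the $p_\star$ one, where using $A=B$ directly avoids a mess). The concavity argument is a more mechanical route that trades the paper's structural reduction for three explicit polynomial evaluations; the paper's route isolates a tidier intermediate inequality but requires an additional monotonicity observation (that the two minima are attained on the same side). Either approach is fine here.
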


\begin{proof}
    We prove the inequality for the following three cases separately.
    
    \paragraph{Case 1: $p \le \alpha$.} In this case, the left-hand side of the inequality gets reduced to
        \[
            p\cdot(\alpha - p) + (1 - p)\cdot(\beta - p)
        =   (1-p)\cdot\beta - p \cdot (1 - \alpha)
        \le |p\cdot(1 - \alpha) - (1 - p)\cdot\beta|,
        \]
        which is clearly upper bounded by the right-hand side.
    
    \paragraph{Case 2: $p \ge \beta$.} Similarly, we can simplify the left-hand side to
        \[
            p\cdot(p - \alpha) + (1 - p)\cdot(p - \beta)
        =   p\cdot(1 - \alpha) - (1 - p)\cdot\beta
        \le |p\cdot(1 - \alpha) - (1 - p)\cdot\beta|,
        \]
        which is, again, upper bounded by the right-hand side.
    
    \paragraph{Case 3: $p \in (\alpha, \beta)$.} In this case, using the identities $x + y = |x - y| + 2\min\{x, y\}$ and 
        \[
            p\cdot(1-\alpha) - p\cdot(p - \alpha) = (1 - p)\cdot\beta - (1 - p)\cdot(\beta - p) = p(1-p),
        \]
        we can write the left-hand side as:
        \begin{align*}
            &~|p\cdot(p - \alpha) - (1 - p)\cdot(\beta - p)| + 2\cdot\min\{p\cdot(p - \alpha), (1 - p)\cdot(\beta - p)\}\\
        =   &~|p\cdot(1 - \alpha) - (1 - p)\cdot\beta| + 2\cdot\min\{p\cdot(p - \alpha), (1 - p)\cdot(\beta - p)\}.
        \end{align*}
        Thus, to prove the lemma, it remains to show that:
        \begin{equation}\label{eq:technical-ineq-1-case-3}
            \min\{p\cdot(p - \alpha), (1 - p)\cdot(\beta - p)\}
        \le (\beta - \alpha)\cdot\min\{p\cdot(1 - \alpha), (1 - p)\cdot \beta\}.
        \end{equation}

        If $p\cdot(p-\alpha) \le (1 - p)\cdot(\beta - p)$, the minimum on the right-hand side of \eqref{eq:technical-ineq-1-case-3} is also achieved by the first term. Then, it is sufficient to prove that $p\cdot(p - \alpha)
        \le (\beta - \alpha)\cdot p\cdot(1 - \alpha)$,
        which is equivalent to
        \[
            p \le \beta - \alpha\beta + \alpha^2.
        \]
        Note that the assumption $p\cdot (p - \alpha) \le (1 - p)\cdot(\beta - p)$ is equivalent to
        \[
            p + p\cdot(\beta-\alpha) \le \beta.
        \]
        Since $\beta - \alpha \ge 0$ and $p \ge \alpha$, we have
        \[
            \alpha\cdot(\beta - \alpha)
        \le p\cdot(\beta - \alpha).
        \]
        Adding the two inequalities above together gives the desired inequality $p \le \beta - \alpha\beta + \alpha^2$.

        The remaining case that $p\cdot(p-\alpha) > (1 - p)\cdot(\beta - p)$ can be dealt with in a similar way. In this case, Inequality~\eqref{eq:technical-ineq-1-case-3} is equivalent to
        \[
            \beta - p \le \beta^2 - \alpha\beta.
        \]
        The assumption $p\cdot(p-\alpha) > (1 - p)\cdot(\beta - p)$ implies
        \[
            \beta - p
        \le p\cdot(\beta - \alpha).
        \]
        Applying $\beta - \alpha \ge 0$ and $p \le \beta$ to the above, we get
        \[
            \beta - p \le \beta^2 - \alpha\beta
        \]
        as desired.
\end{proof}

\begin{lemma}\label{lemma:technical-ineq-2}
    For any $0 \le \alpha \le \beta \le 1$ and $p \in [0, 1]$, we have
    \begin{align*}
        &~\min\left\{(1 - p)\cdot|p - \alpha| + p\cdot|p - \beta|, (1 - p) \cdot \alpha + p \cdot (1 - \beta)\right\}\\
    \le &~10 \cdot \left[\left|(1 - p)\cdot \alpha - p\cdot (1 - \beta)\right| + (\beta - \alpha)\cdot\min\{(1 - p)\cdot \alpha, p\cdot (1 - \beta)\}\right].
    \end{align*}
\end{lemma}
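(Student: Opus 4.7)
The plan is to split on whether $p$ lies in $[\alpha, \beta]$. Write $A \coloneqq (1-p)\alpha$ and $B \coloneqq p(1-\beta)$, so that the right-hand side of the lemma is $10\bigl[|A-B| + (\beta-\alpha)\min(A, B)\bigr]$. When $p \notin [\alpha, \beta]$, I would expand $(1-p)|p-\alpha| + p|p-\beta|$ directly and observe it equals $A - B$ if $p \le \alpha$ and $B - A$ if $p \ge \beta$; both expressions are visibly non-negative sums of non-negative terms, so each equals $|A - B|$. The first argument of the outer $\min$ on the left is therefore $|A - B|$, which is already dominated by the right-hand side (with constant $1$, not even $10$).

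The main work is the case $p \in [\alpha, \beta]$. Here the inequality is invariant under the involution $(p, \alpha, \beta) \mapsto (1-p, 1-\beta, 1-\alpha)$, which swaps $A$ and $B$, so I would assume $A \le B$. I would then reparametrize by setting $C \coloneqq p(1-p)$, $a \coloneqq (p-\alpha)/p$, $b \coloneqq (\beta-p)/(1-p)$, which lie in $[0,1]$ with $a \ge b$ equivalent to $A \le B$. A short calculation gives Option~$1 = C(a+b)$, Option~$2 = C(2-a-b)$, $A = C(1-a)$, $B = C(1-b)$, $|A-B| = C(a-b)$, and $\beta - \alpha = ap + b(1-p)$. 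After dividing the target by $C$ (the $C=0$ case is trivial), the lemma reduces to
\[
    \min(a+b,\; 2-a-b) \;\le\; 10\bigl[(a-b) + (ap + b(1-p))(1-a)\bigr]
\]
for $0 \le b \le a \le 1$ and $p \in [0,1]$. The right-hand side is linear in $p$ with non-negative slope $10(1-a)(a-b)$, so it suffices to verify the inequality at $p=0$, where it simplifies to $\min(a+b,\,2-a-b) \le 10\,a(1-b)$.

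Finally, I would dispatch this clean two-variable inequality by splitting on whether $a+b \le 1$. In the case $a+b \le 1$, the target rearranges to $b(1+10a) \le 9a$; using $b \le a$ this is implied by $a \le 4/5$, and using $b \le 1-a$ it is implied by $a \ge 1/\sqrt{10}$, and these two ranges together cover $[0,1]$. In the case $a+b > 1$, the assumption $a \ge b$ forces $a > 1/2$, and after rearrangement the inequality reads $2 - 11a - b + 10ab \le 0$; the left side is linear in $b$ with positive slope $10a-1$, so substituting the largest admissible value $b = a$ yields $2(5a-1)(a-1) \le 0$, which holds throughout $a \in [1/2,1]$. The main obstacle is identifying the reparametrization above: directly in $(\alpha,\beta,p)$, the interplay between the sign of $A-B$, the location of $p$, and the magnitudes of $\alpha$ and $1-\beta$ makes a direct case analysis unwieldy, whereas after the substitution the $p$-dependence becomes monotone and the remaining inequality is elementary.
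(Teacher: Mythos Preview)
Your proof is correct and takes a genuinely different route from the paper's. Both arguments agree on the easy cases $p\le\alpha$ and $p\ge\beta$, where Option~1 collapses to $|A-B|$. For the interesting case $p\in(\alpha,\beta)$, the paper works directly in the original variables and branches on whether $\beta-\alpha\ge 1/5$ (where Option~2 $=A+B=|A-B|+2\min(A,B)$ is immediately controlled), and otherwise introduces $x=p-\alpha$, $y=\beta-p$ and splits again on whether $(1-p)x$ and $py$ are within a factor $2/3$ of each other; the final sub-case requires a further argument that $\max\{(1-p)x,py\}\le \tfrac{3}{10}p(1-p)$ and some algebra. Your reparametrization $a=(p-\alpha)/p$, $b=(\beta-p)/(1-p)$, $C=p(1-p)$ is cleaner: it simultaneously linearizes Option~1, Option~2, $A$, $B$, and $|A-B|$ in $C$, makes the $p$-dependence of the right-hand side monotone, and reduces the whole lemma to the elementary two-variable inequality $\min(a+b,2-a-b)\le 10\,a(1-b)$ on $0\le b\le a\le 1$. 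The paper's approach is more direct but requires finding the right thresholds $1/5$ and $2/3$ by hand; yours hides that tuning inside a single change of variables and then needs only the two short cases $a+b\le 1$ and $a+b>1$.
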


\begin{proof}
    As in the proof of Lemma~\ref{lemma:technical-ineq-1}, we consider the following three cases.

    \paragraph{Case 1: $p \le \alpha$.} The left-hand side is upper bounded by the first term in the minimum, which, in this case, is given by
    \[
        (1-p)\cdot(\alpha-p) + p\cdot(\beta - p)
    =   (1-p)\cdot\alpha - p\cdot(1 - \beta)
    \le |(1-p)\cdot\alpha - p\cdot(1 - \beta)|.
    \]
    Clearly, this is upper bounded by the right-hand side.

    \paragraph{Case 2: $p \ge \beta$.} Similarly, we can upper bound the left-hand side by
    \[
        (1-p)\cdot(p-\alpha) + p\cdot(p - \beta)
    =   -(1-p)\cdot\alpha + p\cdot(1 - \beta)
    \le |(1-p)\cdot\alpha - p\cdot(1 - \beta)|,
    \]
    which, in turn, is at most the right-hand side.

    \paragraph{Case 3: $p \in (\alpha, \beta)$.} We first consider the case that $\beta - \alpha$ is large. Concretely, suppose that $\beta - \alpha \ge 1/5$. If so, by the identity $x + y = |x-y| + 2\min\{x,y\}$, we have
    \begin{align*}
        (1 - p)\cdot\alpha + p \cdot(1-\beta)
    &=  |(1 - p)\cdot\alpha - p \cdot(1-\beta)| + 2\min\{(1 - p)\cdot\alpha, p \cdot(1-\beta)\}\\
    &\le10|(1 - p)\cdot\alpha - p \cdot(1-\beta)| + 10(\beta - \alpha)\min\{(1 - p)\cdot\alpha, p \cdot(1-\beta)\},
    \end{align*}
    which implies the desired inequality.

    We then focus on the case that $\beta - \alpha < 1/5$. We write $x \coloneqq p - \alpha > 0$ and $y \coloneqq \beta - p > 0$. Note that
    \[
        (1-p)\cdot\alpha - p\cdot(1 - \beta)
    =   p\cdot(\beta - p) - (1 - p)\cdot(p - \alpha)
    =   py - (1-p)x.
    \]
    We claim that if $(1-p)x$ and $py$ are not close (up to a multiplicative factor), we are done. Formally, suppose that
    \[
        \frac{\min\{(1-p)x, py\}}{\max\{(1-p)x, py\}} \le \frac{2}{3}.
    \]
    Then, we may upper bound the left-hand side of the desired inequality by
    \[
        (1-p)x + py
    =   \max\{(1-p)x, py\} + \min\{(1-p)x, py\}
    \le \frac{5}{3}\max\{(1-p)x, py\}.
    \]
    On the other hand, the right-hand side is lower bounded by its first term, namely,
    \begin{align*}
        10|(1-p)\cdot \alpha - p\cdot(1 - \beta)|
    &=  10|py - (1-p)x|\\
    &=  10\max\{(1-p)x, py\} - 10\min\{(1-p)x, py\}\\
    &\ge\left(10 - 10\cdot\frac{2}{3}\right)\cdot \max\{(1-p)x, py\}\\
    &\ge\frac{5}{3}\cdot \max\{(1-p)x, py\}.
    \end{align*}
    This proves the inequality when $\min\{(1-p)x, py\} \le \frac{2}{3}\max\{(1-p)x, py\}$ holds.

    Finally, we deal with the case that both $\beta - \alpha < 1/5$ and $\min\{(1-p)x, py\} > \frac{2}{3}\max\{(1-p)x, py\}$ hold. Note that the second condition implies $\frac{(1-p)x}{py} > \frac{2}{3}$ and $\frac{py}{(1-p)x} > \frac{2}{3}$. Again, we simplify and relax the desired inequality into
    \begin{align*}
        (1-p)x + py
    &\le10(x+y)\min\{(1-p)\cdot\alpha, p\cdot (1 - \beta)\}\\
    &=  10(x+y)\cdot[p(1-p) - \max\{(1-p)x, py\}].
    \end{align*}
    We argue that both $(1-p)x$ and $py$ are at most $\frac{3}{10}p(1-p)$. Otherwise, suppose that $(1-p)x > \frac{3}{10}p(1-p)$. This implies $x > \frac{3}{10}p > p/5$. Furthermore, we have
    \[
        py
    >   \frac{2}{3}(1-p)x
    >   \frac{2}{3}\cdot\frac{3}{10}p(1-p),
    \]
    which implies $y > (1-p)/5$. We then obtain $x + y > p/5 + (1-p)/5 = 1/5$, which contradicts $x + y = \beta - \alpha < 1/5$. An analogous argument also rules out the possibility that $py > \frac{3}{10}p(1-p)$.

    Therefore, it suffices to prove that
    \[
        (1-p)x + py
    \le 10(x+y)\cdot\left[p(1-p) - \frac{3}{10}p(1-p)\right]
    =   7(x+y)\cdot p(1-p),
    \]
    or, equivalently,
    \[
        (1-p)\cdot\frac{x}{x+y} + p\cdot\frac{y}{x+y}
    \le 7p(1-p).
    \]
    The first term on the left-hand side above can be upper bounded as follows:
    \[
        (1-p)\cdot\frac{x}{x+y}
    =   (1-p)\cdot\frac{1}{1 + y/x}
    \le (1-p)\cdot\frac{1}{1 + \frac{2\cdot(1-p)}{3p}}
    =   \frac{p(1-p)}{\frac{2}{3} + \frac{p}{3}}
    \le \frac{3}{2}\cdot p(1-p),
    \]
    where the second step applies $y/x > \frac{2\cdot(1-p)}{3p}$, which follows from $\frac{py}{(1-p)x} > \frac{2}{3}$.
    Similarly, we have
    \[
        p\cdot\frac{y}{x+y}
    =   p\cdot\frac{1}{x/y+1}
    \le p\cdot\frac{1}{\frac{2p}{3\cdot(1-p)} + 1}
    =   \frac{p(1-p)}{1-\frac{1}{3}p}
    \le \frac{3}{2}\cdot p(1-p).
    \]
    Adding the two inequalities above gives
    \[
        (1-p)\cdot\frac{x}{x+y} + p\cdot\frac{y}{x+y} \le 3p(1-p) \le 7p\cdot(1-p),
    \]
    which implies the desired inequality for the last case, and thus completes the proof.
\end{proof}

\section{Proof for Section~\ref{sec:lower}}\label{sec:lower-proofs}
We prove Lemma~\ref{lemma:binomial-anti-concentration}, which is restated below.

\vspace{6pt}

\noindent \textbf{Lemma~\ref{lemma:binomial-anti-concentration}.}{~\it
    For all sufficiently large integer $n$,
    \[
        \pr{X \sim \Binomial(n, 1/2)}{|X - n / 2| \ge \sqrt{n} / 10}
    \ge \frac{3}{4}.
    \]
}

\begin{proof}
    The mode of $\Binomial(n, 1/2)$ is $\lfloor n / 2\rfloor$. When $n = 2k$ is even, it holds for every $j \in \{0, 1, \ldots, n\}$ that
    \begin{align*}
        \pr{X \sim \Binomial(2k, 1/2)}{X = j}
    &\le\pr{X \sim \Binomial(2k, 1/2)}{X = k}\\
    &=  2^{-2k}\frac{(2k)!}{(k!)^2}\\
    &=  (1 + o_n(1))\cdot 2^{-2k}\cdot \frac{\sqrt{2\pi\cdot 2k}\cdot(2k/e)^{2k}}{2\pi k\cdot (k/e)^{2k}}\\
    &=  (1 + o_n(1))\cdot \frac{\sqrt{2/\pi}}{\sqrt{n}}.
    \end{align*}
    The third step applies Stirling's approximation $\frac{n!}{\sqrt{2\pi n}(n/e)^n} = 1 + o_n(1)$. Since $\sqrt{2/\pi} < 1$, for sufficiently large $n$ we have an upper bound of $1/\sqrt{n}$.
    Similarly, when $n = 2k + 1$ is odd, we have
    \begin{align*}
        \pr{X \sim \Binomial(n, 1/2)}{X = j}
    &\le2^{-(2k+1)}\cdot\frac{(2k+1)!}{k!(k+1)!}\\
    &=  (1 + o_n(1))\cdot 2^{-(2k+1)}\cdot\frac{\sqrt{2\pi(2k+1)}\cdot\left(\frac{2k+1}{e}\right)^{2k+1}}{\sqrt{2\pi k}\cdot\sqrt{2\pi(k+1)}\cdot (k/e)^k\left(\frac{k+1}{e}\right)^{k+1}}\\
    &=  (1 + o_n(1))\cdot \frac{1}{\sqrt{\pi k}}\cdot\left(1 + \frac{1}{2k}\right)^k\cdot\left(1 - \frac{1}{2(k+1)}\right)^{k+1}\\
    &=  (1 + o_n(1))\cdot \frac{1}{\sqrt{\pi k}}\cdot(e^{1/2} + o_n(1))\cdot(e^{-1/2} + o_n(1))\\
    &=  (1 + o_n(1))\cdot \frac{\sqrt{2/\pi}}{\sqrt{n}}.
    \end{align*}

    Therefore,
    \begin{align*}
        &~\pr{X \sim \Binomial(n, 1/2)}{|X - n / 2| < \sqrt{n} / 10}\\
    =   &~\sum_{j=0}^{n}\pr{X \sim \Binomial(n, 1/2)}{X = j}\cdot \1{|j - n/2| < \sqrt{n} / 10}\\
    \le &~\frac{1}{\sqrt{n}}\cdot \left(2\cdot\frac{\sqrt{n}}{10} + 1\right)
    \le \frac{1}{4},
    \end{align*}
    where the last step holds for all sufficiently large $n$.
\end{proof}

\end{document}